\documentclass{article}

% if you need to pass options to natbib, use, e.g.:
\PassOptionsToPackage{numbers,sort&compress}{natbib}
% before loading neurips_2018

% Recommended, but optional, packages for figures and better typesetting:
\usepackage{microtype}
\usepackage{graphicx}
\usepackage{booktabs} % for professional tables

% hyperref makes hyperlinks in the resulting PDF.
% If your build breaks (sometimes temporarily if a hyperlink spans a page)
% please comment out the following usepackage line and replace
% \usepackage{icml2018} with \usepackage[nohyperref]{icml2018} above.
\usepackage[colorlinks,citecolor=blue,urlcolor=black,linkcolor=blue,pdfborder={0 0 0}]{hyperref}

% Attempt to make hyperref and algorithmic work together better:

\PassOptionsToPackage{numbers}{natbib}
\usepackage[preprint]{neurips_2020}

\usepackage{array}
\usepackage{xr}
\usepackage{mathtools}
\usepackage{amsmath,amssymb,amsthm}
\usepackage{url}
\usepackage{comment}
\usepackage{graphicx, floatrow}
\usepackage{enumitem}
\usepackage{wrapfig}
\usepackage{lipsum}

\usepackage{framed}
\usepackage{subcaption}
\usepackage{booktabs}
%\usepackage[style=numeric-comp,
%       minnames=1,
%       maxnames=99,
%       maxcitenames=3,
%       natbib=true,
%       firstinits=true,
%       backend=bibtex]{biblatex}

\usepackage[capitalize]{cleveref}  
\usepackage{datetime}

\crefname{nlem}{Lemma}{Lemmas}
\crefname{nprop}{Proposition}{Propositions}
\crefname{ncor}{Corollary}{Corollaries}
\crefname{nthm}{Theorem}{Theorems}
\crefname{exa}{Example}{Examples}
\crefname{assumption}{Assumption}{Assumptions}
%\crefname{appendix}{App.}{App.}
\crefname{equation}{}{}

\RequirePackage[usenames,dvipsnames]{color}
\usepackage{mathrsfs}

\usepackage{autonum} 

\graphicspath{ {images/} }

\allowdisplaybreaks[3]

\usepackage{color,mathtools}
\usepackage{pifont}% http://ctan.org/pkg/pifont
\newcommand{\eqdef}{\mathbin{\stackrel{\rm def}{=}}}
% For citations
%\usepackage{natbib}
%\usepackage{cite}
% For algorithms

%\usepackage{algorithm}
\usepackage{algorithmic}
\usepackage[ruled]{algorithm2e}
\usepackage{multicol}
\usepackage{autonum}
\usepackage{enumitem}
\setlist[itemize]{leftmargin=3mm}
\setlist[enumerate]{leftmargin=3mm}

%----------------------------------------
% Standard macros
%----------------------------------------
\DeclareMathOperator*{\argmin}{arg\,min}
\DeclareMathOperator*{\argmax}{arg\,max}

\def\R{{\mathbb{R}}}

\def\bbE{{\mathbb{E}}}

%for comments

  % branched from Michael Forbes' macro document
  % September 2014
  \usepackage{nth}
  \usepackage{intcalc}

%repeating theorems
\newtheorem*{rep@theorem}{\rep@title}
\newcommand{\newreptheorem}[2]{%
\newenvironment{rep#1}[1]{%
 \def\rep@title{#2 \ref{##1}}%
 \begin{rep@theorem}}%
 {\end{rep@theorem}}}
\newreptheorem{lemma}{Lemma'}
\newreptheorem{definition}{Definition'}
\newreptheorem{proposition}{Proposition'}
\newreptheorem{theorem}{Theorem'}

%%--------------
\newtheorem{theorem}{Theorem}

\newtheorem{assumption}[theorem]{Assumption}

\newtheorem{definition}[theorem]{Definition}

\newtheorem{lemma}[theorem]{Lemma}

\newtheorem{remark}{Remark}
%\newtheorem{corollary}{Corollary}
%\newtheorem{remark}{Remark}
%-------------
\renewcommand{\text}[1]{{\mathrm{#1}}}
\renewcommand{\text}[1]{{\textnormal{#1}}}

%%--------------

%\newcommand{\algref}[1]{{Algorithm}~\ref{#1}}

%-------------------------------------------------------------------------------------------------------------------------------------------------------------------------------------------------------------------
% For leaving notes in a document
%-------------------------------------------------------------------------------------------------------------------------------------------------------------------------------------------------------------------

% Align environments that use textstyle instead of displaystyle
\newenvironment{talign*}
 {\csname align*\endcsname}
 {\endalign}
\newenvironment{talign}
 {\csname align\endcsname}
 {\endalign}

\def\balignst#1\ealignst{\begin{talign*}#1\end{talign*}}
\def\balignt#1\ealignt{\begin{talign}#1\end{talign}}

\allowdisplaybreaks
\allowdisplaybreaks[4]

\usepackage{geometry}                % See geometry.pdf to learn the layout options. There are lots.
%\usepackage[parfill]{parskip}    % Activate to begin paragraphs with an empty line rather than an indent
%\usepackage{subfigure}
%\usepackage{graphicx}
%\usepackage{amssymb}
%\usepackage{epstopdf}
%\usepackage{algorithm}
%\usepackage{algpseudocode}
%\usepackage{algorithmic}
%\usepackage{hyperref}
%\DeclareGraphicsRule{.tif}{png}{.png}{`convert #1 `dirname #1`/`basename #1 .tif`.png}

%%%\include{header}  
%\usepackage{bm} 
\include{defs}

\title{Model-specific Data Subsampling with Influence Functions}

% The \author macro works with any number of authors. There are two commands
% used to separate the names and addresses of multiple authors: \And and \AND.
%
% Using \And between authors leaves it to LaTeX to determine where to break the
% lines. Using \AND forces a line break at that point. So, if LaTeX puts 3 of 4
% authors names on the first line, and the last on the second line, try using
% \AND instead of \And before the third author name.

\author{%
  Anant Raj\\
  MPI for Intelligent Systems\\
 T\"ubingen, Germany\\
  \texttt{anant.raj@tuebingen.mpge.de} \\  
   \And
 Cameron Musco \\
  UMass Amherst\\
 Massachusetts, USA\\
  \texttt{cmusco@cs.umass.edu} \\
   \AND
Lester Mackey\\
  Microsoft Research\\
 New England, USA\\
  \texttt{lmackey@microsoft.com} \\
   \And
  Nicolo Fusi\\
  Microsoft Research\\
 New England, USA\\
  \texttt{fusi@microsoft.com} \\
}

\begin{document}

\maketitle

\begin{abstract}
Model selection requires repeatedly evaluating models on a given dataset and measuring their relative performances. In modern applications of machine learning, the models being considered are increasingly more expensive to evaluate and the datasets of interest are increasing in size. As a result, the process of model selection is time-consuming and computationally inefficient. In this work, we develop a model-specific data subsampling strategy that improves over random sampling whenever training points have varying influence. Specifically, we leverage influence functions to guide our selection strategy, proving theoretically and demonstrating empirically that our approach quickly selects high-quality models.

\end{abstract}

\section{Introduction}
\label{sec:intro}
Model selection and parameter tuning require repeatedly training different models with different parameter settings and evaluating their performance on a holdout set. 
Since repeated training on a large dataset is often prohibitively expensive, it is common to perform this search on a randomly chosen subset of the data and only refit the final model to the entire training set once the best model has been identified. 
Recently, a number of automated algorithms have been developed to facilitate this task. Two examples are Hyperband \citep{li2017hyperband} and BOHB \citep{falkner2018bohb}. They are both based on the same broad motivation: when exploring hyperparameter settings and selecting models sequentially, choices made early on in the process tend to be suboptimal because the optimization space is largely unexplored and model uncertainty is high. It is therefore sensible not to spent large amounts of the total computational budget fitting candidate models on the entire training set very early on; rather the budget is better spent fully evaluating only promising models. In many of such proposed methods, a set of candidate models is sampled at random and evaluated given a fixed computational budget. The computational budget can be controlled by only training models for a certain number of epochs or, more relevant for this work, only training them on a subset of the training set. A fraction of the worst models (in terms of validation accuracy) evaluated using a given subset are then discarded, and the remaining ones are evaluated on a larger subset.  Unfortunately, the quality of the judgments depends on the fidelity of the subsample, which may be low if simple random subsampling is used.

\paragraph{Our Contributions} \label{subsec:contributionss}

In this work, we propose a data subsampling strategy that, for a given model or hyperparameter setting, leverages influence functions \citep{hampel2011robust} to improve over the standard strategy of simple random sampling. Given a subset  of datapoints $z_1,\ldots, z_M$, the influence function
of a datapoint $z$ measures how adding $z$ to the subset with an infinitesimally small weight will affect a model trained on that subset. When $M$ is large enough, the influence permits us to faithfully and inexpensively estimate the performance of a model trained on $z_1,\ldots, z_M$ and $z$ without actually retraining. We use this approximation to greedily add points to our subset with the goal of minimizing a measure of the trained model's error -- such as its loss over a validation set. 
We then use these influential points to (i) rapidly estimate the test performance of the model trained on a much larger dataset, (ii) rapidly estimate the test performance of other models trained on a much larger dataset, and (iii) rapidly judge which of a collection of models will lead to the best test performance.

Our primary contributions are:

\begin{itemize}
\item We prove theoretically and show on three real classification and regression tasks that our greedy subsampling strategy more rapidly minimizes a smooth target functional of a trained model than random sampling whenever training points have sufficiently variable influence. 
This holds for selecting points one at a time or in minibatches of size $m$.
We target validation loss in our experiments, but the method is general and immediately applicable to other functionals. 
\item We show on real classification and regression tasks that our subsamples selected for one model also lead to more faithful approximations of other models' performances; this is especially valuable for models like boosted decision trees that do not admit readily computable influence functions. 
\item In an application to parameter selection problems, we demonstrate empirically that replacing random sampling with influence-based subset selection leads to the identification of more accurate configurations in less time.
\end{itemize}

\subsection{Related Work}
\label{subsec:rel_work}
Influence functions have long been studied in the field of robust statistics. For example, \citet{hampel2011robust} use influence functions to study the impact of data contamination on statistical procedures. In machine learning, influence functions have been used by \cite{christmann2007consistency} to reason about the stability of kernel regression, by \cite{koh2017understanding} to study perturbations of neural networks, and by \cite{giordano2018return} to obtain inexpensive approximations of resampling procedures like cross-validation and the bootstrap. \citet{ting2018optimal} consider single-shot sampling, as compared to our iterative approach, using influence functions, motivating their strategy  as minimizing the variance of an asymptotically linear estimator. Their algorithm requires computing influence functions with the true optimal parameter, which is unknown and so must be approximated. If this approximation is poor (e.g., if obtained from a small random sample) the influence approximations are also poor. In contrast, we update the influence estimate each time we select a new point, obtaining a more informative sample.
%\notate{Need a more detailed discussion of how our work differs from Ting and Brochu, as provided in our rebuttal}
%However, none of these work discussed here uses influence function for subset selection. 

Our aim is to use influence functions for principled subset selection. There is a vast literature on this topic. A common goal is to select a (weighted) subset of data points such that the value of some loss function on the full dataset is well approximated by the loss on the subset. In this way, the loss can be approximately minimized by just minimizing over the subset. In the literature, this is known as a \emph{coreset} guarantee. See \citet{bachem2017practical} for a survey, applications of the technique to clustering in \cite{har2004coresets, feldman2007ptas, lucic2017training}, and applications of the technique to logistic regression in  \cite{huggins2016coresets, campbell2017automated, munteanu2018coresets}.
Interestingly, in the cases of ordinary least squares regression, low-rank approximation, and $k$-means clustering, coresets can be obtained by subsampling data points using their \emph{statistical leverage scores} \cite{drineas2006sampling,mahoney2011randomized,clarkson2015sketching,cohen2015uniform,cohen2017input,munteanu2018coresets}. In many cases, these leverage scores are equivalent to or closely related to the influence functions of an appropriate loss function.

Our approach differs from work on coreset construction in a few ways. A coreset approximation guarantee ensures that the model  trained on the subsample has near-optimal loss over the full dataset. Our method will select points to try to minimize some function of the model trained on this subset. While this function may  be the loss over the full dataset, we allow more general choices: e.g., the function could be the loss over a validation set or some other metric capturing how well performance of a model trained on the subset predicts overall performance. Additionally, while coresets have been developed on a case-by-case basis for specific learning problems, our approach has the advantage of being immediately applicable to any sufficiently smooth $m$-estimation task.

%\Cam{Somehow I think \cite{wang2018optimal} is closely related to influence function sampling, but need to read closer.}

%\notate{We should discuss relationship to coreset literature}
%\begin{verbatim}
%* Here are a few relevant references in the coreset space:
%    * 1. Practical Coreset Constructions for Machine Learning https://arxiv.org/pdf/1703.06476.pdf  
%        * A 2017 overview of coreset applications
%        * (TODO: review coreset objectives used, ML models handled, and guarantees obtained; perhaps we should put summary in a document (latex, google doc?))
%    * 2. Training Mixture Models at Scale via Coresets https://arxiv.org/pdf/1703.08110.pdf
%    * 3. Smaller Coresets for k-Median and k-Means Clustering http://www.di.ens.fr/sierra/pdfs/small_coresets.pdf
%    * 4. Coresets for Scalable Bayesian Logistic Regression https://arxiv.org/abs/1605.06423
%    * 5. Coresets for Nonparametric Estimation ? the Case of DP-Means https://las.inf.ethz.ch/files/bachem15dpmeans-long.pdf
%    * Bayesian Coreset Construction via Greedy Iterative Geodesic Ascent https://arxiv.org/abs/1802.01737
%    * Automated Scalable Bayesian Inference via Hilbert Coresets https://arxiv.org/abs/1803.05554
%\end{verbatim}

\section{Background and Notation}

\subsection{The Subset Selection Problem}
Our goal is to develop a strategy for subsampling data points in order to minimize some cost function over the model trained on this subsample. For example, we may want to minimize loss on a validation set or the expected loss over the data generating distribution (the true risk). 
More formally, consider a set of training points $Z$, a parameter space $\Theta$, and a loss function $\ell: Z \times \Theta \rightarrow \R^+$.
For each distribution $P = \sum_{z \in Z} p(z) \cdot \delta_{z}$ over $Z$ with probability mass function $p$, we define the empirical risk minimizer
\balignt
\hat{\theta}(P) = \argmin_{\theta \in \Theta} \sum_{z\in Z} p(z) \cdot \ell(z,\theta). \label{prob:subsamp}
\ealignt
For a given objective function $R: \Theta \rightarrow \R^+$, our goal is to select a subsample of $s$ points $z_1,\ldots,z_s$ from $Z$ to minimize $R(\hat{\theta}(P^s))$ where $P^s = \frac{1}{s} \sum_{i=1}^s \delta_{z_i}$ is the empirical distribution over $z_1,\ldots,z_s$.
Since solving this problem optimally is generally computationally intractable, we will content ourselves with outperforming the standard subsampling strategy of selecting $s$ points uniformly at random from $Z$.
To this end, we will develop a greedy subset selection strategy based on influence functions and prove 
that this strategy outperforms simple random sampling when datapoint influences are sufficiently variable.

\subsection{Influence Functions}\label{subsec:Influence _Function}
Influence functions approximate the effect that a datapoint $z$ has on an estimator $\hat \theta$ or an objective value $R(\hat \theta)$ when the datapoint $z$ is added to an existing dataset. Formally, the influence function is defined as a Gateaux differential, which generalizes the idea of a directional derivative:

%\paragraph{Gateaux Differential: }The Gateaux differential generalizes the idea of a directional derivative.
%\begin{definition} \label{def:gateaus_dif}
%Let $f: U \rightarrow V$ be a function. For $h,x \in U$ with $h\neq 0$ the Gateaux differential $d_hf$ is
%defined as:
%\balignt
%d_hf = \lim_{\epsilon \rightarrow 0} \frac{f(x+\epsilon h) - f(x)}{\epsilon}
%\ealignt
%\end{definition}
%\Cam{Seems odd that above definition is not used in definition of influence function? Do we ever use it?}

\begin{definition}[{Bouligand Influence Function}] \label{def:influence_fun}
Let $P$ be a distribution and $T$ be an operator $T:P\rightarrow T(P)$ then the
Bouligand influence function (BIF) of $T$ at
$P$ in the direction of a distribution $Q \neq P$ is
defined as: \small$
BIF(Q;T,P) = \lim_{\epsilon \rightarrow 0} \frac{T\left( (1-\epsilon) P + \epsilon Q \right) - T(P)}{\epsilon}$.\normalsize
\end{definition}
The BIF measures the impact of an infinitesimally small perturbation of $P$ in the direction of $Q$ on $T(P)$. Letting  \small$P_{\epsilon,Q} = (1-\epsilon) P + \epsilon Q$\normalsize, it is easy to see that \small$BIF(Q;T,P) = \frac{\partial}{\partial \epsilon} T(P_{\epsilon,Q})\Big|_{\epsilon=0}$ \normalsize.

We typically take $P$ to be the empirical distribution over $M$ samples $\{z_i\}_{i=1}^M$, which we write as $P^M = \frac{1}{M} \sum_{i =1}^M \delta_{z_i}$ and $Q = \delta_{z_{M+1}}$ to be a point mass at an additional sample point $z_{M+1}$. $T$ will be the operator $\hat{\theta}$ returning the empirical risk minimizer \cref{prob:subsamp} over the given distribution. 
%Formally, we define the empirical risk minimization (ERM) operator on a distribution (overloading the notation used in Problem \ref{prob:subsamp}) as:
%\begin{definition}[Empirical Risk Minimization Operator]\label{def:erm}
%Given distribution $P = \sum_{i=1}^M p_i \cdot \delta_{z_i}$, define:
%\balignt
%\hat{\theta}(P) = \argmin_{\theta \in \Theta} \sum_{i =1}^M p_i \cdot \ell({z}_i,\theta).
%\ealignt
%\end{definition}

For large enough $M$, the effect on $\hat{\theta}$ of adding $z_{M+1}$ to the sample $\{z_i\}_{i=1}^M$ can be approximated using the influence function of $\hat{\theta}$ at the empirical distribution $P^M$ in the direction of the point mass $Q$. This approximation corresponds to a first order Taylor series approximation. 
%We write the Taylor series using higher order influence functions, which can be defined if higher order derivatives exist. 
%\begin{definition}[Higher Order Influence Function]
%Let $P$ be a distribution and $T$ be an operator $T:P\rightarrow T(P)$, then $k^{th}$ order Bouligand influence function (BIF) of $T$ at $P$ in the direction of a distribution $Q \neq P$ is defined as 
%\balignt
%BIF_k(Q;T,P) = \frac{\partial}{\partial^k \epsilon} T(P_{\epsilon,Q})\Big|_{\epsilon=0}.
%\ealignt
%\end{definition}
%If all BIFs exist then one can write a Taylor series expansion in the following form:
%\balignt
%T(P_{\epsilon,Q}) = T(P) + \sum_{k =1}^{\infty} \frac{\epsilon^k}{k!}BIF_k(Q;T,P). \label{eq:Taylor_bif}
%\ealignt
%-------------
\subsection{Notation} \label{subsec:notations}
%\notate{This section can be greatly compressed: let's user shorter sentences with fewer filler words.}
%We now introduce notation that we use throughout. %Since we tackle Problem \ref{prob:subsamp} using a greedy strategy, our analysis will always start by assuming that 
A subset of $M$ data points $\{z_i\}_{i=1}^M$ are being expanded. An additional point selected using proposed greedy approach and using uniform random sampling is denoted by $z_{M+1}^g$  and $z_{M+1}^r$  respectively.  Let $P^M$ be the empirical distribution on $\{z_i\}_{i=1}^M$. 
 \small $ \hat{\theta}_M \eqdef \hat{\theta}(P^M)$ \normalsize denotes the initial model trained on \small $\{z_i\}_{i=1}^M$\normalsize and  \small $\hat{\theta}_{M+1^t} $\normalsize denotes the model trained on \small$P^M_{\epsilon_M,Q_M^t}$\normalsize. We extend this notation, e.g., letting \small$\hat{\theta}_{M+k_1^g+k_2^r+k_3^g}$\normalsize ~ denote the model which is trained by first adding $k_1$ points greedily in sequence, starting at the model \small$\hat{\theta}_{M}$\normalsize, then adding $k_2$ points randomly at  \small$\hat{\theta}_{M+k_1^g}$\normalsize ~ and finally adding $k_3$ points greedily starting at \small $\hat{\theta}_{M+k_1^g+k_2^r}$. $\hat{\theta}_{M_m+k_1^g+k_2^r+k_3^g}$\normalsize ~ can be defined similar to \small $\hat{\theta}_{M+k_1^g+k_2^r}$ \normalsize  where single point is replaced with the block of $m$ points.
 %For example, $\hat{\theta}_{M_m+k_1^g+k_2^r+k_3^g}$ denotes the model which is trained by first adding $k_1 \cdot m$ points greedily, in $k_1$ blocks of size $m$, starting from $\hat{\theta}_{M}$, then adding a $k_2 \cdot m$ points randomly at the model $\hat{\theta}_{M_m+k_1^g}$, and finally adding $k_3 \cdot m$ points greedily, in $k_3$ blocks of size $m$ starting  from $\hat{\theta}_{M_m+k_1^g+k_2^r}$. 
%The new distribution  after adding a point is \small $P^M_{\epsilon_M, Q_M^t} = (1 - \epsilon_M) P^M + \epsilon_M Q_M^t$\normalsize ~where \small $Q_M^t=\delta_{z_{M+1}^t}$ and $\epsilon _M = \frac{1}{M+1}$\normalsize. Similarly, for adding a  block of $m$ new points  \small$z^t_{M_m+1}$ \normalsize which consists $[z^t_{M+i}]_{i =1}^m$,  \small$Q_{M_m}^t = \frac{1}{m} \sum_{i = 1}^{m}\delta_{z_{M+i}^t} $ \normalsize and  \small$P^M_{\epsilon_{M_m}, Q_M^t} = (1 - \epsilon_{M_m}) P^M + \epsilon_{M_m} Q_{M_m}^t$\normalsize where \small $\epsilon_{M_m} = \frac{m}{M+m}$\normalsize. 
%Also $\tilde{\theta}$ denotes the approximation for $\hat{\theta}$. 

%We write the Hessian of the empirical risk minimization problem at parameter $$\theta_M$ as $$H_{\theta_M} = \frac{1}{M} \sum_{i =1}^M \nabla^2_{\theta}~ \ell({z}_j,\theta_M).$$

% In the unusual situation where you want a paper to appear in the
% references without citing it in the main text, use \nocite

\section{First Order Model Approximation with Influence Functions} \label{sec:first_order}
We now show how influence functions can be used to approximate how the addition of a new point into a subsample will affect the trained model $\hat \theta$. We begin by stating the necessary assumptions.
\subsection{Assumptions} \label{subsec:assumptions}
%\notate{Update assumptions in accordance with our rebuttal}
Our assumptions are similar to the ones made in \cite{giordano2018return}. We write the Hessian of the empirical risk minimization problem over the empirical distribution $P^M$ at parameter $\theta$ as 
\balignt
H_{\theta} = \frac{1}{M} \sum_{i =1}^M \nabla^2_{\theta}~ \ell({z}_j,\theta).
\ealignt

%\Cam{I added that this must hold for all  $z \in Z$. Is this correct?}
\begin{assumption}[Smoothness]\label{as:smooth}
For all  $\theta \in \Theta, z \in Z$, the loss function  $\ell(z,\theta)$ is  twice continuously differentiable in $\theta$. % with bounded derivatives. %and all the derivatives are bounded. This assumption is necessary for the existence of all terms in the Taylor series expansion of equation~\eqref{eq:Taylor_bif}. \footnote{Note that just twice differentiability suffices to compute the first order  local approximation of the empirical risk minimizer.} 
%\notate{This is way too strong.  We don't need the whole taylor expansion, just the first-order approximation.}
\end{assumption}

%\Cam{Is this supposed to be for all subsets $\{z\}_{i=1}^M$? And for any sample size $M$? I changed it to say this. Should we make the Hessian a function of the subsample since otherwise $H_\theta$ is under-defined? Fix if incorrect in any way.}
\begin{assumption}[Non-degeneracy]\label{as:non-degenerate}
The operator norm of  inverse Hessian of the ERM loss function is bounded above. I.e., for all $\theta \in \Theta$ and $\{z_i\}_{i=1}^M \subseteq Z$, $\| H_{\theta}^{-1}\|_{op}\leq C_{op} < \infty$.%where $P^M = \frac{1}{M} \sum_{i=1}^M \delta_{z_i}$.
\end{assumption}

\begin{assumption}[Bounded gradients]\label{as:bounded}
The norm  of the gradient of the loss function is bounded. I.e., for all $\theta \in \Theta$ and $z \in Z$,  $\| \nabla_{\theta}\ell(z,\theta) \| \leq G < \infty$.
\end{assumption}

\begin{assumption}[Local smoothness] \label{as:local_smooth}
Finally we assume that the Hessian in Lipschitz in the parameter $\theta$  I.e., for all $\theta, \theta' \in \Theta$ and $\{z_i\}_{i=1}^M \subseteq Z$, $\|  H_{\theta} -  H_{\theta'}\|_{op}\leq L \| \theta - \theta' \|$ for some $L\geq0$. %  \Cam{What does this mean? For all $\theta$ and subsets $\{z_i\}_{i=1}^M$? Add in to match previous assumptions if correct and define rigorously. I.e., being Lipschitz requires a notation of distance in both the input and output space...}
\end{assumption}

\subsection{Model Approximation} \label{subsec:model_approx}
Intuitively, our assumptions allow us to approximate  a well behaved operator $T$:
\small
\balignt
T(P_{\epsilon,Q}) 
	= T(P) + \epsilon ~BIF(Q;T,P) + \mathcal{O}(\epsilon^2) 
 	= T(P) + \epsilon \frac{\partial}{\partial \epsilon} T(P_{\epsilon,Q})\Big|_{\epsilon=0}   +  \mathcal{O}(\epsilon^2). \label{eq:first_order_approx_T}
\ealignt
\normalsize
As discussed, we let $T$ be the empirical risk minimizer operator $\hat \theta$ \cref{prob:subsamp}. 
After adding another batch of $m$ points $\{{z}_{M+1}^t, \cdots {z}_{M+m}^t \}$ (where $t \in \{g,r\}$ denotes if the points are added  randomly or greedily), we want to approximate:
\small
\balignt
\hat{\theta}_{M_m+1^t} =\argmin_{\theta \in \Omega_\theta} \frac{1}{M+m} \left( \sum_{j =1}^{M} \ell({z}_j,\theta) + \sum_{j=1}^m \ell({z}_{M+j}^t,\theta) \right).
\ealignt
\normalsize
\citet{koh2017understanding} give the following result on the first order approximation of the $\hat{\theta}_{M_m+1^t}$ for $m = 1$ (which we denote as $\hat{\theta}_{M+1^t}$):

%\Cam{Is my formula for $\tilde{\theta}_{M+1^t}$ below correct? I added. Please correct if wrong.}
\begin{lemma}[{\cite{koh2017understanding}}] \label{lem:1_point_addition_foa}
Let $\tilde{\theta}_{M+1^t} = \hat{\theta}_M + \frac{1}{M+1} BIF(Q_M^t;\hat{\theta},P^M)$ be the first order approximation of the estimator $\hat{\theta}_{M+1^t} $ as in \eqref{eq:first_order_approx_T}. Under Assumptions~\ref{as:smooth}, \ref{as:non-degenerate}, \ref{as:bounded},  and \ref{as:local_smooth}, $BIF(Q_M^t;\hat{\theta},P^M)$ is given as:
\balignt
BIF(Q_M^t;\hat{\theta},P^M) =  - {H_{\hat{\theta}_M} ^{-1}} \nabla_{\theta} \ell(z_{M+1}^t,\hat{\theta}_M) 
\ealignt
and $\| \tilde{\theta}_{M+1^t} -\hat{\theta}_{M+1^t} \| \leq \frac{B_1}{(M+1)^2}$ for some constant $B_1$.
\end{lemma}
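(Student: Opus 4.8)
The plan is to derive the BIF formula from the first-order optimality condition and then control the Taylor remainder using the Lipschitz-Hessian assumption. First I would write down the defining equation of the perturbed estimator. Setting $P^M_{\epsilon,Q} = (1-\epsilon)P^M + \epsilon\,\delta_{z_{M+1}^t}$, the empirical risk minimizer $\hat\theta(P^M_{\epsilon,Q})$ satisfies the stationarity condition
\balignt
(1-\epsilon)\,\frac{1}{M}\sum_{i=1}^M \nabla_\theta \ell(z_i,\theta) + \epsilon\,\nabla_\theta \ell(z_{M+1}^t,\theta) = 0,
\ealignt
evaluated at $\theta = \hat\theta(P^M_{\epsilon,Q})$, which is well-defined and (locally) unique by \cref{as:smooth} and \cref{as:non-degenerate}. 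Differentiating this identity in $\epsilon$ and evaluating at $\epsilon = 0$, where $\hat\theta(P^M_{\epsilon,Q})\big|_{\epsilon=0} = \hat\theta_M$ and $\frac{1}{M}\sum_i \nabla_\theta\ell(z_i,\hat\theta_M) = 0$, the chain rule gives $H_{\hat\theta_M}\,\frac{\partial}{\partial\epsilon}\hat\theta(P^M_{\epsilon,Q})\big|_{\epsilon=0} + \nabla_\theta \ell(z_{M+1}^t,\hat\theta_M) = 0$, and inverting $H_{\hat\theta_M}$ (licit by \cref{as:non-degenerate}) yields exactly $BIF(Q_M^t;\hat\theta,P^M) = -H_{\hat\theta_M}^{-1}\nabla_\theta \ell(z_{M+1}^t,\hat\theta_M)$. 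The implicit function theorem, justified by \cref{as:smooth} and the invertibility of the Hessian, is what legitimizes this differentiation; I would note that $P^M_{\epsilon,Q}$ at $\epsilon = \tfrac{1}{M+1}$ is precisely the empirical distribution over the $M+1$ points, so $\hat\theta_{M+1^t} = \hat\theta(P^M_{\epsilon,Q})\big|_{\epsilon=1/(M+1)}$.

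For the error bound, I would Taylor-expand $\epsilon \mapsto \hat\theta(P^M_{\epsilon,Q})$ to first order around $\epsilon=0$ with Lagrange remainder: $\hat\theta_{M+1^t} = \hat\theta_M + \tfrac{1}{M+1}BIF(Q_M^t;\hat\theta,P^M) + \tfrac{1}{2(M+1)^2}\,\frac{\partial^2}{\partial\epsilon^2}\hat\theta(P^M_{\epsilon,Q})\big|_{\epsilon=\xi}$ for some $\xi \in [0,\tfrac{1}{M+1}]$, so that $\|\tilde\theta_{M+1^t}-\hat\theta_{M+1^t}\| \le \tfrac{1}{2(M+1)^2}\sup_{\xi}\|\partial_\epsilon^2\hat\theta(P^M_{\epsilon,Q})\|$. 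It then remains to bound the second derivative of the implicitly defined curve uniformly in $\epsilon$. Differentiating the stationarity identity twice and solving produces an expression of the form $\partial_\epsilon^2\hat\theta = -H^{-1}(\epsilon)\big[2\,(\text{mixed gradient difference term}) + (\text{third-derivative-of-loss term acting on }(\partial_\epsilon\hat\theta)^{\otimes 2})\big]$; each factor is controlled by $C_{op}$ (\cref{as:non-degenerate}), $G$ (\cref{as:bounded}), and $L$ (\cref{as:local_smooth}), the last of which bounds the variation of the Hessian in place of a genuine third derivative. Since $\|\partial_\epsilon\hat\theta\| \le C_{op}G$ by the BIF formula and the Hessian along the path stays invertible with the same uniform bound, the supremum is a finite constant depending only on $C_{op}, G, L$; absorbing the $\tfrac12$ gives the claimed $B_1$.

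The main obstacle is the uniform-in-$\epsilon$ control of $\partial_\epsilon^2\hat\theta$: one must (a) ensure $\hat\theta(P^M_{\epsilon,Q})$ stays in a region where \cref{as:non-degenerate} and \cref{as:local_smooth} apply for all $\epsilon \in [0,\tfrac{1}{M+1}]$ — which follows because $\|\hat\theta(P^M_{\epsilon,Q}) - \hat\theta_M\| = O(\epsilon) = O(1/M)$ is small — and (b) bound the second-order term when the loss is only assumed twice differentiable with Lipschitz Hessian, so there is no pointwise third derivative to invoke. The resolution is to avoid differentiating three times: instead of Taylor-expanding $\hat\theta$ in $\epsilon$, directly estimate $\hat\theta_{M+1^t} - \tilde\theta_{M+1^t}$ by plugging $\tilde\theta_{M+1^t}$ into the perturbed stationarity equation, bounding the residual via \cref{as:local_smooth} (the Hessian-Lipschitz bound controls exactly the first-order Taylor error of the gradient), and converting the residual back into a parameter-error bound through \cref{as:non-degenerate}. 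This mirrors the strategy of \citet{giordano2018return}, and is the step I would expect to take the most care.
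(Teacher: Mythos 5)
Your derivation of the BIF formula is essentially the paper's own argument: both of you write the stationarity condition for $\hat\theta(P^M_{\epsilon,Q})$, expand the gradient around $\hat\theta_M$ (equivalently, differentiate the identity in $\epsilon$ via the implicit function theorem), use that the unperturbed gradient sum vanishes, and invert $H_{\hat\theta_M}$ as $\epsilon\to 0$. The one place you genuinely diverge is the error bound $\|\tilde\theta_{M+1^t}-\hat\theta_{M+1^t}\|\le B_1/(M+1)^2$: the paper's proof simply asserts that this ``directly comes from'' Assumptions~\ref{as:smooth}, \ref{as:non-degenerate} and \ref{as:bounded}, with no argument, whereas you supply one. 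You are also right to flag that a na\"ive Lagrange-remainder expansion in $\epsilon$ would require a third derivative of the loss that Assumption~\ref{as:smooth} does not provide, and your fix --- substitute $\tilde\theta_{M+1^t}$ into the perturbed stationarity equation, bound the gradient residual by the Hessian--Lipschitz constant $L$ of Assumption~\ref{as:local_smooth}, and convert back to a parameter error through $C_{op}$ --- is the standard and correct way (\`a la \citet{giordano2018return}) to close that gap. So your proposal is not only consistent with the paper's proof but fills in the step the paper leaves implicit; the only caveat is that the resulting constant $B_1$ depends on $C_{op}$, $G$, and $L$, which matches what the lemma statement allows.
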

We extend Lemma~\ref{lem:1_point_addition_foa} to general $m$ and provide the proofs of both results in Appendix~\ref{ap:foa}.
\begin{lemma} \label{lem:m_point_addition_foa}
Let $\tilde{\theta}_{M_m+1^t} = \hat{\theta}_M + \frac{m}{M+m} BIF(Q_{M_m}^t;\hat{\theta},P^M)$ be the first order approximation of the estimator $\hat{\theta}_{M_m+1^t} $ as in \eqref{eq:first_order_approx_T}. Under Assumptions~\ref{as:smooth}, \ref{as:non-degenerate}, \ref{as:bounded},  and \ref{as:local_smooth}, $BIF(Q_{M_m}^t;\hat{\theta},P^M)$ is given as:
\balignt
BIF(Q_{M_m}^t;\hat{\theta},P^M) &= - \sum_{j=1}^m H_{\hat{\theta}_M}^{-1} ~\nabla_{\theta} \ell({z}_{M+j}^t,\hat{\theta}_M)
\ealignt
and $\| \tilde{\theta}_{M_m+1^t}  - \hat{\theta}_{M_m+1^t}  \| \leq \frac{B_m m^2}{(M+m)^2}$ for constant $B_m$.
\end{lemma}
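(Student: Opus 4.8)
The plan is to mirror the $m=1$ argument of Lemma~\ref{lem:1_point_addition_foa} but carry the block of $m$ points through the optimality conditions. First I would set $\epsilon_M = m/(M+m)$ and $Q_{M_m}^t = \frac{1}{m}\sum_{j=1}^m \delta_{z_{M+j}^t}$, so that the reweighted empirical distribution $P^M_{\epsilon_M, Q_{M_m}^t} = (1-\epsilon_M)P^M + \epsilon_M Q_{M_m}^t$ is exactly the uniform distribution over all $M+m$ points, and hence $\hat\theta_{M_m+1^t} = \hat\theta(P^M_{\epsilon_M,Q_{M_m}^t})$. Then the derivative form $BIF(Q_{M_m}^t;\hat\theta,P^M) = \frac{\partial}{\partial\epsilon}\hat\theta(P^M_{\epsilon,Q_{M_m}^t})\big|_{\epsilon=0}$ is computed by implicit differentiation of the stationarity condition $\sum_{i=1}^M (1-\epsilon)\frac{1}{M}\nabla_\theta\ell(z_i,\theta) + \epsilon\frac{1}{m}\sum_{j=1}^m \nabla_\theta\ell(z_{M+j}^t,\theta) = 0$ with respect to $\epsilon$ at $\epsilon=0$, $\theta=\hat\theta_M$. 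This yields $H_{\hat\theta_M} \cdot \frac{\partial\hat\theta}{\partial\epsilon}\big|_0 = -\big(\frac{1}{m}\sum_{j=1}^m \nabla_\theta\ell(z_{M+j}^t,\hat\theta_M)\big)$ up to the $\nabla_\theta\ell(z_i,\hat\theta_M)$ term which sums to zero by optimality of $\hat\theta_M$, giving the stated formula after absorbing the $1/m$ (the extra factor of $m$ in the definition of $\tilde\theta_{M_m+1^t}$ cancels it, consistent with the $m=1$ case).

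Next, for the error bound I would Taylor-expand as in \eqref{eq:first_order_approx_T}: $\hat\theta_{M_m+1^t} = \hat\theta_M + \epsilon_M BIF(Q_{M_m}^t;\hat\theta,P^M) + \mathcal{O}(\epsilon_M^2)$, so the first-order approximation error is $\mathcal{O}(\epsilon_M^2) = \mathcal{O}(m^2/(M+m)^2)$, which is the claimed form. To make this rigorous rather than asymptotic, I would follow the standard perturbation argument: define $g(\epsilon) = \hat\theta(P^M_{\epsilon,Q_{M_m}^t})$, show $g$ is $C^1$ (via the implicit function theorem, using Assumptions~\ref{as:smooth} and \ref{as:non-degenerate} to invert the Hessian), write $g(\epsilon_M) - g(0) - \epsilon_M g'(0) = \int_0^{\epsilon_M}(g'(u) - g'(0))\,du$, and bound $\|g'(u) - g'(0)\|$ by $L$-Lipschitzness of the Hessian (Assumption~\ref{as:local_smooth}) together with the bound $\|g(u)-g(0)\| \le C_{op} G u$ coming from $\|g'\| \le C_{op}G$ (Assumptions~\ref{as:non-degenerate}, \ref{as:bounded}). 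This gives $\|g(\epsilon_M)-g(0)-\epsilon_M g'(0)\| \le \frac12 C\, \epsilon_M^2$ for an explicit constant $C$ depending on $C_{op}, G, L$, and identifying $\epsilon_M^2 = m^2/(M+m)^2$ produces the bound with $B_m$ independent of $m$ and $M$.

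The main obstacle I anticipate is controlling $g'(u)$ uniformly for $u \in [0,\epsilon_M]$ rather than only at $u=0$: the Hessian appearing in $g'(u)$ is evaluated at $g(u)$ and over the perturbed distribution $P^M_{u,Q_{M_m}^t}$, not at $\hat\theta_M$ over $P^M$, so one must argue that $H_{g(u)}$ (over the reweighted measure) remains invertible with a uniformly bounded inverse and is close to $H_{\hat\theta_M}$. This requires combining the a priori movement bound $\|g(u) - \hat\theta_M\| \lesssim u$ with the Lipschitz assumption to show the reweighted Hessian stays within a neighborhood where Assumption~\ref{as:non-degenerate} continues to apply (a Neumann-series / perturbation-of-inverse argument), and handling the fact that the weights themselves vary with $u$. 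Once that uniform control is in hand, the remaining estimates are routine, and the dependence on $m$ enters only through $\epsilon_M^2$, which is why the block of size $m$ simply rescales the $m=1$ bound.
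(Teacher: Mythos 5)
Your proposal follows essentially the same route as the paper's proof: implicit differentiation of the first-order stationarity condition for the $\epsilon$-perturbed objective at $\epsilon=0$, using that $\sum_{i=1}^M \nabla_\theta \ell(z_i,\hat\theta_M)=0$ and inverting $H_{\hat\theta_M}$ to read off the BIF, then identifying $\epsilon_{M_m}=m/(M+m)$ so the remainder is $\mathcal{O}(\epsilon_{M_m}^2)$. Your handling of the $1/m$ normalization in $Q_{M_m}^t$ and your integral-remainder argument for the $B_m m^2/(M+m)^2$ bound (with uniform control of the perturbed Hessian along the path) are in fact more careful than the paper, which asserts that bound "just by the assumptions" without detail.
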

We see from Lemmas \ref{lem:1_point_addition_foa} and \ref{lem:m_point_addition_foa} that as long as $M$ is large enough, the first order approximation computed using the influence functions will be an accurate approximation for the true updated model. %, $\hat{\theta}_{M + 1^t}$  and $\hat{\theta}_{M_m + 1^t}$ respectively.

\subsection{Approximating Loss Function}\label{sec:lossApprox}
Via a simple application of the chain rule, we can see that, under the same set of assumptions, the influence function can also be used to directly approximate the loss function on any point $z$ after adding a new point $z_{M+1}^{t}$ to the subsample. Specifically we have:
\small
\balignt
\ell(z,\hat{\theta}_{M+1^t} ) - \ell(z,\hat{\theta}_{M} ) % &=   \frac{1}{M+1} \nabla_{\theta} \ell(z,\hat{\theta}_{M})^\top BIF(Q_M^t;\hat{\theta},P^M) + \mathcal{O}\left(\frac{1}{(M+1)^2}\right)\notag\\
=& - \frac{1}{M+1} \nabla_{\theta} \ell(z,\hat{\theta}_{M})^\top {H_{\hat{\theta}_M} ^{-1}} \nabla_{\theta} \ell(z_{M+1}^t,\hat{\theta}_M)  +\mathcal{O}\left(\frac{1}{(M+1)^2}\right)
\ealignt
\normalsize
In other words, we have for come constant $B_l$:
\small
\balignt
\large |&\ell(z,\hat{\theta}_{M+1^t} ) - \ell(z,\hat{\theta}_{M} ) + \frac{1}{M+1} \nabla_{\theta} \ell(z,\hat{\theta}_{M})^\top {H_{\hat{\theta}_M} ^{-1}} \large | \le  \frac{B_l}{(M+1)^2} .
\ealignt
\normalsize
Similarly, we can approximate the loss on $z$ when we add $m$ new points to the sample. We have:
\small
\balignt
\ell(z,\hat{\theta}_{M_m+1^t} ) - \ell(z,\hat{\theta}_{M} ) %&= \frac{m}{M+m} \nabla_{\theta} \ell(z,\hat{\theta}_{M})^\top BIF(Q_{M_m}^t;\hat{\theta},P^M) + \mathcal{O}\left(\frac{m^2}{(M+m)^2}\right)\notag\\
 &= - \frac{m}{M+m} \nabla_{\theta} \ell(z,\hat{\theta}_{M})^\top \left( \sum_{j=1}^m H_{\hat{\theta}_M}^{-1} \nabla_{\theta} \ell(z^t_{M+j}, \hat{\theta}_M) \right ) + \mathcal{O}\left(\frac{m^2}{(M+m)^2}\right).
% &=  \frac{m}{M+m}~\sum_{j=1}^m \underbrace{\left( \nabla_{\theta} \ell(z,\hat{\theta}_{M}) ~BIF(Q_{M_m^j}^t;\hat{\theta},P^M)\right)}_{\text{Sum of Invidual Influence}} + 
 \ealignt 
 \normalsize
 Thus, for some constant $C_l$:
 \small
 \balignt\label{eq:loss_m_points}
 \big |& \ell(z,\hat{\theta}_{M_m+1^t} ) - \ell(z,\hat{\theta}_{M} )+  \frac{m}{M+m}~\sum_{j=1}^m \underbrace{\left( \nabla_{\theta} \ell(z,\hat{\theta}_{M})^\top H_{\hat{\theta}_M}^{-1} \nabla_{\theta} \ell(z^t_{M+j}, \hat{\theta}_M) \right )}_{\text{Sum of Invidual Influence}} \big | \le \frac{C_l m^2}{(M+m)^2}.
\ealignt
\normalsize
%
%=  &\frac{m}{M+m} \nabla_{\theta} \ell(z,\hat{\theta}_{M})^\top BIF(Q_{M_m}^t;\hat{\theta},P^M) + \frac{C_l m^2}{(M+m)^2} \notag \\
%=  &\frac{m}{M+m} \nabla_{\theta} \ell(z,\hat{\theta}_{M})^\top \left( \sum_{j=1}^m BIF(Q_{M_m^j}^t;\hat{\theta},P^M)\right) \notag \\ 
%& \qquad \qquad \qquad \qquad \qquad \qquad \qquad \qquad   + \frac{C_l m^2}{(M+m)^2}  \notag \\
%=  &\frac{m}{M+m}~\sum_{j=1}^m \underbrace{\left( \nabla_{\theta} \ell(z,\hat{\theta}_{M}) ~BIF(Q_{M_m^j}^t;\hat{\theta},P^M)\right)}_{\text{Sum of Invidual Influence}}  + \frac{C_l m^2}{(M+m)^2} \label{eq:loss_m_points} 
%\ealignt

We can see from the above that for large $m/M$, the first order approximation to the loss after updating the model by adding one or $m$ new data points to the training set ($\ell(z,\hat{\theta}_{M+1^t})$ and $\ell(z,\hat{\theta}_{M_m+1^t})$ respectively), is a good approximation to the true loss. This will be the key idea behind the greedy algorithm, presenting in Section \ref{sec:greedy_selection} which will chose new points to minimize this approximate loss.
\small
\begin{remark}\label{rem:2}
From equation~\eqref{eq:loss_m_points} and from Lemma~\ref{lem:m_point_addition_foa}, we observe that the effect of adding $m$ points is just a sum over the points' individual effects. We will use this property later to perform a local greedy selection of $m$ points at a time. 
\end{remark}
 \normalsize

\section{Local Greedy Subset Selection} \label{sec:greedy_selection}
%\Cam{Again, I reworded this significantly. Make sure it is still correct. Moved some of the motivation earlier.}
\vspace{-4mm}
 \begin{wrapfigure}{L}{0.5\textwidth}
%    \begin{minipage}{0.6\textwidth}
\vspace{-4mm}
 \begin{algorithm}[H]
\begin{algorithmic}%[1] 
\small
\STATE  \textbf{given: }$\hat{\theta}_M$, dataset $Z$, subsample $\bar{Z}$, budget
  \STATE \textbf{for} {$k=1\dots \text{ budget}$}
  \STATE \quad  $z_{M_m+k} \gets \argmin_{Z_m \in Z \text{\textbackslash} \bar{Z}}   R\left(\argmin_{\theta} L(\theta,Z_m)\right)$ \label{alg_line:exact_greed_erm}
  \STATE \quad $\bar{Z} \gets \bar{Z} \cup z_{M_m+k}$
    %\STATE \textbf{end for}
  \STATE \textbf{return} $\bar{Z}$ 
\end{algorithmic}
\caption{Exact Local Greedy Algorithm}
 \label{algo:exact_greedy_m}
  \normalsize
\end{algorithm}
%\end{minipage}
\vspace{-6mm}
  \end{wrapfigure}
We now discuss how to use the first order approximation from Section \ref{sec:first_order} to greedily select data points to minimize an objective $R(\cdot)$  over the empirical risk minimizer \cref{prob:subsamp}. 
%Objective can be train error, error on the validation or can be some other user defined criteria. 
 %We denote the set of original training points as $Z$ and $\bar{Z}$ represents the set which has been used for training the inaccurate model $\hat{\theta}_M$.
 %Also , lets have define a quantity of interest $R(\theta)$ which depends on the estimator $\theta$ for example  $R(\theta) =  \mathbb{E}_{z}  [\ell(z,\theta)]$ can be risk.
 
  Our greedy approach starts with a subset of points $\{z_i\}_{i=1}^M$, which for convenience we denote as $\bar Z$, along with the empirical risk minimizer over this subset, $\hat{\theta}_M$. Intuitively, we hope to add points iteratively, $m$ at a time to $\bar{Z}$. Ideally, in each iteration, we would choose the subset of $m$ points from $Z$ that, when added, minimize $R\left(\argmin_{\theta} L(\theta, Z_m)\right)$ where \small $L(\theta,Z_m) =  \sum_{z_i  \in \bar{Z} } \ell(z_i,\theta)   +\sum_{\tilde{z}_{j_p} \in Z_m}  \ell(\tilde{z}_{j_p},\theta)$ \normalsize.
 However, this approach (shown in Algorithm~\ref{algo:exact_greedy_m}) is computationally infeasible. In particular, even evaluating the quality  of a subset requires recomputing the empirical risk minimization and the computing $R(\hat \theta)$. On top of this, optimizing over subsets is a combinatorial problem and likely to be difficult. Let $Z_m$ denote the set of $m$ samples from the set $Z\text{\textbackslash} \bar{Z}$.  To deal with this issue, we propose approximating Algorithm~\ref{algo:exact_greedy_m} using the  first order approximation discussed in Section \ref{sec:first_order}. We give an example instantiation of this approach in the case that $R(\hat \theta)$ is the loss over a validation set in Algorithm~\ref{algo:approximate_greedy_m_1_rand} (with $\epsilon = 0$).

Using the approximation results of Section \ref{sec:lossApprox}  in Algorithm~\ref{algo:exact_greedy_m}, we solve the minimization over the first order  approximation,
 in Algorithm~\ref{algo:approximate_greedy_m_1_rand}. In particular, we compute the approximate change in loss when $z_i$ is added to the training set, $\text{\textit{Residual}}(z_i) = \bbE_{ Z'}\Big[\nabla_{\theta} \ell(z',\hat{\theta}) H_{\hat{\theta}}^{-1}  \nabla_{\theta} \ell({z_i},\hat{\theta}) \Big]$ for all $z_i \in Z \setminus \bar Z$. Afterwards,we greedily add the $m$ points to $Z$ that have the largest approximate decreases to the validation loss. Due to the linear nature of the influence function, (see Remark  \ref{rem:2}), this is equivalent to choosing the set of $m$ points that minimize the first order approximation to the validation loss when adding all $m$ points at once.

We discuss the theoretical properties of Algorithm~\ref{algo:approximate_greedy_m_1_rand}  in Section \ref{sec:analysis}, taking into account the error introduced by the first order approximation used.

%\begin{remark}
%In practice, it is not even necessary to compute the term $\nabla_{\theta} \ell(z',\hat{\theta}_{M_m+(k-1)^g})$ in the algorithm~\ref{algo:approximate_greedy_m_1}, instead one has to get the approximation for each point addition separately, get the risk $R(\hat{\theta}_{M_m+{1}^g})$ for all the points and select those which performs best amongst the rest. By doing so, the order of approximation error still remains same. \Cam{I'm not sure what this means?}
%\end{remark}

%\begin{algorithm}[h]
%\begin{algorithmic}[1] 
%\small
%  \STATE  \textbf{given: }$\hat{\theta}_{M_m}$, data set $Z$, subsample $\bar{Z}$, validation set $Z'$, iter
%  \STATE $\hat{\theta} \gets   \hat{\theta}_{M_m}$
%  \STATE \textbf{for} {$k=1\dots \text{ iter}$}
%  \STATE \quad  \textbf{for} $p=1\dots m$
%   \STATE \quad \quad  $z \gets \argmax_{z \in Z \text{\textbackslash} \bar{Z}} \mathbb{E}_{z' \in Z'}\Big[\nabla_{\theta} \ell(z',\hat{\theta})^T H_{\hat{\theta}}^{-1}  \nabla_{\theta} \ell({z},\hat{\theta})\Big]$ \label{alg_lin:app_g_m_1_lin6}
%  \STATE \quad \quad $\bar{Z} \gets \bar{Z} \cup z$
%  %\STATE \quad  \textbf{end for}
%   
%  \STATE \quad $\hat{\theta} \gets \argmin_{\theta} \frac{1}{M+k}\sum_{z_{i}\in \bar{Z}}\ell(z_{i},\theta)$\label{alg_lin:app_g_m_1_lin9}
%    %\STATE \textbf{end for}
%  \STATE \textbf{return} $\bar{Z}$ 
%\end{algorithmic}
% \caption{Approximate Local Greedy Algorithm: Validation Loss Minimization}
% \label{algo:approximate_greedy_m_1}
%  \normalsize
%\end{algorithm}
%\lipsum%[1]
 \vspace{-2mm}
\begin{wrapfigure}{L}{0.6\textwidth}
%    \begin{minipage}{0.6\textwidth}
\vspace{-6mm}
 \begin{algorithm}[H]
\begin{algorithmic}%[1] 
\small
  %%%\STATE  \textbf{given: }$\hat{\theta}_{M_m}$, data set $Z$, subsample $\bar{Z}$, validation set $Z'$, iter, $\epsilon$
  \STATE  \textbf{given: } data set $Z$, subsample $\bar{Z}$, validation set $Z'$, iter, $\epsilon$
  %%%\STATE $\hat{\theta} \gets  \hat{\theta}_{M_m}$
  \STATE \textbf{for} {$k=1\dots \text{ iter}$}
    \STATE \quad $\hat{\theta} = \argmin_{\theta} \frac{1}{|\bar{Z}|}\sum_{z_i \in \bar{Z} } \ell(z_{i},\theta)$\label{alg_lin:app_g_m_1_lin9}
   %\STATE \quad  $jj \gets \text{remainder}(jj+1,q)$ 
  % \STATE \quad  $\text{var}[jj] \gets R(\hat{\theta}_{M_m+{k}^g})$
   \STATE \quad \textbf{with probability $\epsilon$ } : Add $m$ random samples to $\bar{Z}$
   %\STATE \quad \quad $\hat{\theta} = \argmin_{\theta} \frac{1}{|\bar{Z}|}\sum_{z_i \in \bar{Z} } \ell(z_{i},\theta)$
\STATE \quad \textbf{with probability $1 - \epsilon$ } :
  \STATE \quad  \quad  \textbf{for} $p=1\dots m$
   \STATE \quad  \quad \quad  $\tilde{z} = \argmax_{z_i \in Z \text{\textbackslash} \bar{Z}}  \bbE_{ Z'}\Big[\nabla_{\theta} \ell(z',\hat{\theta}) H_{\hat{\theta}}^{-1}  \nabla_{\theta} \ell({z_i},\hat{\theta})\Big]$ \label{alg_lin:app_g_m_1_lin6}
  \STATE \quad  \quad \quad $\bar{Z} = \bar{Z} \cup \tilde{z}$
  %\STATE \quad  \quad  \textbf{end for}
%%%  \STATE \quad $\hat{\theta} = \argmin_{\theta} \frac{1}{|\bar{Z}|}\sum_{z_i \in \bar{Z} } \ell(z_{i},\theta)$\label{alg_lin:app_g_m_1_lin9}
    %\STATE \textbf{end for}
  \STATE \textbf{return} $\bar{Z}$ 
\end{algorithmic}
\caption{Approximate Local $\epsilon$-Greedy Algorithm}
 \normalsize
 \label{algo:approximate_greedy_m_1_rand}
\end{algorithm}
%\end{minipage}
\vspace{-6mm}
  \end{wrapfigure}
  %\vspace{-2mm}
\subsection{Practical Algorithm for Risk Minimization: An $\epsilon$-Greedy Approach} \label{sebsec:practical_erm}

Algorithm \ref{algo:approximate_greedy_m_1_rand} with $\epsilon = 0$ invokes our approach when the function $R(\hat \theta)$ is the loss over some validation set. However, it is possible to use influence functions to implement an approximate greedy strategy for more general $R$, as long as we can compute a first order approximation to the change in $R$ when a new data point is added, using e.g., Lemmas \ref{lem:1_point_addition_foa} and \ref{lem:m_point_addition_foa} combined with the chain rule, as we did in Section \ref{sec:lossApprox}.
One function $R$ that we may be particularly interested in minimizing is the population risk. However, when we consider the population risk: we never have access to the population and so cannot compute $R(\hat \theta)$ or its derivative. We can approximate the population using a validation set, however, this approach might quickly overfits on the validation data in case of small size dataset.  To combat this issue, we propose to use $\epsilon$-greedy approach. The idea is to disturb the overfitting by adding random datapoints to avoid  the error on validation set  to saturate. However, for datasets of large size, choosing $\epsilon=0$ works just fine as we would see in the Section~\ref{sec:expts}.
 In principle, it is pretty easy to choose points on the boundary which overfits on the validation set. The addition of randomly selection points force the classifier to generalize on the other samples apart  from the validation set. We provide a formal $\epsilon$\textit{-greedy} algorithm~\ref{algo:approximate_greedy_m_1_rand}.

\paragraph{About Scalability (Hessian):} The main computational bottleneck of this approach comes from the Hessia inversion $H_{\theta}^{-1}$ which might be cubic in worst case. However, efficient computation of Hessian-vector  product is a very well researched field in second order optimization. Conjugate gradient \citep{martens2010deep}, sketching \citep{pilanci2016iterative} or stochastic estimation \cite{agarwal2017second} can be used to efficiently compute $H_{\theta}^{-1}\nabla_{\theta} \ell(z,\theta)$.

\subsection{From One Model to Another}  \label{subsec:model_transfer}
Lemmas~\ref{lem:1_point_addition_foa} and \ref{lem:m_point_addition_foa} suggest that our update method resembles with a newton update. If we assume for a moment that we have access to the full population risk then it is pretty evident in Algorithm~\ref{algo:approximate_greedy_m_1_rand}  that the proposed approach inherently choose the point where the direction of change in the parameter space is aligned maximally with the direction of descent on the true population risk. Let us closely look at the equation which we utilize to select our points greedily, \small $\tilde{z} \gets \argmax_{z_i \in Z \text{\textbackslash} \bar{Z}} \bbE_{z'} \nabla_{\theta} \ell(z',\hat{\theta}) H_{\hat{\theta}}^{-1}  \nabla_{\theta} \ell({z_i},\hat{\theta})$ \normalsize. 
We can write $\bbE_{z'} [\nabla_{\theta} \ell(z',\hat{\theta})] $ as gradient of true population risk $\nabla R(\theta)$ and it is clearly visible that those points are chosen with almost certainty where direction of update matches maximum with true gradient vector $\nabla R(\theta)$. 

\section{Analysis of Local Greedy Method}  \label{sec:analysis}
The algorithm discussed in the previous sections was a local greedy points selection algorithm at each stage. However,  the advantage of using local greedy selection procedure over the random sampling is unknown. In this section, we analyze the theoretical advantage of our approach over random sampling. Before going into the details of discussing the optimality guarantee of our prposed algorithm, we will define some quantities which will be helpful in quantifying the structure of the problem in rest of the section. We first define the worst point addition in a model in the following way:
\small
\balignt
\begin{split} \label{eq:worst_point_addition}
&z_{M_m+k+1}^w = \argmax_{\{\tilde{z}_{j_1},\tilde{z}_{j_2}\cdots \tilde{z}_{j_m}\} \in Z \text{\textbackslash} \bar{Z}}    R\left[ \argmin_{\theta} \left(   \sum_{i=1}^{M+m*k} \ell(z_i,\theta)    + \sum_{p=1}^{m}  \ell(z_{j_p},\theta)\right) \right] \\
&\theta_{{M_m+k+1^w}}  = \argmin_{\theta}  \left(   \sum_{i=1}^{M+m*k} \ell(z_i,\theta)    + \sum_{p=1}^{m}  \ell(z_{j_p},\theta)\right)
\end{split}
\ealignt
\normalsize
In equation~\eqref{eq:worst_point_addition},  $\theta_{{M_m+k+1^w}}$ denotes that the earlier classifier was trained on $M+m*k$ number of points and afterwards a set of  $m$ points were added which decrease the risk least. 
We further define two more quantity  which quantify the instantaneous gain/loss of choosing greedy points over averaged random points. 
\small
\balignt
\Delta_{\theta_{M_m+k}}' &= R(\theta_{M_m+k+1^g}) - \bbE [ R(\theta_{M_m+k+1^r}) ]  \label{eq:delta_1_greedy_rand} \text{ and }
\Delta_{\theta_{M_m+k}}'' &= R(\theta_{M_m+k+1^g}) - R(\theta_{M_m+k+1^w}) \label{eq:delta_1_greedy_worse} 
\ealignt
\normalsize
$\Delta'$ represents the quality of local greedy search over random selection of point and $\Delta''$ represents the gap between the best point chosen greedily and worst possible point in the same step. It is clear that the following relation holds:  \small $ \Delta_{\theta_{M+k}}'' \leq \Delta_{\theta_{M+k}}' \leq 0 ~~\forall ~\theta_{M+k} \in \Theta $\normalsize. \\
From the approximation given in equation~\eqref{eq:loss_m_points}, we can easily observe that the order of $\Delta''$ and $\Delta'$ is roughly of the order of $\mathcal{O}\left( \frac{m^2}{(M+m)^2} \right) \in  \mathcal{O}\left( \frac{1}{M} \right)$ for small enough $m$. Now we start by analyzing the optimality guarantee of our proposed greedy algorithm for the case of $m=1$ and later will  state result for general $m$.  A trivial result about comparison between the output of the algorithm~\ref{algo:exact_greedy_m} and \ref{algo:approximate_greedy_m_1_rand} is given in lemma~\ref{lem:exact_vs_approx}.  The proofs of all results in this section can be found in Appendix~\ref{ap:analysis}.
\begin{lemma}\label{lem:exact_vs_approx}
Let us assume that the current parameter $\theta_{M_m+k}$ is obtained by training on $M+k*m$ number of points. If $\tilde{\theta}_{M_m+k+1^g}$ and $\hat{\theta}_{M_m+k+1^g}$  are the output of Algorithm~\ref{algo:exact_greedy_m} and Algorithm~\ref{algo:approximate_greedy_m_1_rand} correspondingly after adding set of $m$ points greedily then under the bounded gradient and bounded Hessian assumption on the function of interest $R:\Theta\rightarrow \mathbb{R}$ , the following holds:
\balignt
|R(\tilde{\theta}_{M_m+k+1^g})  - R(\hat{\theta}_{M_m+k+1^g})| \in \mathcal{O}\left(\frac{m^2}{(M+(k+1)m)^2} \right).
\ealignt
\end{lemma}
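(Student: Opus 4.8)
The plan is to bound the discrepancy between the point selected by the exact greedy rule (Algorithm~\ref{algo:exact_greedy_m}) and the point selected by the first-order approximate rule (Algorithm~\ref{algo:approximate_greedy_m_1_rand}) by propagating the approximation error from the model level to the objective level, and then comparing the two greedy choices via an optimality argument. First I would fix the current iterate $\theta_{M_m+k}$, trained on $M + km$ points, and write $N = M + km$ for brevity, so that we are adding a block of $m$ points to a dataset of size $N$. For any candidate block $Z_m \subseteq Z \setminus \bar Z$, let $\hat\theta(Z_m)$ denote the exact ERM after adding $Z_m$, and let $\tilde\theta(Z_m) = \theta_{M_m+k} + \frac{m}{N+m} BIF(Q_{Z_m};\hat\theta, P^{N})$ be its first-order approximation. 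By Lemma~\ref{lem:m_point_addition_foa} (applied with base size $N$ in place of $M$), we have $\|\hat\theta(Z_m) - \tilde\theta(Z_m)\| \le \frac{B_m m^2}{(N+m)^2}$ uniformly over $Z_m$.

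Next I would transfer this parameter-space bound to an objective-value bound. Under the stated hypothesis that $R:\Theta\to\mathbb{R}$ has bounded gradient — say $\|\nabla R(\theta)\| \le G_R$ — the mean value theorem (or a first-order Taylor estimate, using in addition bounded Hessian of $R$ if one wants the cleaner Taylor form) gives $|R(\hat\theta(Z_m)) - R(\tilde\theta(Z_m))| \le G_R \cdot \frac{B_m m^2}{(N+m)^2}$ for every candidate block. Thus the two scoring functions $Z_m \mapsto R(\hat\theta(Z_m))$ and $Z_m \mapsto R(\tilde\theta(Z_m))$ differ pointwise by at most $\varepsilon := G_R B_m m^2/(N+m)^2$. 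Now a standard "argmin of nearby functions" argument finishes it: if $Z_m^{\mathrm{ex}}$ minimizes the exact score and $Z_m^{\mathrm{ap}}$ minimizes the approximate score, then
\balignt
R(\hat\theta(Z_m^{\mathrm{ap}})) - R(\hat\theta(Z_m^{\mathrm{ex}}))
&\le \big[R(\tilde\theta(Z_m^{\mathrm{ap}})) + \varepsilon\big] - \big[R(\tilde\theta(Z_m^{\mathrm{ex}})) - \varepsilon\big] \le 2\varepsilon,
\ealignt
using optimality of $Z_m^{\mathrm{ap}}$ for the approximate score in the last step; and the reverse inequality is trivial since $Z_m^{\mathrm{ex}}$ is the exact minimizer. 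Since $\tilde\theta_{M_m+k+1^g} = \hat\theta(Z_m^{\mathrm{ap}})$ and $\hat\theta_{M_m+k+1^g} = \hat\theta(Z_m^{\mathrm{ex}})$ in the lemma's notation — here I am reading the lemma as comparing the \emph{true} ERM obtained after each algorithm's selected points — we get $|R(\tilde\theta_{M_m+k+1^g}) - R(\hat\theta_{M_m+k+1^g})| \le 2\varepsilon \in \mathcal{O}\!\left(\frac{m^2}{(M+(k+1)m)^2}\right)$, since $N+m = M+(k+1)m$.

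The main obstacle is bookkeeping around what exactly $\tilde\theta_{M_m+k+1^g}$ denotes and making sure the approximation constant $B_m$ from Lemma~\ref{lem:m_point_addition_foa} applies at base size $N = M+km$ rather than just at the original $M$ — i.e., that the constant can be taken uniform along the whole greedy trajectory. This needs the assumptions of Section~\ref{subsec:assumptions} to hold uniformly over all subsets, which they do as stated (the bounds on $\|H_\theta^{-1}\|_{op}$, $\|\nabla_\theta\ell\|$, and the Hessian Lipschitz constant are all quantified over all $\{z_i\}\subseteq Z$ and all $\theta\in\Theta$), so $B_m$ is indeed a trajectory-independent constant. A secondary subtlety is whether the approximate algorithm's greedy-within-block selection (choosing $m$ points one at a time by the residual score) actually coincides with the joint minimizer of the first-order approximation over all $m$-blocks; this is exactly the content of Remark~\ref{rem:2} (linearity of the influence correction makes the block score additive in the points), so the per-point greedy and the joint $m$-block argmin agree, and the argument above goes through unchanged.
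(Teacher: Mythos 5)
Your proof is correct and is in fact more complete than the paper's own argument, so it is worth comparing the two. The paper's proof is a two-line telescoping: it writes $R(\tilde{\theta}_{M_m+k+1^g}) - R(\hat{\theta}_{M_m+k+1^g})$ as $\bigl[R(\tilde{\theta}_{M_m+k+1^g}) - R(\tilde{\theta}_{M_m+k})\bigr] + \bigl[R(\tilde{\theta}_{M_m+k}) - R(\hat{\theta}_{M_m+k+1^g})\bigr]$ and then asserts that ``from the definition of the first order approximation'' the two one-step increments agree up to $\mathcal{O}(m^2/(M+km)^2)$ --- an assertion that silently conflates the model-approximation error with the separate fact that the exact and approximate algorithms may select \emph{different} blocks of $m$ points. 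Your argument supplies exactly that missing piece: a uniform-over-candidate-blocks pointwise bound $|R(\hat{\theta}(Z_m)) - R(\tilde{\theta}(Z_m))| \le \varepsilon$ obtained from \cref{lem:m_point_addition_foa} (applied at base size $N = M+km$) together with bounded $\nabla R$, followed by the standard perturbed-argmin inequality giving a $2\varepsilon$ gap between the exact objective values of the two selected blocks, with $\varepsilon \in \mathcal{O}(m^2/(M+(k+1)m)^2)$. Your two side remarks --- that the constant in \cref{lem:m_point_addition_foa} is trajectory-independent because Assumptions \ref{as:non-degenerate}--\ref{as:local_smooth} are quantified over all subsets of $Z$, and that the per-point greedy selection inside Algorithm~\ref{algo:approximate_greedy_m_1_rand} coincides with the joint $m$-block minimizer of the linearized score by \cref{rem:2} --- are precisely the points the paper leaves implicit. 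The only cosmetic discrepancy is the labeling: the lemma statement assigns $\tilde{\theta}$ to the output of the exact algorithm and $\hat{\theta}$ to the approximate one, which is the opposite of your reading (and of the paper's own usage in \cref{thm:optimality_m_point_addition}); since the bound is on an absolute value, this swap is immaterial.
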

Next result we provide for $m = 1$ where we compare the gain of greedy subset selection over randomized selection in two consecutive steps which later we will unroll to get the general result. 
\begin{lemma} \label{lem:m_1_2_cons_compare}
Under the assumptions discussed in the section~\ref{subsec:assumptions}, the gain in $ R $ after two rounds of greedy point selection over two rounds of random selection provided that initial given classifier is $\theta_{M_m}$ which has been trained on $M$ point, can be given as following:
\balignt
R(\tilde{\theta}_{M_m+{2}^g}) - \bbE[ R(\hat{\theta}_{M_m+{2}^r})] \leq \Delta_{\theta_{M_m+1^g}}' + \Delta_{\theta_{M}}'  + \frac{G'm^2}{(M+m)^2},
\ealignt
for some real positive constant $G'$.
\end{lemma}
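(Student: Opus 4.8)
The plan is to express the two-round greedy gain as a telescoping sum over the two individual rounds and then bound each round using the single-round quantity $\Delta'$, absorbing the discrepancy between the exact-greedy and approximate-greedy selections into the error term. Concretely, I would write
\balignt
R(\tilde{\theta}_{M_m+2^g}) - \bbE[R(\hat{\theta}_{M_m+2^r})]
&= \Big( R(\tilde{\theta}_{M_m+2^g}) - \bbE[R(\hat{\theta}_{M_m+1^g+1^r})] \Big) \notag\\
&\quad + \Big( \bbE[R(\hat{\theta}_{M_m+1^g+1^r})] - \bbE[R(\hat{\theta}_{M_m+2^r})] \Big). \notag
\ealignt
The second bracket is, by definition of $\Delta'$ applied at the starting model $\theta_{M_m}$ (taking expectation over the first random draw and noting that the subsequent random round contributes identically on both sides in expectation), at most $\Delta_{\theta_{M}}'$ up to a first-order-approximation error of order $m^2/(M+m)^2$. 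The first bracket is governed by $\Delta'$ evaluated at the intermediate greedy model $\theta_{M_m+1^g}$: after the first greedy step we are comparing one more greedy step against one more random step starting from (approximately) the same model, which is exactly $\Delta_{\theta_{M_m+1^g}}'$, again up to an $\mathcal{O}(m^2/(M+m)^2)$ error.

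The technical work is in controlling the mismatch between the first-order quantities and the true ERM-retrained models. I would invoke Lemma~\ref{lem:exact_vs_approx} to replace each $\hat{\theta}$ (approximate-greedy output) or true retrained model by its first-order surrogate $\tilde{\theta}$ at a cost of $\mathcal{O}(m^2/(M+(k+1)m)^2)$ per substitution, together with the bounded-gradient/bounded-Hessian assumption on $R$ so that these parameter-space errors translate into $R$-value errors of the same order. Summing the constantly many (two rounds, a handful of substitutions) such errors yields a single pooled constant $G'$ with the claimed rate $G' m^2/(M+m)^2$; here I use that $M+2m$, $M+m$, and $M$ are all within constant factors for the regime of interest, so all the denominators can be uniformly replaced by $(M+m)^2$.

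The main obstacle I anticipate is handling the expectation over the random rounds cleanly: when I swap the order "greedy-then-random" versus "random-then-greedy," the intermediate models differ, and I must argue that the \emph{expected} effect of a random round started from two nearby models differs only by a first-order error — this requires that the map $\theta \mapsto \bbE[R(\text{one random update from }\theta)]$ is Lipschitz (which follows from Assumptions~\ref{as:smooth}--\ref{as:local_smooth} plus bounded $\nabla R, \nabla^2 R$) and that the two starting models $\theta_{M_m+1^g}$ and $\bbE$-averaged $\theta_{M_m+1^r}$ are themselves $\mathcal{O}(m/(M+m))$-close, so that composing a Lipschitz map with an $\mathcal{O}(1/(M+m))$ perturbation still lands within the budgeted $\mathcal{O}(m^2/(M+m)^2)$ error. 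Getting the bookkeeping of which error is $\mathcal{O}(1/(M+m))$ versus $\mathcal{O}(1/(M+m)^2)$ right — and making sure the cross terms really are second order — is the delicate part; the rest is assembling the telescoping inequality and collecting constants.
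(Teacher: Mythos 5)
Your decomposition is exactly the paper's: split off $\Delta_{\theta_{M_m+1^g}}'$ as the first bracket, then rewrite the second bracket as $\Delta_{\theta_{M}}'$ plus the difference of two one-random-round increments, $\bigl(\bbE[R(\hat{\theta}_{M_m+{1}^g+1^r})] - R(\hat{\theta}_{M_m+{1}^g})\bigr) - \bigl(\bbE[ R(\hat{\theta}_{M_m+{2}^r})] -\bbE[ R(\hat{\theta}_{M_m+{1}^r})]\bigr)$, and finally pass from $\hat{\theta}_{M_m+2^g}$ to $\tilde{\theta}_{M_m+2^g}$ via Lemma~\ref{lem:exact_vs_approx}. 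You have also correctly located the crux: bounding that difference of increments by $\mathcal{O}(m^2/(M+m)^2)$.

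However, the mechanism you propose for the crux does not deliver the required rate. You argue that the map $\theta \mapsto \bbE[R(\text{one random update from }\theta)]$ is Lipschitz and that the two starting models are $\mathcal{O}(m/(M+m))$-close, and conclude an $\mathcal{O}(m^2/(M+m)^2)$ error. But a map with an $\mathcal{O}(1)$ Lipschitz constant applied to an $\mathcal{O}(m/(M+m))$ perturbation only gives $\mathcal{O}(m/(M+m))$ --- the same order as $\Delta'$ itself, which would make the error term swamp the quantity the lemma is trying to isolate. The missing ingredient is that the one-round \emph{increment} is itself of the form $\frac{m}{M+2m}\,\nabla_\theta R(\theta)^\top H_{\theta}^{-1}\,\bbE\nabla_\theta \ell(z^r,\theta)$ plus an $\mathcal{O}(m^2/(M+2m)^2)$ remainder, so its sensitivity to the starting model carries an extra factor of $\frac{m}{M+m}$: one must expand $\nabla_\theta R(\hat{\theta}_{M_m+{1}^t})^\top H_{\hat{\theta}_{M_m+{1}^t}}^{-1}\nabla_\theta \ell(z_{M+2}^r,\hat{\theta}_{M_m+{1}^t})$ around $\hat{\theta}_M$ for $t\in\{g,r\}$, observe that the $t$-independent leading terms cancel between the two increments, and control the $t$-dependent terms --- in particular $\|H_{\hat{\theta}_{M_m+{1}^g}}^{-1} - H_{\hat{\theta}_{M_m+{1}^r}}^{-1}\| \leq \frac{Gm}{M+m}$ via the Lipschitz-Hessian assumption (Assumption~\ref{as:local_smooth}) together with $\|\hat{\theta}_{M_m+1^g}-\hat{\theta}_{M_m+1^r}\|\in\mathcal{O}(m/(M+m))$ --- so that every surviving term is a product of two $\mathcal{O}(m/(M+m))$ factors. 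This expansion is essentially the entire technical content of the paper's proof (its equations \eqref{eq:lem_m_p_eq6}--\eqref{eq:lem_m_p_eq11}); flagging it as ``the delicate part'' without supplying the cancellation leaves the central step of the lemma unproved.
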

 Lemma~\ref{lem:m_1_2_cons_compare} basically tells that greedy selection has advantage over the random selection after two steps. Now we combine the result from the lemma~\ref{lem:exact_vs_approx} and lemma~\ref{lem:m_1_2_cons_compare} to provide a result on the gain of greedy selection over random selection over $2p$ successive steps.
 \begin{theorem} \label{thm:optimality_m_point_addition}
 If we run our greedy algorithm for $p$ successive steps and all the assumptions discussed in the section~\ref{subsec:assumptions} are satisfied then the gain in $ R $ after $p$ successive greedy steps  over the same number of random selection starting from the classifier $\theta_M$ can be characterized as follows:
 \balignt
 R(\tilde{\theta}_{M_m+{p}^g}) -  \bbE[ R(\hat{\theta}_{M_m+{p}^r})] \leq \sum_{i = 0}^{p-1}\Delta_{\theta_{M_m+i^g}}'  + \sum_{i = 1}^{p-1}\frac{\tilde{G}m^2}{(M+i*m)^2},
 \ealignt
 for some positive constant $\tilde{G}$ where $\tilde{\theta}_{M+{p}^g}$ is the output from the approximate algorithm~\ref{algo:approximate_greedy_m_1_rand}.
 \end{theorem}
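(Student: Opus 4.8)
The plan is to prove the bound by induction on the number of greedy steps $p$, peeling off one step per inductive stage and using Lemmas~\ref{lem:exact_vs_approx} and~\ref{lem:m_1_2_cons_compare} as the per-step building blocks. The base case is $p=2$, which is exactly Lemma~\ref{lem:m_1_2_cons_compare}: there $\sum_{i=0}^{1}\Delta_{\theta_{M_m+i^g}}' = \Delta_{\theta_M}' + \Delta_{\theta_{M_m+1^g}}'$ and the single error term $G'm^2/(M+m)^2$ matches $\sum_{i=1}^{1}\tilde{G}m^2/(M+im)^2$. Lemma~\ref{lem:exact_vs_approx} is invoked wherever the definition of $\Delta'$ refers to an exact-greedy iterate but the theorem (and the algorithm actually run) produces the approximate-greedy iterate $\tilde{\theta}$; the resulting discrepancy is $\mathcal{O}(m^2/(M+im)^2)$ per step and is absorbed into the error sum.

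For the inductive step, assuming the statement for $p-1$, I would decompose
\begin{align*}
R(\tilde{\theta}_{M_m+p^g}) - \bbE[R(\hat{\theta}_{M_m+p^r})]
&= \underbrace{\left( R(\tilde{\theta}_{M_m+p^g}) - \bbE[R(\hat{\theta}_{M_m+1^g+(p-1)^r})] \right)}_{(\mathrm{I})} \\
&\quad + \underbrace{\left( \bbE[R(\hat{\theta}_{M_m+1^g+(p-1)^r})] - \bbE[R(\hat{\theta}_{M_m+p^r})] \right)}_{(\mathrm{II})},
\end{align*}
where $\hat{\theta}_{M_m+1^g+(p-1)^r}$ is the model obtained by one greedy step from $\theta_M$ followed by $p-1$ random steps. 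Term $(\mathrm{I})$ is precisely a ``$p-1$ greedy versus $p-1$ random'' comparison started from $\theta_{M_m+1^g}$, which is trained on $M+m$ points, so the induction hypothesis (with $M$ replaced by $M+m$) bounds it; re-indexing $\theta_{(M+m)_m+i^g}=\theta_{M_m+(i+1)^g}$ turns this into $\sum_{i=1}^{p-1}\Delta_{\theta_{M_m+i^g}}' + \sum_{i=2}^{p-1}\tilde{G}m^2/(M+im)^2$. For term $(\mathrm{II})$, the two trajectories share their tail of $p-1$ random steps and differ only in whether the first step is greedy or random; by the tower property this reduces to bounding $\bbE[\Phi_{p-1}(\theta_{M_m+1^g})] - \bbE[\Phi_{p-1}(\theta_{M_m+1^r})]$, where $\Phi_{p-1}(\theta)$ denotes the expected value of $R$ after $p-1$ random steps launched from $\theta$. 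Using the first-order loss expansion~\eqref{eq:loss_m_points} and the additivity noted in Remark~\ref{rem:2}, one writes $\Phi_{p-1}(\theta) = R(\theta) + (\text{a term Lipschitz in }\theta\text{ with a controlled constant}) + \mathcal{O}\!\big(\sum_i m^2/(M+im)^2\big)$; since $\|\theta_{M_m+1^g}-\theta_{M_m+1^r}\| = \mathcal{O}(m/(M+m))$ by Lemmas~\ref{lem:1_point_addition_foa} and~\ref{lem:m_point_addition_foa}, term $(\mathrm{II})$ equals $\Delta_{\theta_M}' + \mathcal{O}(m^2/(M+m)^2)$. Summing the bounds for $(\mathrm{I})$ and $(\mathrm{II})$ and folding the $\mathcal{O}(m^2/(M+m)^2)$ into the $i=1$ summand yields the claimed inequality; all constants are collected into $\tilde{G}$. (Equivalently, one can bypass the induction and telescope directly through the hybrid trajectories $\hat{\theta}_{M_m+k^g+(p-k)^r}$, $k=0,\dots,p$, treating each consecutive difference exactly as term $(\mathrm{II})$ with $\theta_{M_m+(k-1)^g}$ in place of $\theta_M$.)

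I expect the crux to be the control of term $(\mathrm{II})$ — the ``re-anchoring'' cost, i.e.\ showing that a block of random steps changes $R$ by essentially the same amount whether it starts from the greedy iterate $\theta_{M_m+k^g}$ or from a random iterate only $\mathcal{O}(m/(M+km))$ away, with the residual discrepancy at the lower order $\mathcal{O}(m^2/(M+im)^2)$ rather than the first-order $\mathcal{O}(m/(M+im))$. This is where Assumptions~\ref{as:non-degenerate} and~\ref{as:local_smooth} are essential: they bound the $\theta$-sensitivity of the influence quantity $\nabla_\theta\ell(z',\theta)^\top H_\theta^{-1}\nabla_\theta\ell(z,\theta)$ and guarantee that feeding the same new points into two nearby models keeps them nearby (a weak-contraction property of the retraining map), so the accumulated error over the remaining steps stays controlled. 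The bounded-gradient/bounded-Hessian hypotheses on $R$ (as used in Lemma~\ref{lem:exact_vs_approx}) then translate these parameter-space estimates into estimates on $R$. The remaining work — re-indexing the two sums and consolidating constants — is routine.
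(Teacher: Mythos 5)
Your decomposition is genuinely different from the paper's, and the difference is not cosmetic: you peel off the \emph{first} greedy step and propagate its effect forward through the remaining $p-1$ random steps (your term $(\mathrm{II})$), whereas the paper peels off the \emph{last} step, writing the cross term as a difference of two \emph{single} random-step increments, $\bbE[R(\hat{\theta}_{M_m+(p-1)^g+1^r})]-R(\hat{\theta}_{M_m+(p-1)^g})$ versus $\bbE[R(\hat{\theta}_{M_m+p^r})]-\bbE[R(\hat{\theta}_{M_m+(p-1)^r})]$, and then recursing on $R(\hat{\theta}_{M_m+(p-1)^g})-\bbE[R(\hat{\theta}_{M_m+(p-1)^r})]$. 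In the paper's cross term both increments carry the same weight $\tfrac{m}{M+pm}$, so their first-order parts cancel and only the proximity of the two anchor models enters.

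The gap sits exactly at the step you yourself flag as the crux. The claim that term $(\mathrm{II})$ equals $\Delta'_{\theta_M}+\mathcal{O}\left(m^2/(M+m)^2\right)$ cannot hold for general $p$. By the first-order expansion, the effect on $R$ of swapping the first block $z^g_{M_m+1}$ for $z^r_{M_m+1}$ inside the \emph{final} model (trained on $M+pm$ points) carries weight $\tfrac{m}{M+pm}$, whereas $\Delta'_{\theta_M}$ is the same directional quantity carrying weight $\tfrac{m}{M+m}$; hence term $(\mathrm{II})\approx\tfrac{M+m}{M+pm}\,\Delta'_{\theta_M}$. Since $\Delta'_{\theta_M}\le 0$, this exceeds $\Delta'_{\theta_M}$ by $\tfrac{(p-1)m}{M+pm}\,|\Delta'_{\theta_M}|=\mathcal{O}\left((p-1)m^2/M^2\right)$: the gain from the first step is \emph{diluted} by the later additions, and the dilution is first order in $(p-1)m/M$, not a second-order remainder. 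Seen through your own Lipschitz argument, $\Phi_{p-1}(\theta)-R(\theta)$ is a sum of $p-1$ increments each of size $\mathcal{O}(m/M)$, so its Lipschitz constant in $\theta$ is $\mathcal{O}((p-1)m/M)$; multiplied by $\|\theta_{M_m+1^g}-\theta_{M_m+1^r}\|=\mathcal{O}(m/(M+m))$ this again gives $\mathcal{O}\left((p-1)m^2/M^2\right)$ per inductive level, which accumulates to order $p^2m^2/M^2$ over the induction --- weaker than the stated $\sum_{i=1}^{p-1}\tilde{G}m^2/(M+i*m)^2$. To recover the theorem as stated you would need either to switch to the paper's last-step peeling (so that the two increments in the cross term carry matching weights and the leading terms cancel) or to redefine the per-step gains so that they absorb the dilution factors $\tfrac{M+(k+1)m}{M+pm}$.
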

\small
\begin{remark}
In the above theorem for $m=1$, the term $\Delta_{\theta_{M+i^g}}'$ is roughly in the order of $\mathcal{O} \left( \frac{1}{M+i}\right)$ and it is negative because it signifies the decrease in risk at each step. Hence, more often than not  $\Delta_{\theta_{M+i^g}}'$ would dominate the term  $\frac{\tilde{G}}{(M+i)^2}$ whenever there are influential points remaining in the set. However, $\Delta_{\theta_{M+i^g}}'$ would decrease over time due to two reasons: (i) Because $\frac{1}{M+i}$ goes down as $i$ increases. (ii) Because number of influential points decreases as you select more and more points.  
\end{remark}
\normalsize
%\begin{figure*}[!ht]
% \includegraphics[width=0.24\linewidth]{}
%  \includegraphics[width=0.24\linewidth]{}
%  \includegraphics[width=0.24\linewidth]{}
%  \includegraphics[width=0.24\linewidth]{}
%  \caption{We provide 4 interesting outcome of our proposed algorithm. Left most plot is for USPS dataset when the kernel parameter does not match, even in this situation the greedy algorithm tries to  find the points which still performs much better than random sampling. In the second left scenario the test error seems to be better behaved for the greedily chosen subset because our method tend to ignore the outliers. In the second right most image we show that how the subset chosen for one model behaves when evaluated on the other model when there is a model mismatch, $r$ represents the kernel bandwidth in this case. Right most plot shows the gain of greedy algorithm over random sampling when the number of samples are large.  }
%  \vspace{-1em}
%\end{figure*}

However, if we consider $m=1$, our method does not have any optimal guarantee with respect to the best subset of size $M+i$ chosen in $i_{th}$ iteration but here below we provide a small result which says that it might be still reasonable thing to do when someone does not want to do combinatorial  search in each iteration. Let us first denote the notation for the next theorem. $R^\star_M$ denotes the optimal risk for the the best set of $M$ points from the training set and $\hat{Z}_M$ denotes the best subset. $\hat{Z}_{M\backslash z^w}$ denotes removing worst performing element from the set and then $\hat{Z}_{M\backslash z^w \cup z}$ denotes adding $z$ afterwards.
\begin{lemma} \label{lem:compare_optimal}
If the initial model is represented with $\theta_M$ and $R(\theta_M) \leq \nu R^\star_M + \delta$, then 
\balignt
R(\theta_{M+1^g}) \leq \nu R^\star_M + \delta + \Delta''_{\theta_M}  + \mathcal{O} \left(  \frac{1}{M+1}\right)
\ealignt
\end{lemma}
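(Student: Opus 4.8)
The plan is to reduce the statement to a uniform one-step stability bound for the objective $R$ and then combine it with the definition of $\Delta''_{\theta_M}$ and the hypothesis on $\theta_M$. Recall that for $m=1$ the quantity $\Delta''_{\theta_M}$ is defined by $\Delta''_{\theta_M} = R(\theta_{M+1^g}) - R(\theta_{M+1^w})$, where $\theta_{M+1^w}$ is the model obtained by appending to the current $M$-sample the single point that \emph{maximizes} the post-training risk. This gives the exact identity $R(\theta_{M+1^g}) = R(\theta_{M+1^w}) + \Delta''_{\theta_M}$, so it suffices to prove the single inequality $R(\theta_{M+1^w}) \le R(\theta_M) + \mathcal{O}(1/(M+1))$; plugging in the hypothesis $R(\theta_M) \le \nu R^\star_M + \delta$ then yields the lemma.

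First I would bound how far the trained parameter can move when one arbitrary point is appended to the current $M$-sample. I would apply Lemma~\ref{lem:1_point_addition_foa} with the appended point taken to be the worst point $z^w_{M+1}$; its proof uses only that the added point lies in $Z$, not that it was chosen greedily or uniformly, so the conclusion is available here. Writing $\theta_{M+1^w} = \theta_M + \frac{1}{M+1} BIF(Q_M^w; \hat\theta, P^M) + \rho$ with remainder $\|\rho\| \le B_1/(M+1)^2$, and using $\|BIF(Q_M^w;\hat\theta,P^M)\| = \|H_{\theta_M}^{-1}\nabla_\theta \ell(z^w_{M+1},\theta_M)\| \le C_{op} G$ from Assumptions~\ref{as:non-degenerate} and~\ref{as:bounded}, the triangle inequality gives $\|\theta_{M+1^w} - \theta_M\| \le \frac{C_{op}G}{M+1} + \frac{B_1}{(M+1)^2} = \mathcal{O}(1/(M+1))$. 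The key feature is that this bound is uniform in the choice of appended point, since $C_{op}$, $G$, and $B_1$ do not depend on the sample.

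Next I would transfer this parameter-space estimate to $R$. Under the bounded-gradient hypothesis on $R$ used in Lemma~\ref{lem:exact_vs_approx}, $R$ is Lipschitz with some finite constant $G_R$, so $|R(\theta_{M+1^w}) - R(\theta_M)| \le G_R \|\theta_{M+1^w} - \theta_M\| = \mathcal{O}(1/(M+1))$ (a first-order Taylor expansion using the bounded-Hessian hypothesis on $R$ would give the same order with an explicit linear term). Combining with the identity above, $R(\theta_{M+1^g}) = R(\theta_{M+1^w}) + \Delta''_{\theta_M} \le R(\theta_M) + \Delta''_{\theta_M} + \mathcal{O}(1/(M+1)) \le \nu R^\star_M + \delta + \Delta''_{\theta_M} + \mathcal{O}(1/(M+1))$, which is the claim. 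If $\theta_{M+1^g}$ is the output of the approximate greedy rule (Algorithm~\ref{algo:approximate_greedy_m_1_rand}) rather than the exact one, Lemma~\ref{lem:exact_vs_approx} adds only a further $\mathcal{O}(1/(M+1)^2)$ term, which is absorbed.

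The main obstacle is not any individual estimate but ensuring the $\mathcal{O}(1/(M+1))$ constant is genuinely independent of which point is appended, so that it legitimately covers the worst-case point $z^w_{M+1}$; this is exactly what the uniform constants $C_{op}$ (Assumption~\ref{as:non-degenerate}), $G$ (Assumption~\ref{as:bounded}), $B_1$ (Lemma~\ref{lem:1_point_addition_foa}), and the global Lipschitz constant of $R$ provide. A secondary subtlety worth flagging is that $R^\star_M$ is defined over size-$M$ subsets while $\theta_{M+1^g}$ is trained on $M+1$ points, so the comparison is only sound because we route it through the size-$M$ quantity $R(\theta_M)$ rather than comparing the $(M+1)$-point model directly against the size-$M$ optimum; the notation $\hat Z_{M\setminus z^w}$ and $\hat Z_{M\setminus z^w\cup z}$ is available to recast the argument as a swap within the class of size-$M$ subsets if one prefers that framing, but it is not needed for the route above.
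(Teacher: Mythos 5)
Your proof is correct and follows essentially the same route as the paper's: both split $R(\theta_{M+1^g})$ as $\Delta''_{\theta_M} + R(\theta_{M+1^w})$, control the one-step change $R(\theta_{M+1^w}) - R(\theta_M)$ by $\mathcal{O}\left(\frac{1}{M+1}\right)$ via the first-order approximation, and then invoke the hypothesis $R(\theta_M) \leq \nu R^\star_M + \delta$. The only real difference is that the paper's displayed chain also carries terms involving $R^\star_{M+1}$ and $R_{\hat{Z}_{M+1\backslash z^w}}$ that do not survive into the stated bound, whereas you go directly to the claim; your explicit check that the $\mathcal{O}\left(\frac{1}{M+1}\right)$ constant is uniform over the appended point (via $C_{op}$, $G$, $B_1$) is a detail the paper leaves implicit.
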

Similar to the previous theorem, the result can be obtained for a larger number of iterations.
\small
\begin{remark}
In the above result the term $\Delta''_{\theta_M} $ is negative which is the difference in the function value of $R$ after adding best and worst point in the training set. Hence this term is supposed to have high magnitude however the other order $\mathcal{O}(\frac{1}{M+1})$ comes from the difference. 
\end{remark}
\normalsize
%\paragraph{On Optimality Guarantee for $\epsilon$-Greedy Algorithm~\ref{algo:approximate_greedy_m_1_rand}:}We provide details in the Appendix. 

%Similar optimal guarantee as in theorem~\ref{thm:optimality_1_point_addition} can be obtained for the $m-$point addition case. However, nothing in the proof technique changes. %We put this result and its proof in the Appendix~\ref{ap:analysis}. \anant{todo}

\section{Experiments} \label{sec:expts}
We now compare our subset selection strategy with random sampling on a variety of classification and regression tasks. For the classification tasks we use the Amazon.com employee access Kaggle competition dataset \cite{liu2017mlbench} (32,769 sample points, 135 features, 2 classes) and the MNIST handwritten digits dataset (70,000 sample points, 784 features, 10 classes). For regression, we use the Boston housing dataset \cite{harrison1978hedonic} (506 sample points, 13 features) and the California housing dataset \cite{pace1997sparse} (20,640 sample points, 8 features).

We evaluate our method in three settings: (i) using the same model to subsample the data and to evaluate performance; (ii) using different models; (iii) using different models as part of a hyperparameter tuning task. Our initial model was trained on $d$ number of datapoints where $d$ represents the dimension. 

\begin{figure}
  \begin{subfigure}{.33\textwidth}
    \centering
    \includegraphics[width=.98\linewidth]{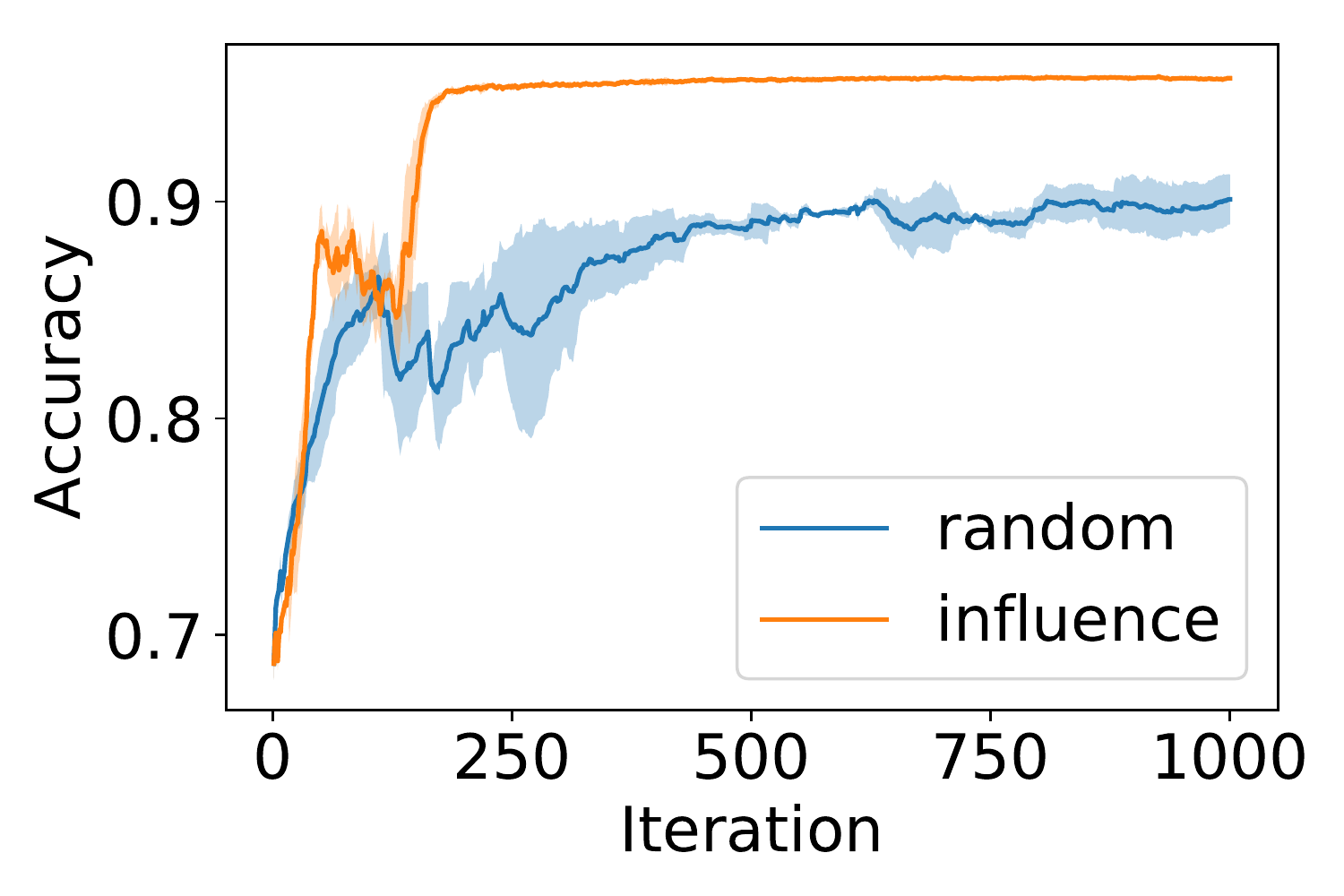}
    \caption{Amazon (Logistic regression) }
    \label{fig:select_fig1}
  \end{subfigure}%
  \hfill
  \begin{subfigure}{.33\textwidth}
    \centering
    \includegraphics[width=.98\linewidth]{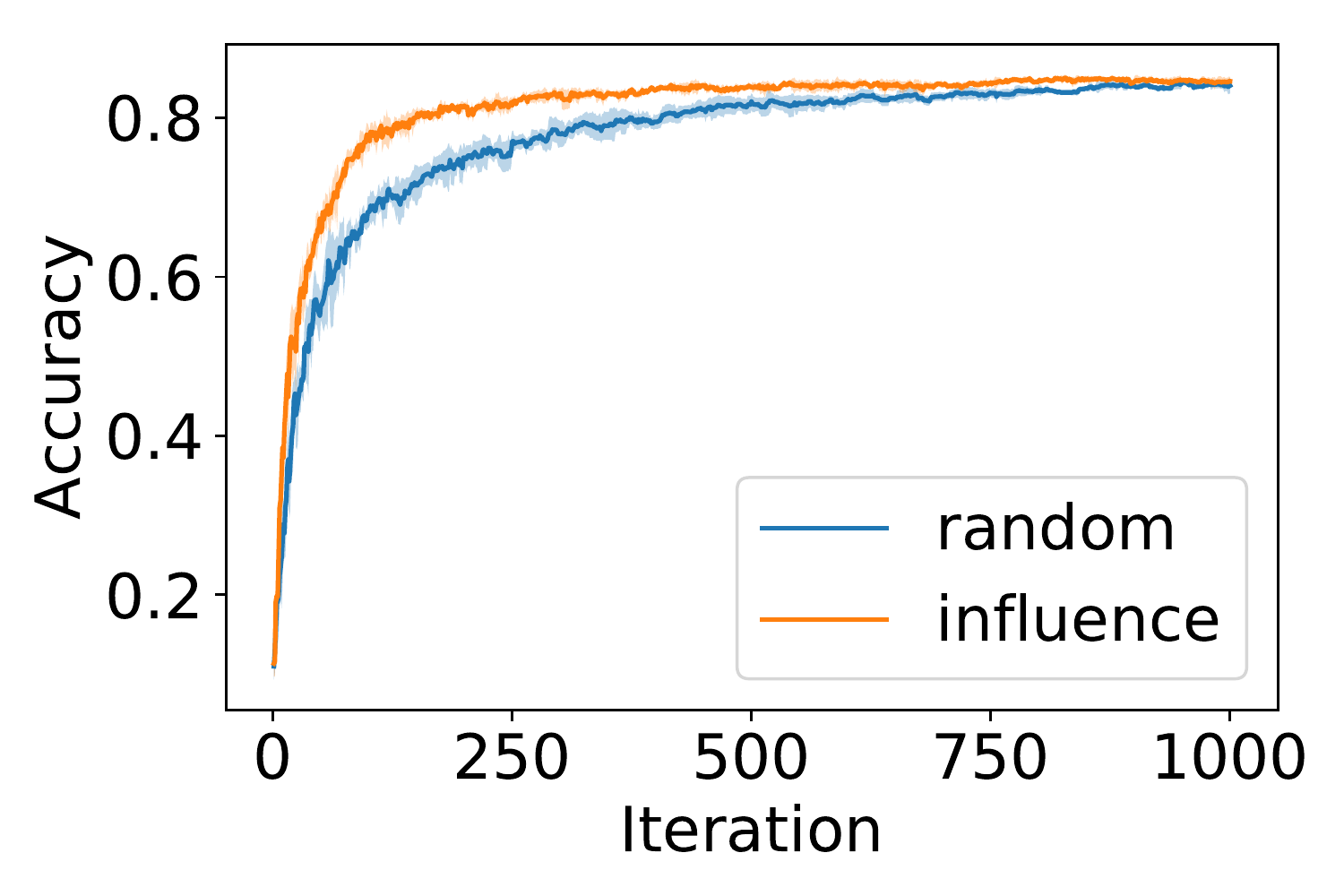}
    \caption{MNIST (Logistic regression)}
    \label{fig:select_fig2}
  \end{subfigure}
  \hfill
  \begin{subfigure}{.33\textwidth}
    \centering
    \includegraphics[width=.98\linewidth]{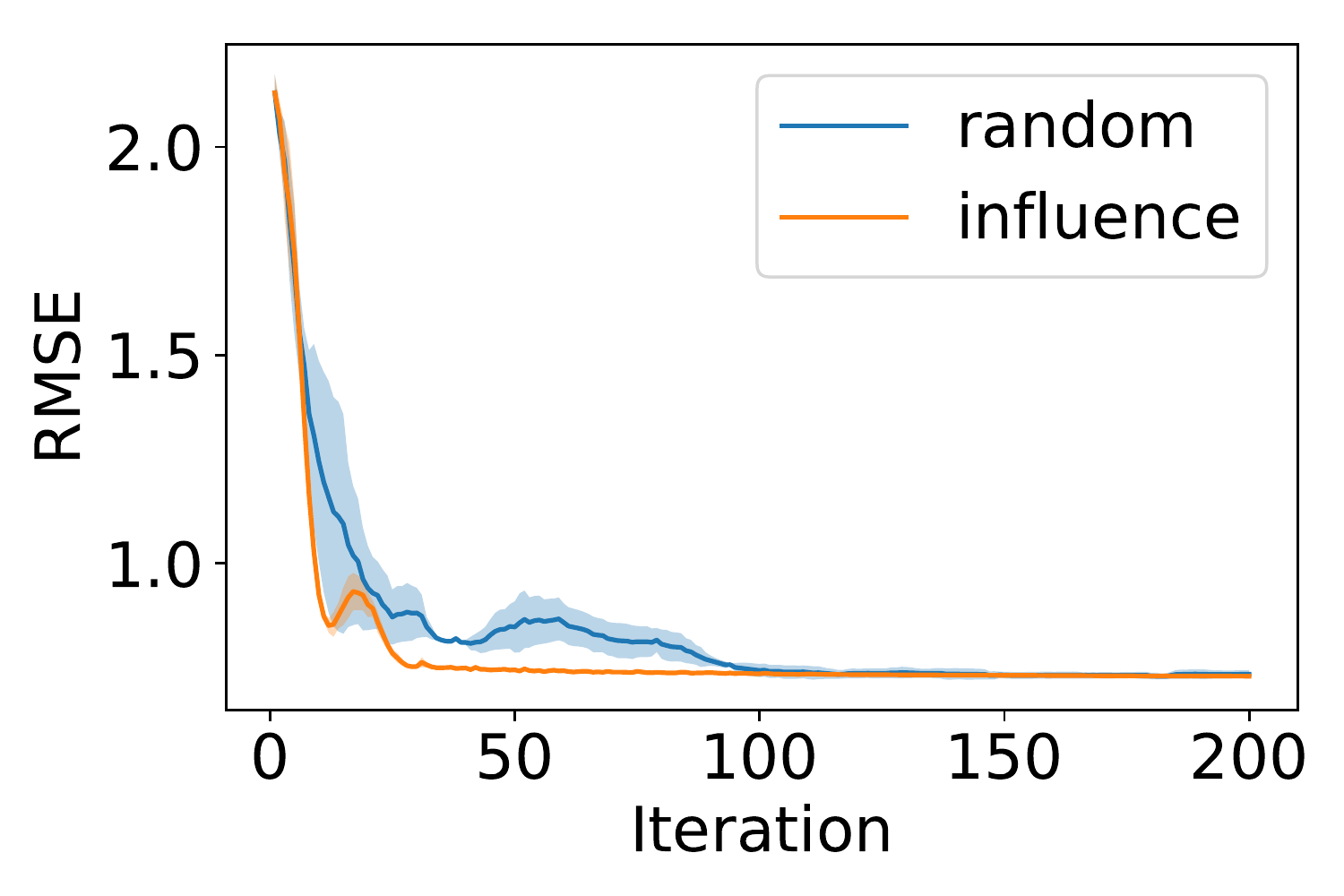}
    \caption{California (Linear regression)}
    \label{fig:select_fig3}
  \end{subfigure}%

  \begin{subfigure}{.33\textwidth}
    \centering
    \includegraphics[width=.99\linewidth]{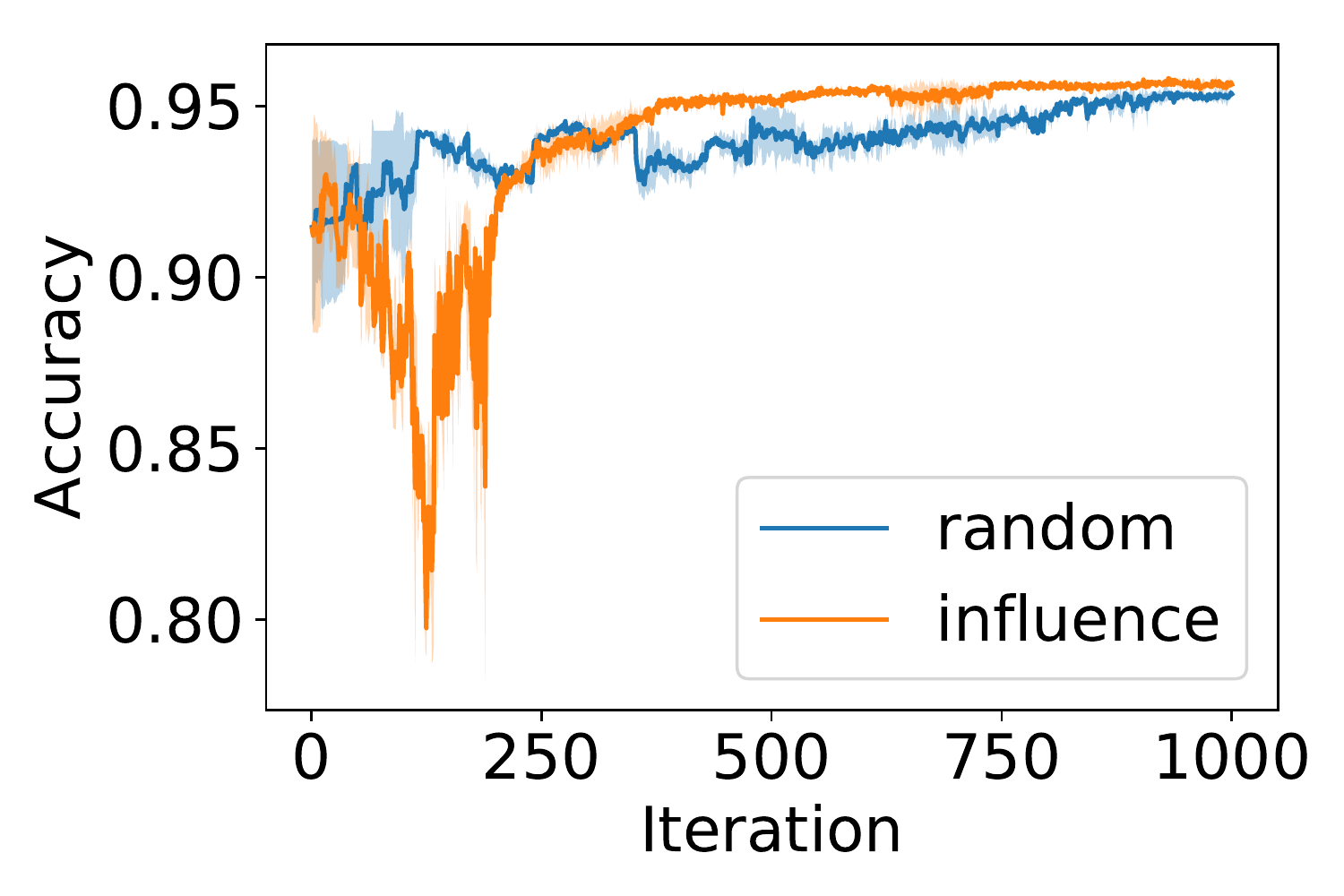}
    \caption{Amazon (LGBM) Selection}
    \label{fig:LGBT_select}
  \end{subfigure}%

 \begin{subfigure}{.33\textwidth}
    \centering
    \includegraphics[width=.99\linewidth]{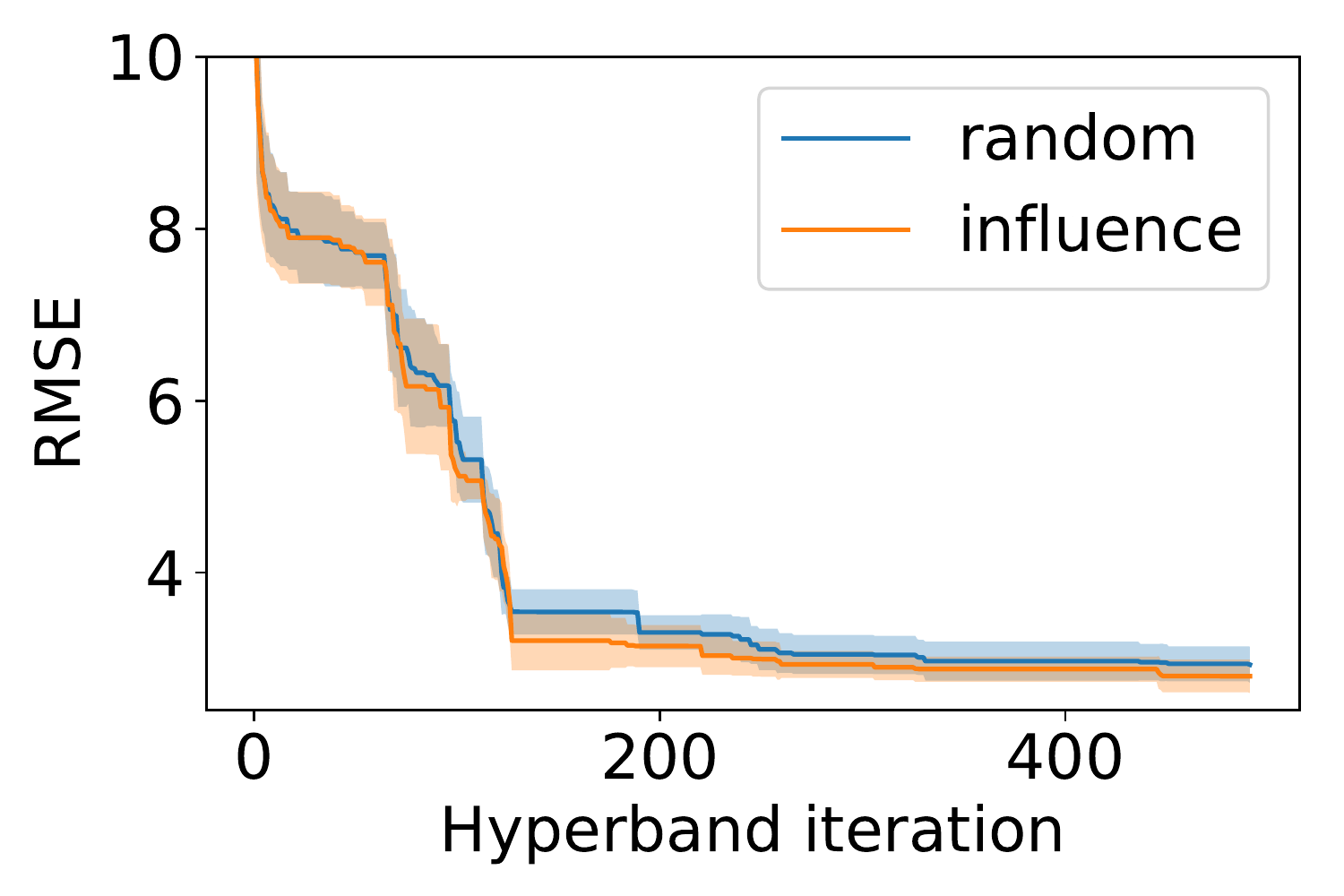}
    \caption{Boston Hyperband (RMSE)}
    \label{fig:boston_hband_iter}
  \end{subfigure}%  
  \hfill
  \begin{subfigure}{.33\textwidth}
    \centering
    \includegraphics[width=.99\linewidth]{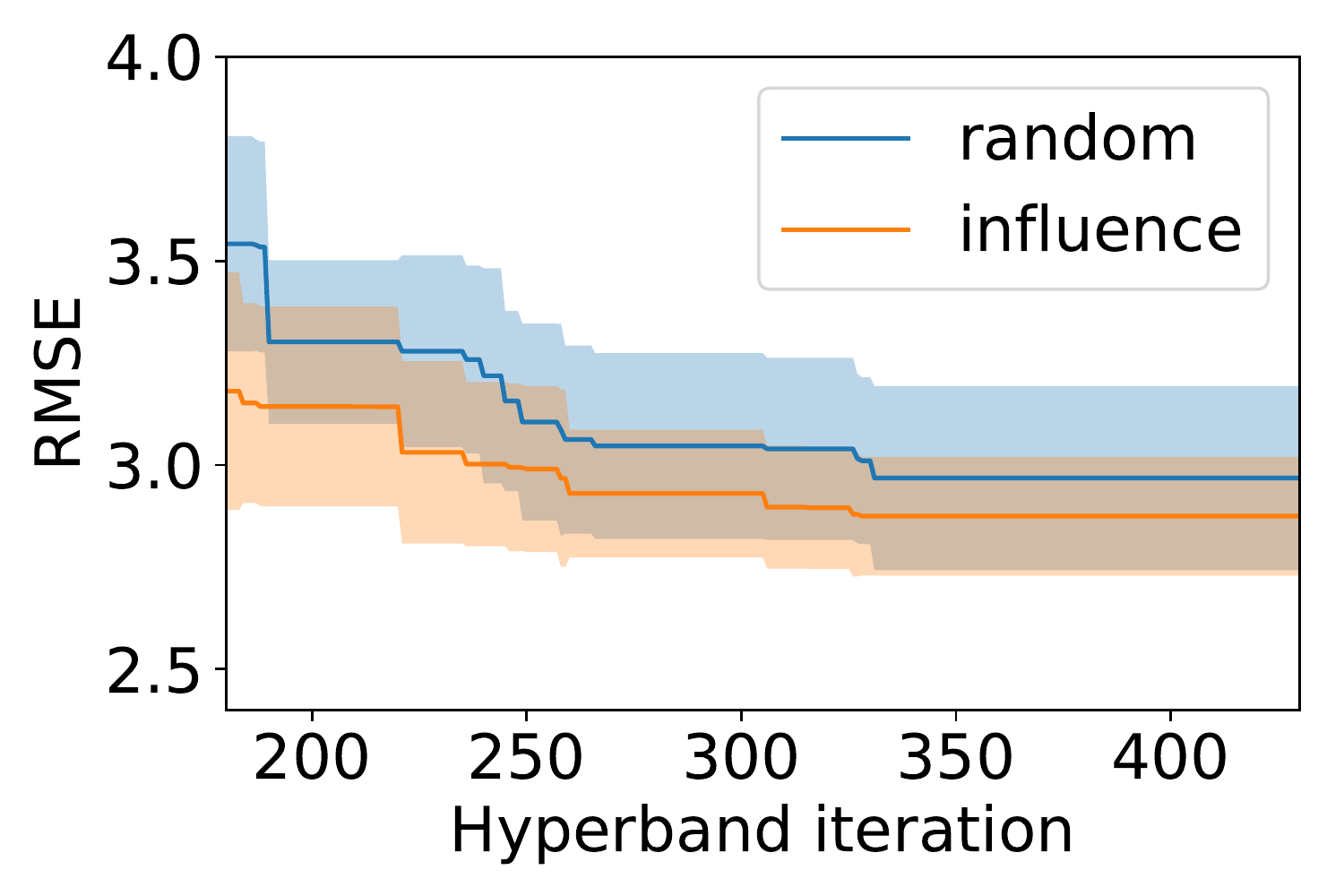}
    \caption{Boston Hyperband (Zoomed in)}
    \label{fig:zoom_boston1}
  \end{subfigure}%
\hfill
\begin{subfigure}{.33\textwidth}
    \centering
    \includegraphics[width=.99\linewidth]{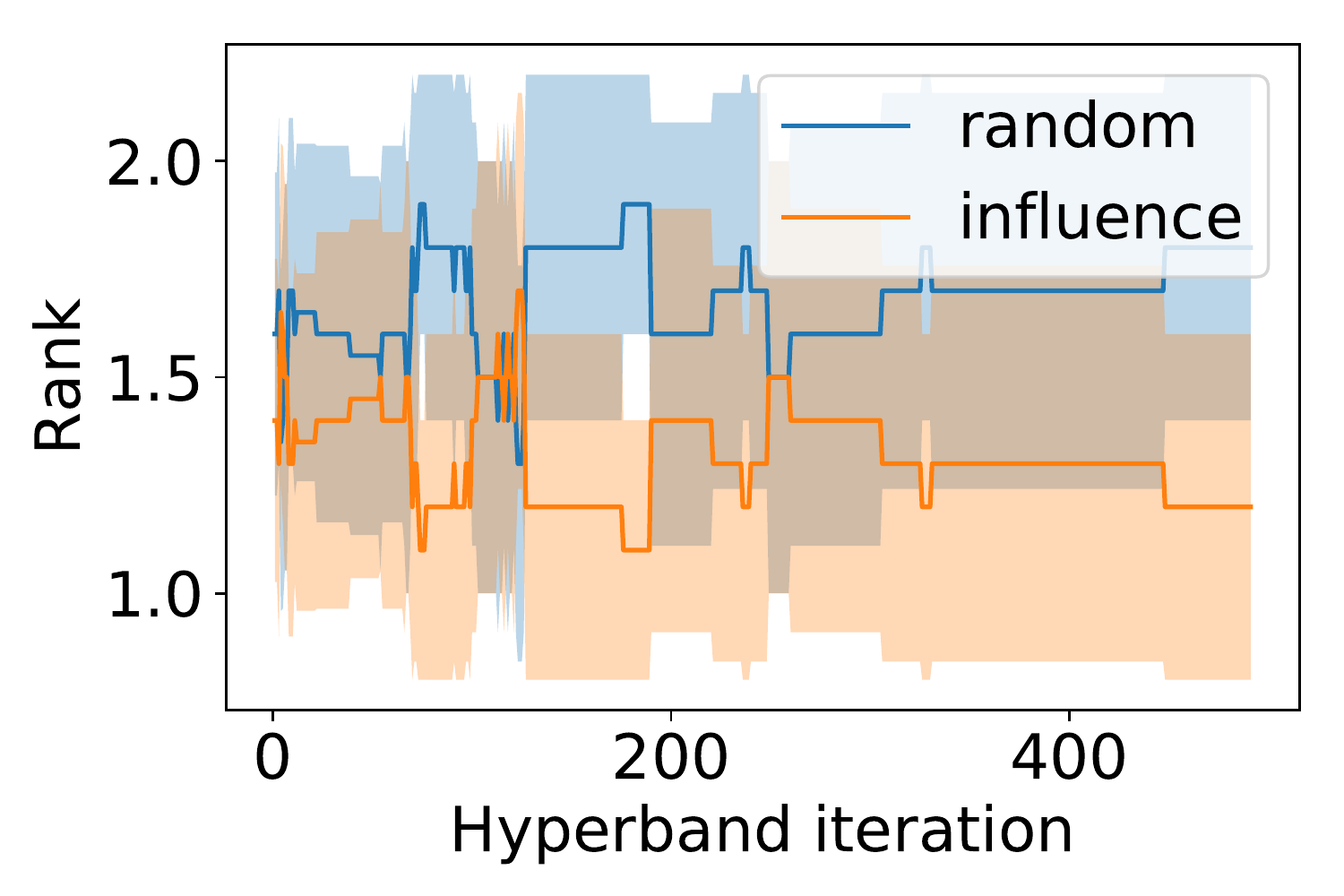}
    \caption{Boston Hyperband (Rank)}
    \label{fig:boston_hband_rank}
  \end{subfigure}%

  \caption{Experimental results (see \cref{sec:expts})} %\notate{is this test set accuracy?  validation set?}}
  \label{fig:selectfig}
  \end{figure}

\paragraph{Rapid assessment of model performance.} For this task, we consider the \textit{Amazon}, \textit{MNIST} and \textit{California} housing datasets. We run subsample selection using logistic regression for the classification tasks and linear regression for the regression task. At each iteration we add $m=1$ sample point, running for 1000 iterations for \textit{Amazon} and \textit{MNIST} and for 200 iteration for \textit{California}. For each dataset, we hold out 20\% of the data as a test set and we further split the remaining data in a training set (80\%) and a validation set (20\%). 
We run each experiment 10 times with different random seeds (0 through 9) and report the mean and standard deviation across seeds. As shown in Figures~\ref{fig:select_fig1},~\ref{fig:select_fig2} and \ref{fig:select_fig3} our subsampling approach significantly outperforms random selection in all the tasks we consider.
%Guolin Ke, Qi Meng, Thomas Finley, Taifeng Wang, Wei Chen, Weidong Ma, Qiwei Ye, Tie-Yan Liu. "LightGBM: A Highly Efficient Gradient Boosting Decision Tree". Advances in Neural Information Processing Systems 30 (NIPS 2017), pp. 3149-3157. cite
\paragraph{Model transfer: rapid assessment of other models' performance.} The goal of this experiment is to directly show that selected subsets of the samples using our proposed greedy approach for a fixed model can generalize to other models as well. For this experiment, we consider the \textit{Boston} dataset and follow the experimental procedure described above. We use a linear regression model to subsample the data and a gradient boosted regression tree (GBRT) to evaluate performance. We use the LightGBM implementation of GBRTs with default parameters.
As shown in Figure~\ref{fig:LGBT_select}, our subsampling approach outperforms random selection even when the models used to select the subsamples and to analyze the data are different (and in this case, the latter is non-differentiable).
\paragraph{Improving hyperparameter tuning.} The goal of this experiment is to assess whether improvements in subsample selection lead to improvements in hyperparameter tuning tasks that rely on subsampling. For this experiment, we consider the \textit{Boston} dataset and tune the hyperparameters of a random forest. Specifically we, tune the number of trees (5 to 20), the maximum number of features (1\% to 100\%), the minimum number of samples on which to split (2 to 11), the minimum number of samples on a leaf (2 to 11) and whether to bootstrap the trees or not. We run Hyperband~\cite{li2017hyperband} with default parameters cycling through $\eta \in [2, 5]$. Figure~\ref{fig:boston_hband_iter} shows the root mean squared error (RMSE) as a function of the number of hyperband iterations and Figure~\ref{fig:boston_hband_rank} shows the relative ranks of the two subsampling methods we consider (lower is better). In Figure~\ref{fig:zoom_boston1}, we showed the zoomed version of Figure~\ref{fig:boston_hband_iter} to closely look for the gain. Our approach consistently outperforms the random sampling approach from the very first few Hyperband iterations.

%\notate{May want to organize all of the images for the paper into one figure with lots of subfigures.  Then we can explain what's going on in the caption}
%\begin{figure}
%\begin{subfigure}{.35\textwidth}
%  \centering
%  \includegraphics[width=.99\linewidth]{images/boston_hyperband_iterations.pdf}
%  \caption{Boston Housing: root mean squared error as a function of the Hyperband iteration.  \notate{Could add a third plot that zoomed in on the later iterations so we can see the magnitude of the separation?}}
%  \label{fig:hypeband_sfig_boston1}
%\end{subfigure}%
%\begin{subfigure}{.35\textwidth}
%  \centering
%  \includegraphics[width=.99\linewidth]{images/boston_hyperband_ranks.pdf}
%  \caption{Boston Housing: relative ranks between the random and greedy subsample selection approaches. }
%  \label{fig:hypeband_sfig_boston2}
%\end{subfigure}
%\caption{Plots of Hyperband Experiment.}
%\label{fig:hyperband}
%\end{figure}
\section{Conclusion and Future Work}
In this paper, we have proposed an efficient procedure for greedy selection of subsets of training points from large datasets in an empirical risk minimization setting. A promising future direction  is to extend our work to a streaming data setting, where points must be selected in an online manner. It would be also interesting to extend our work to subset selection for multiple models, showing that we can select subsets that work well for different models being trained and compared on the same dataset.

 ﻿\bibliographystyle{abbrvnat}
\bibliography{subset_influence}

\newpage
\onecolumn
\appendix
\begin{center}
{\centering \LARGE Appendix }
\vspace{1cm}
\sloppy

\end{center}
% !TEX root = subset_selection_2019.tex  

Before moving to proof of the  technical theorems and Lemmas, we reiterate our previous notations as well as define some new notations. 

\paragraph{Notations:} A subset of $M$ data points $\{z_i\}_{i=1}^M$ are being expanded. An additional point selected using proposed greedy approach and using uniform random sampling is denoted by $z_{M+1}^g$  and $z_{M+1}^r$  respectively.  The point mass distribution on this new point is $Q_{M}^g =  \delta_{z_{M+1}^g}$ or $Q_{M}^r =  \delta_{z_{M+1}^r}$ respectively. 

Let $P^M$ be the empirical distribution on $\{z_i\}_{i=1}^M$. 
The new distribution  after adding a point is \small $P^M_{\epsilon_M,Q_M^t} = \frac{1}{M+1} \sum_{i = 1}^{M}\delta_{z_i} + \epsilon_M \delta_{z_{M+1}^t}$ \normalsize{for} \small $t \in \{ r,g\}$ \normalsize where \small$\epsilon _M = \frac{1}{M+1}$\normalsize. So, \small $P^M_{\epsilon_M, Q_M^t} = (1 - \epsilon_M) P^M + \epsilon_M Q_M^t$\normalsize.

%We have provided the expression for a Taylor series expansion of the operator $T$ in equation~\eqref{eq:Taylor_bif}.
For adding $m$ new points at a time, we denote one block of points as \small$z^t_{M_m+1}$ \normalsize which consists $[z^t_{M+i}]_{i =1}^m$ where \small$t \in \{r,g\}$\normalsize. We write \small$Q_{M_m}^t = \frac{1}{m} \sum_{i = 1}^{m}\delta_{z_{M+i}^t} $ \normalsize and \small $P^M_{\epsilon_{M_m},Q_{M_m}^t} = \frac{1}{M+m} \sum_{i =1}^{M}\delta_{z_{i}} + \frac{1}{M+m} \sum_{i =1}^m \delta_{z_{M+i}^t} $\normalsize. We can see that \small$P^M_{\epsilon_{M_m}, Q_M^t} = (1 - \epsilon_{M_m}) P^M + \epsilon_{M_m} Q_{M_m}^t$\normalsize where \small $\epsilon_{M_m} = \frac{m}{M+m}$\normalsize. 

%As discussed, we use influence functions to approximate how the addition of one or more new points to a subsample affects a model trained on that subsample via empirical risk minimization (see definition in Problem \ref{prob:subsamp}). %The training procedure corresponds to the operator $T$ in Definition \ref{def:influence_fun}.
 \small $ \hat{\theta}_M \eqdef \hat{\theta}(P^M)$ \normalsize denotes the initial model trained on \small $\{z_i\}_{i=1}^M$\normalsize and  \small $\hat{\theta}_{M+1^t} = \hat{\theta}(P^M_{\epsilon_M,Q_M^t})$\normalsize ~denote the model trained on \small$P^M_{\epsilon_M,Q_M^t}$\normalsize. We extend this notation, e.g., letting \small$\hat{\theta}_{M+k_1^g+k_2^r+k_3^g}$\normalsize ~ denote the model which is trained by first adding $k_1$ points greedily in sequence, starting at the model \small$\hat{\theta}_{M}$\normalsize, then adding $k_2$ points randomly at  \small$\hat{\theta}_{M+k_1^g}$\normalsize ~ and finally adding $k_3$ points greedily starting at \small $\hat{\theta}_{M+k_1^g+k_2^r}$. $\hat{\theta}_{M_m+k_1^g+k_2^r+k_3^g}$\normalsize ~ can be defined similar to \small $\hat{\theta}_{M+k_1^g+k_2^r}$ \normalsize  where single point is replaced with the block of $m$ points.

\section{First Order Model Approximation} \label{ap:foa}

%\Cam{I reworked part of this proof and left comments. Please check. Generally, need to *explicitly  state* where the assumptions we make are being used.}
\begin{lemma}[Lemma~\ref{lem:1_point_addition_foa}]
Let $\tilde{\theta}_{M+1^t} = \hat{\theta}_M + \frac{1}{M+1} BIF(Q_M^t;\hat{\theta},P^M)$ be the first order approximation of the estimator $\hat{\theta}_{M+1^t} $ as in \eqref{eq:first_order_approx_T}. Under Assumptions~\ref{as:smooth}, \ref{as:non-degenerate}, \ref{as:bounded},  and \ref{as:local_smooth}, $BIF(Q_M^t;\hat{\theta},P^M)$ is given as:
\balignt
BIF(Q_M^t;\hat{\theta},P^M) =  - {H_{\hat{\theta}_M} ^{-1}} \nabla_{\theta} \ell(z_{M+1}^t,\hat{\theta}_M) 
\ealignt
and $\| \tilde{\theta}_{M+1^t} -\hat{\theta}_{M+1^t} \| \leq \frac{B_1}{(M+1)^2}$ for some constant $B_1$.
\end{lemma}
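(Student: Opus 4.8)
The plan is to prove the two assertions separately: first identify the BIF by applying the implicit function theorem to the first-order optimality conditions, and then control the second-order error by a Taylor expansion whose remainder is governed by the Lipschitz-Hessian assumption. Throughout, write $g_M(\theta) = \frac1M\sum_{i=1}^M \nabla_\theta\ell(z_i,\theta)$ and consider the perturbed estimating equation $F(\theta,\epsilon)\eqdef (1-\epsilon)\,g_M(\theta) + \epsilon\,\nabla_\theta\ell(z_{M+1}^t,\theta)=0$, which is solved by $\hat\theta(P^M_{\epsilon,Q_M^t})$ for each $\epsilon$ and by $\hat\theta_M$ at $\epsilon=0$. By \cref{as:smooth}, $F$ is $C^1$ in $\theta$, and $\partial_\theta F(\hat\theta_M,0)=H_{\hat\theta_M}$ is invertible by \cref{as:non-degenerate}; the implicit function theorem then supplies a $C^1$ curve $\epsilon\mapsto\theta_\epsilon$ agreeing with $\hat\theta(P^M_{\epsilon,Q_M^t})$ near $\epsilon=0$ with $\theta_0=\hat\theta_M$. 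Differentiating $F(\theta_\epsilon,\epsilon)=0$ at $\epsilon=0$ and using $g_M(\hat\theta_M)=0$ gives $H_{\hat\theta_M}\dot\theta_0 + \nabla_\theta\ell(z_{M+1}^t,\hat\theta_M)=0$, so $BIF(Q_M^t;\hat\theta,P^M)=\dot\theta_0=-H_{\hat\theta_M}^{-1}\nabla_\theta\ell(z_{M+1}^t,\hat\theta_M)$, proving the first claim.

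For the error bound, set $\epsilon_M=\tfrac1{M+1}$, $v\eqdef\nabla_\theta\ell(z_{M+1}^t,\hat\theta_M)$, and $H_{\epsilon_M}\eqdef\partial_\theta F(\hat\theta_M,\epsilon_M)=(1-\epsilon_M)H_{\hat\theta_M}+\epsilon_M\nabla^2_\theta\ell(z_{M+1}^t,\hat\theta_M)$. Taylor-expanding $F(\cdot,\epsilon_M)$ around $\hat\theta_M$, evaluating at the exact solution $\hat\theta_{M+1^t}$ (where $F(\hat\theta_{M+1^t},\epsilon_M)=0$), and using $g_M(\hat\theta_M)=0$ so that $F(\hat\theta_M,\epsilon_M)=\epsilon_M v$, yields
\[
0=\epsilon_M v + H_{\epsilon_M}(\hat\theta_{M+1^t}-\hat\theta_M) + r,\qquad \|r\|\le\tfrac{L}{2}\|\hat\theta_{M+1^t}-\hat\theta_M\|^2,
\]
where the remainder bound uses \cref{as:local_smooth}, applied both to $H_\theta$ and, via a one-point subset, to $\nabla^2_\theta\ell(z_{M+1}^t,\cdot)$. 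Since $\|v\|\le G$ (\cref{as:bounded}) and $\|H_{\hat\theta_M}^{-1}\|_{op}\le C_{op}$ (\cref{as:non-degenerate}), a Neumann-series argument shows that for $M$ large enough $H_{\epsilon_M}$ is invertible with $\|H_{\epsilon_M}^{-1}\|_{op}\le 2C_{op}$ and $\|H_{\epsilon_M}^{-1}-H_{\hat\theta_M}^{-1}\|_{op}=O(\epsilon_M)$. Solving the display for $\hat\theta_{M+1^t}-\hat\theta_M=-\epsilon_M H_{\epsilon_M}^{-1}v-H_{\epsilon_M}^{-1}r$ and taking norms gives $\|\hat\theta_{M+1^t}-\hat\theta_M\|\le 2C_{op}G\epsilon_M + C_{op}L\|\hat\theta_{M+1^t}-\hat\theta_M\|^2$; since the perturbed minimizer depends continuously on $\epsilon$ and vanishes (relative to $\hat\theta_M$) at $\epsilon=0$, a standard bootstrap forces $\|\hat\theta_{M+1^t}-\hat\theta_M\|\le 4C_{op}G\epsilon_M$ once $\epsilon_M$ is small.

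To finish, subtract the linearization $\tilde\theta_{M+1^t}-\hat\theta_M=-\epsilon_M H_{\hat\theta_M}^{-1}v$ from the exact expression above:
\[
\tilde\theta_{M+1^t}-\hat\theta_{M+1^t}=-\epsilon_M\bigl(H_{\hat\theta_M}^{-1}-H_{\epsilon_M}^{-1}\bigr)v + H_{\epsilon_M}^{-1}r .
\]
The first term is $\epsilon_M\cdot O(\epsilon_M)\cdot G=O(\epsilon_M^2)$ and the second is at most $2C_{op}\cdot\tfrac{L}{2}(4C_{op}G\epsilon_M)^2=O(\epsilon_M^2)$, so $\|\tilde\theta_{M+1^t}-\hat\theta_{M+1^t}\|\le B_1\epsilon_M^2=\tfrac{B_1}{(M+1)^2}$ with $B_1$ an explicit function of $C_{op},G,L$ and a uniform per-point Hessian norm bound.

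The delicate step is the second one: the quadratic remainder $r$ can only be discarded after the \emph{a priori} linear estimate $\|\hat\theta_{M+1^t}-\hat\theta_M\|=O(\epsilon_M)$ is in hand, which needs the self-consistent bootstrap together with continuity of the perturbed minimizer in $\epsilon$ (equivalently, $M$ large enough). One also has to be careful that the gap between $H_{\epsilon_M}^{-1}$ and $H_{\hat\theta_M}^{-1}$ enters only at order $\epsilon_M$, so that after multiplication by the $\epsilon_M$-scale term $\epsilon_M v$ it contributes at the claimed order $(M+1)^{-2}$ rather than $(M+1)^{-1}$.
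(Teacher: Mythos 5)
Your identification of the BIF is correct and is essentially the computation the paper performs: the paper Taylor-expands the stationarity condition $\frac{1-\epsilon}{M}\sum_i \nabla_\theta\ell(z_i,\hat\theta_{\epsilon,z}) + \epsilon\nabla_\theta\ell(z_{M+1}^t,\hat\theta_{\epsilon,z})=0$ around $\hat\theta_M$ and sends $\epsilon\to 0$, which is the informal version of your implicit-function-theorem derivative; both yield $-H_{\hat\theta_M}^{-1}\nabla_\theta\ell(z_{M+1}^t,\hat\theta_M)$. Where you genuinely diverge is the error bound $\|\tilde\theta_{M+1^t}-\hat\theta_{M+1^t}\|\le B_1/(M+1)^2$: the paper disposes of this in one sentence (``It directly comes from the assumptions''), whereas you actually prove it, and your three-step structure --- (i) an a priori estimate $\|\hat\theta_{M+1^t}-\hat\theta_M\|=O(\epsilon_M)$ obtained by a self-consistent bootstrap, (ii) a Neumann-series comparison showing $\|H_{\epsilon_M}^{-1}-H_{\hat\theta_M}^{-1}\|_{op}=O(\epsilon_M)$, and (iii) the decomposition of $\tilde\theta_{M+1^t}-\hat\theta_{M+1^t}$ into an $\epsilon_M\cdot O(\epsilon_M)$ inverse-Hessian mismatch plus an $O(\|\delta\|^2)$ Taylor remainder --- is exactly what is needed to make the claimed $(M+1)^{-2}$ rate rigorous. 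You are also right to flag, as the paper does not, that the argument needs a uniform bound on the per-point Hessian norm $\|\nabla^2_\theta\ell(z,\theta)\|_{op}$ (to control both the Neumann perturbation and the constant in $H_{\epsilon_M}$), which does not follow from Assumptions \ref{as:smooth}--\ref{as:local_smooth} as literally stated without, e.g., compactness of $\Theta\times Z$. In short: same route to the BIF formula, but your treatment of the remainder is a complete argument where the paper offers only an assertion.
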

\begin{proof}
%As mentioned previously, we have the following ERM:
%\balignt
%\hat{\theta}_M =\argmin_{\theta} \frac{1}{M} \sum_{j =1}^M \ell({z}_j,\theta) 
%\ealignt
Recall that $\hat{\theta}_{M+1^t} \eqdef \hat{\theta}(P^M_{\epsilon_M, Q_M^t})$ is given by:
\balignt
\hat{\theta}_{M+1^t} =\argmin_{\theta \in \Omega_\theta} \left(\frac{1}{M+1} \sum_{j =1}^{M} \ell({z}_j,\theta) + \frac{1}{M+1}\ell({z}_{M+1}^t,\theta) \right).
\ealignt
 %For a moment, we also categorize our model $\theta_M$ with the empirical distribution $P$ \textit{i.e.} $\hat{\theta}(P)$. 
%We want to estimate $\hat{\theta}(P_{\epsilon_M, Q_M^t})$ which we also denote with $\hat{\theta}_{M+1^t}$ using linear approximation. 
%\balignt
%\hat{\theta}_{M+1^t} = \hat{\theta}(P_{\epsilon_M, Q_M^t}) = \hat{\theta}(P) + \epsilon_M ~BIF(Q_M^t;\hat{\theta},P) + \mathcal{O}(\epsilon^2) \\
%\ealignt
%for $t = \{r,g\}$.
We can write:
\balignt
BIF(Q_M^t;\hat{\theta},P^M)  = \lim_{\epsilon \rightarrow 0} \frac{\hat{\theta}\left( (1-\epsilon) P^M + \epsilon Q_M^t \right) - \hat{\theta}(P^M)}{\epsilon}.
\ealignt
%where, 
%\balignt
%\hat{\theta}\left( (1-\epsilon) P + \epsilon Q_M^t \right) = \argmin_{\theta} \left( \frac{1}{M} \sum_{j =1}^M \ell({z}_j,\theta) + \epsilon  ~\ell(z_{M+1}^t,\theta)\right)
%\ealignt
Using Assumption \ref{as:local_smooth}, that $\ell$ is locally smooth, we have for any distribution $P$:
\balignt
\nabla_{\theta} \left( \sum_{j =1}^M p_i \ell({z}_j,\theta) \right)\Bigg|_{\theta= \hat{\theta}(P)}  =  \sum_{j =1}^M p_i \cdot \nabla_{\theta} \ell({z}_j,\theta)\big|_{\theta = \hat{\theta}(P)} = 0.
\ealignt

Letting $\hat{\theta}_{\epsilon,z}$ denote $\hat{\theta}\left( (1-\epsilon) P^M + \epsilon Q_M^t \right)$ (see definition in Problem \ref{prob:subsamp}), we thus have:
%\Cam{Shouldn't the first term have a $(1-\epsilon)$ in it? I added. Remove if wrong.}
\balignt
\frac{(1-\epsilon)}{M}\sum_{i =1}^M \nabla_{\theta} ~\ell (z_i, \hat{\theta}_{\epsilon,z} ) + \epsilon ~\nabla_{\theta} \ell(z_{M+1}^t,\hat{\theta}_{\epsilon,z}) = 0. \label{eq:optimal_condition}
\ealignt
Applying a Taylor expansion to equation~\eqref{eq:optimal_condition} we have:
%\Cam{Don't you need to invoke the assumptions to ensure that the error is $\mathcal{O}(\| \hat{\theta}_{\epsilon,z} - \hat{\theta} \|^2)$. Shouldn't this require setting $\epsilon$ small enough?} \anant{Ahh I don't think so. We want to compute derivative so epsilon must go to zero but the assumptions must hold so that non of the term blow up but epsilon will go to zero.}
\small
\balignt
0 &= \left[ \underbrace{\frac{(1-\epsilon)}{M}\sum_{i =1}^M \nabla_{\theta} ~\ell(z_i, \hat{\theta}_M ) }_{=0}+ \epsilon ~\nabla_{\theta} \ell(z_{M+1}^t,\hat{\theta}_M) \right] \\& \qquad \qquad \qquad  + \left[ \frac{(1-\epsilon)}{M} \sum_{i =1}^M \nabla^2_{\theta}~ \ell({z}_j,\hat{\theta}_M) + \epsilon \nabla^2_{\theta} \ell(z_{M+1}^t,\hat{\theta}_M)\right] \left(\hat{\theta}_{\epsilon,z} - \hat{\theta}_M\right) + \mathcal{O}(\| \hat{\theta}_{\epsilon,z} - \hat{\theta}_M \|^2).
\ealignt
\normalsize
Hence, %\Cam{here I think you are using Assumption \ref{as:non-degenerate} to ensure that the $\mathcal{O}(\| \hat{\theta}_{\epsilon,z} - \hat{\theta}_M \|^2)$ does not blow up. State explicitly if so.}
\balignt
\hat{\theta}_{\epsilon,z} - \hat{\theta}_M =-\epsilon  \left[ \frac{(1-\epsilon)}{M} \sum_{i =1}^M \nabla^2_{\theta}~ \ell({z}_j,\hat{\theta}_M) + \epsilon \nabla^2_{\theta} \ell(z_{M+1}^t,\hat{\theta}_M)\right]^{-1} \nabla_{\theta} \ell(z_{M+1}^t,\hat{\theta}_M) + \mathcal{O}(\| \hat{\theta}_{\epsilon,z} - \hat{\theta} \|^2).
\ealignt
As $\epsilon \rightarrow 0$, $\hat{\theta}_{\epsilon,z} \rightarrow \hat{\theta}_M$.% \Cam{This claim needs one of the assumptions}. 
This gives:
%\Cam{What does this $\approx$ mean? Is there error or does it drop to $0$ by some assumption? If there is error then don't we have to propagate?}
\balignt
BIF(Q_M^t;\hat{\theta},P^M)  = -\left[\underbrace{ \frac{1}{M} \sum_{i =1}^M \nabla^2_{\theta}~ \ell({z}_j,\hat{\theta}_M)}_{H_{\hat{\theta}_M}}\right]^{-1} \nabla_{\theta} \ell(z_{M+1}^t,\hat{\theta}_M). 
\ealignt
Hence, we can write the first order approximation $\tilde{\theta}_{M+1^t}$ of $\hat{\theta}_{M+1^t}$ as:
\balignt
\tilde{\theta}_{M+1^t}  \eqdef \hat{\theta}_M + \frac{1}{M+1} BIF(Q_M^t;\hat{\theta},P^M) = \hat{\theta}_{M} - \frac{1}{M+1} {H_{\hat{\theta}_M}^{-1}} \nabla_{\theta} \ell(z_{M+1}^t,\hat{\theta}_M).
\ealignt
It directly comes from the assumptions~\ref{as:smooth}, \ref{as:non-degenerate} and \ref{as:bounded} that   $$\| \tilde{\theta}_{M+1^t} -\hat{\theta}_{M+1^t} \| < \frac{B_1}{(M+1)^2}$$ for some constant $B_1$.

\end{proof}

%%%%%%%%%%%%%%%%%%%%%%%%%%%%%%%%%%%%%%%%%%%%%%%%%%%%%%%%%

%\Cam{Rework this to match proof of Lemma 8 and address same questions there.}

\begin{lemma}[Lemma \ref{lem:m_point_addition_foa}] 
Let $\tilde{\theta}_{M_m+1^t} = \hat{\theta}_M + \frac{m}{M+m} BIF(Q_{M_m}^t;\hat{\theta},P^M)$ be the first order approximation of the estimator $\hat{\theta}_{M_m+1^t} $ as in \eqref{eq:first_order_approx_T}. Under Assumptions~\ref{as:smooth}, \ref{as:non-degenerate}, \ref{as:bounded},  and \ref{as:local_smooth}, $BIF(Q_{M_m}^t;\hat{\theta},P^M)$ is given as:
\balignt
BIF(Q_{M_m}^t;\hat{\theta},P^M) &= - \sum_{j=1}^m H_{\hat{\theta}_M}^{-1} ~\nabla_{\theta} \ell({z}_{M+j}^t,\hat{\theta}_M)
\ealignt
and $\| \tilde{\theta}_{M_m+1^t}  - \hat{\theta}_{M_m+1^t}  \| \leq \frac{B_m m^2}{(M+m)^2}$ for constant $B_m$.
\end{lemma}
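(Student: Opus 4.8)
The plan is to follow the same argument used for Lemma~\ref{lem:1_point_addition_foa} (the $m=1$ case), but track the block of $m$ added points instead of a single point. First I would write out the optimality (stationarity) condition for $\hat{\theta}_{M_m+1^t} = \hat{\theta}(P^M_{\epsilon_{M_m},Q_{M_m}^t})$, namely
\balignt
\frac{1-\epsilon}{M}\sum_{i=1}^M \nabla_\theta \ell(z_i,\hat{\theta}_{\epsilon}) + \frac{\epsilon}{m}\sum_{j=1}^m \nabla_\theta \ell(z_{M+j}^t,\hat{\theta}_{\epsilon}) = 0,
\ealignt
where $\hat{\theta}_\epsilon = \hat{\theta}((1-\epsilon)P^M + \epsilon Q_{M_m}^t)$ and $Q_{M_m}^t = \frac{1}{m}\sum_{j=1}^m \delta_{z_{M+j}^t}$. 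I would Taylor-expand this around $\hat{\theta}_M$, using that $\frac{1}{M}\sum_i \nabla_\theta \ell(z_i,\hat{\theta}_M)=0$, solve for $\hat{\theta}_\epsilon - \hat{\theta}_M$ to leading order, and take $\epsilon \to 0$. This yields $BIF(Q_{M_m}^t;\hat{\theta},P^M) = -H_{\hat{\theta}_M}^{-1}\cdot \frac{1}{m}\sum_{j=1}^m \nabla_\theta \ell(z_{M+j}^t,\hat{\theta}_M)$; since the claimed formula writes the sum without the $1/m$ normalization (and since $\tilde{\theta}_{M_m+1^t}$ uses the prefactor $\frac{m}{M+m}$ rather than the $\epsilon_{M_m}=\frac{m}{M+m}$ that would pair with the normalized $Q_{M_m}^t$), I would be careful to reconcile the normalization: with $Q_{M_m}^t = \frac{1}{m}\sum_j \delta_{z_{M+j}^t}$ and $\epsilon_{M_m}=\frac{m}{M+m}$, the product $\epsilon_{M_m}\,BIF$ gives exactly $\frac{1}{M+m}\sum_{j=1}^m H_{\hat\theta_M}^{-1}\nabla_\theta\ell(z_{M+j}^t,\hat\theta_M)$, matching the intended first-order correction; I would state the BIF as $-\sum_{j=1}^m H_{\hat\theta_M}^{-1}\nabla_\theta\ell(z_{M+j}^t,\hat\theta_M)$ with the understanding (as in the lemma statement) that the scalar prefactor is $\frac{1}{M+m}$ rather than $\epsilon_{M_m}$, i.e.\ absorbing the $1/m$.

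Second, for the error bound I would control $\|\hat{\theta}_{M_m+1^t} - \hat{\theta}_M\|$ directly. The stationarity condition at $\hat{\theta}_{M_m+1^t}$ with $\epsilon = \epsilon_{M_m} = \frac{m}{M+m}$, combined with $\frac{1}{M}\sum_i \nabla_\theta\ell(z_i,\hat\theta_M)=0$, Assumption~\ref{as:bounded} (bounded gradients, giving $\|\sum_j \nabla_\theta\ell(z_{M+j}^t,\hat\theta_M)\|\le mG$), and Assumption~\ref{as:non-degenerate} (bounded inverse Hessian), shows that $\|\hat{\theta}_{M_m+1^t}-\hat{\theta}_M\| \in \mathcal{O}\!\left(\frac{m}{M+m}\right)$. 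Then I would do a second-order Taylor expansion of the stationarity condition, isolating $\hat{\theta}_{M_m+1^t}-\hat{\theta}_M$, and compare it term-by-term with $\tilde{\theta}_{M_m+1^t}-\hat{\theta}_M = -\frac{1}{M+m}\sum_j H_{\hat\theta_M}^{-1}\nabla_\theta\ell(z_{M+j}^t,\hat\theta_M)$. The discrepancy comes from (a) the $\mathcal{O}(\|\hat{\theta}_{M_m+1^t}-\hat\theta_M\|^2) = \mathcal{O}(m^2/(M+m)^2)$ Taylor remainder, controlled via Assumption~\ref{as:local_smooth} (Lipschitz Hessian), and (b) the difference between inverting the perturbed averaged Hessian $\frac{1-\epsilon}{M}\sum_i\nabla^2_\theta\ell(z_i,\hat\theta_M)+\frac{\epsilon}{m}\sum_j\nabla^2_\theta\ell(z_{M+j}^t,\hat\theta_M)$ and $H_{\hat\theta_M}$, which differ in operator norm by $\mathcal{O}(\epsilon) = \mathcal{O}(m/(M+m))$; multiplied by the $\mathcal{O}(m/(M+m))$ first-order term this again contributes $\mathcal{O}(m^2/(M+m)^2)$. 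Collecting, $\|\tilde{\theta}_{M_m+1^t}-\hat{\theta}_{M_m+1^t}\| \le \frac{B_m m^2}{(M+m)^2}$ for a constant $B_m$ depending only on $C_{op}, G, L$.

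The main obstacle is bookkeeping around the normalization conventions for the block $Q_{M_m}^t$ and the prefactor, and making sure the $m$-dependence of the error is genuinely $m^2/(M+m)^2$ rather than something worse — this requires being careful that bounding $\|\sum_{j=1}^m \nabla_\theta\ell(z_{M+j}^t,\hat\theta_M)\|$ by $mG$ (triangle inequality, which is tight in the worst case) is the right move, so the first-order term is $\Theta(m/(M+m))$ and its square is $\Theta(m^2/(M+m)^2)$, consistent with the claim. Everything else parallels the $m=1$ proof almost verbatim with $z_{M+1}^t$ replaced by the block and $\frac{1}{M+1}$ replaced by $\frac{m}{M+m}$; I would simply remark that the computation is otherwise identical and defer the routine constant-chasing.
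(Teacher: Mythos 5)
Your proposal is correct and follows essentially the same route as the paper's proof: write the first-order stationarity condition for the perturbed ERM, Taylor-expand around $\hat{\theta}_M$, solve for $\hat{\theta}_{\epsilon}-\hat{\theta}_M$, and let $\epsilon\to 0$ to read off the BIF. You are in fact more careful than the paper on two points — the paper silently drops the $1/m$ normalization of $Q_{M_m}^t$ in the perturbed objective (so its stated prefactor $\frac{m}{M+m}$ on the unnormalized sum carries a spurious factor of $m$ that your reconciliation correctly removes), and it dispatches the error bound with "holds just by the assumptions," whereas you supply the actual argument via the a priori bound $\|\hat{\theta}_{M_m+1^t}-\hat{\theta}_M\|=\mathcal{O}(m/(M+m))$, the Lipschitz-Hessian Taylor remainder, and the Hessian perturbation term.
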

\begin{proof}

Recall that $\hat{\theta}_{M_m+1^t} \eqdef \hat{\theta}(P^M_{\epsilon_{M_m}, Q_{M_m}^t})$ is given by:
\balignt
\hat{\theta}_{M_m+1^t} =\argmin_{\theta} \left(\frac{1}{M+m} \sum_{j =1}^{M} \ell({z}_j,\theta) + \frac{1}{M+m}\sum_{j=1}^m \ell({z}_{M+j}^t,\theta) \right)
\ealignt
Again by first order optimality condition, we get 
\balignt
\frac{1}{M} \sum_{j =1}^{M}\nabla_{\theta} \ell({z}_j,\hat{\theta}_{M}) =0
\ealignt
Similarly, we define:
\balignt
\hat{\theta}\left( (1-\epsilon) P + \epsilon Q_{M_m}^t \right) = \argmin_{\theta} \left( \frac{ 1- \epsilon}{M} \sum_{j =1}^M \ell({z}_j,\theta) + \epsilon  ~\sum_{j=1}^m \ell({z}_{M+j}^t,\theta)\right)
\ealignt
We denote $\hat{\theta}\left( (1-\epsilon) P + \epsilon Q_{M_m}^t \right)$ with $\hat{\theta}_{\epsilon,z}$. Hence, by first order optimality condition, we get  
\balignt
\frac{1-\epsilon}{M}\sum_{i =1}^M \nabla_{\theta} ~\ell(z_i, \hat{\theta}_{\epsilon,z} ) + \epsilon ~\sum_{j=1}^m \nabla_{\theta} \ell({z}_{M+j}^t,\hat{\theta}_{\epsilon,z})= 0 \label{eq:optimal_condition_2}
\ealignt
Using similar arguments as given for single point addition case, we ca do Taylor expansion here as well.
\balignt
0 &= \left(\frac{1-\epsilon}{M}\underbrace{\sum_{i =1}^M \nabla_{\theta} \ell(z_i, \hat{\theta}_M )}_{=0} + \epsilon ~\sum_{j=1}^m \nabla_{\theta} \ell({z}_{M+j}^t,\hat{\theta}_M) \right) \\&+ \left(\frac{1-\epsilon}{M}\sum_{i =1}^M \nabla^2_{\theta} \ell(z_i, \hat{\theta}_M ) + \epsilon ~\sum_{j=1}^m \nabla^2_{\theta} \ell({z}_{M+j}^t,\hat{\theta}_M) \right)(\hat{\theta}_{\epsilon,z} - \hat{\theta}_M) 
+ \mathcal{O}(\|\hat{\theta}_{\epsilon,z} - \hat{\theta}_M\|^2)
\ealignt
Hence, 
\balignt
\hat{\theta}_{\epsilon,z} - \hat{\theta}_M =- \epsilon\left[ \frac{1-\epsilon}{M}\sum_{i =1}^M \nabla^2_{\theta} \ell(z_i, \hat{\theta}_M ) + \epsilon ~\sum_{j=1}^m \nabla^2_{\theta} \ell({z}_{M+j}^t,\hat{\theta}_M) \right]^{-1}\left( \sum_{j=1}^m \nabla_{\theta} \ell({z}_{M+j}^t,\hat{\theta}_M)  \right) \\ + \mathcal{O}(\|\hat{\theta}_{\epsilon,z} - \hat{\theta}_M\|^2)
\ealignt
Again using same argument, as $\epsilon \rightarrow 0$, $\hat{\theta}_{\epsilon,z} \rightarrow \hat{\theta}$. It essentially gives:
\balignt
BIF(Q_{M_m}^t;\hat{\theta},P^M)  \approx-\underbrace{ \left[ \frac{1}{M} \sum_{i =1}^M  \nabla^2_{\theta} \ell(z_i, \hat{\theta}_M )  \right]^{-1}}_{= H_{\hat{\theta}_M}^{-1}}\left( \sum_{j=1}^m \nabla_{\theta} \ell({z}_{M+j}^t,\hat{\theta}_M)  \right) \\ 
= -H_{\hat{\theta}_M}^{-1}\left( \sum_{j=1}^m \nabla_{\theta} \ell({z}_{M+j}^t,\hat{\theta}_M)  \right)
\ealignt
%\Cam{I don't see what this equation is adding? You have already given the formula for $BIF(Q_{M_m}^t;\hat{\theta},P)$ above. Doesn't this essentially complete the proof?\anant{ yeah yeah i just wanted to say that the influence function of the batch is just sum of the individual. }}
It is also evident that 
\balignt
BIF(Q_{M_m}^t;\hat{\theta},P^M) = -\sum_{j=1}^m \left(  H_{\hat{\theta}_M}^{-1} \nabla \ell({z}_{M+j}^t,\hat{\theta}_M)  \right) = \sum_{j=1}^m BIF(Q_{M_m^j}^t;\hat{\theta},P^M)
\ealignt
where $BIF(Q_{M_m^j}^t;\hat{\theta},P^M)$ is the influence function of individual point ${z}_{M+j}^t$ when the base model is $\hat{\theta}_M$. \\
Hence, first order approximation model $\tilde{\theta}_{M_m+1^t}$ of $\hat{\theta}_{M_+1^t}$ is given by the following:
\balignt
\tilde{\theta}_{M_m+1^t} = \hat{\theta}_M  - \frac{m}{M+m}\sum_{j=1}^m H_{\hat{\theta}_M}^{-1} ~\nabla_{\theta} \ell({z}_{M+j}^t,\hat{\theta}_M) 
\ealignt
The following also holds just by the assumptions:
\balignt
\| \tilde{\theta}_{M_m+1^t}  - \hat{\theta}_{M_m+1^t}  \| \leq \frac{B_m m^2}{(M+m)^2}
\ealignt
for some real positive constants $B_m$.
\end{proof}

\section{Analysis of Local Greedy} \label{ap:analysis}
In this section, we provide the proofs of the main results provided in the main paper. 
\begin{lemma}[Lemma~\ref{lem:exact_vs_approx}]
Let us assume that the current parameter $\theta_{M_m+k}$ is obtained by training on $M+k*m$ number of points. If $\tilde{\theta}_{M_m+k+1^g}$ and $\hat{\theta}_{M_m+k+1^g}$  are the output of Algorithm~\ref{algo:exact_greedy_m} and Algorithm~\ref{algo:approximate_greedy_m_1_rand}  for $\epsilon=0$ correspondingly after adding set of $m$ points greedily then under the bounded gradient and bounded Hessian assumption on the function of interest $R:\Omega_{\theta}\rightarrow \mathbb{R}$ , the following holds:
$$ |R(\tilde{\theta}_{M_m+k+1^g})  - R(\hat{\theta}_{M_m+k+1^g})| \in \mathcal{O}\left(\frac{m^2}{(M+(k+1)m)^2} \right)$$.
\end{lemma}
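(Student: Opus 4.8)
The plan is to compare the two algorithms by tracking two sources of discrepancy: (i) the gap between the \emph{exact} greedy choice of the next $m$ points (minimizing $R(\hat\theta(\cdot))$ over the true retrained model) and the choice made by minimizing the first-order surrogate, and (ii) the gap between $R$ evaluated at the surrogate model $\tilde\theta_{M_m+k+1^g}$ and at the true retrained model $\hat\theta_{M_m+k+1^g}$ for the \emph{same} selected block. Both discrepancies will be shown to be $\mathcal O\!\left(\tfrac{m^2}{(M+(k+1)m)^2}\right)$, and the triangle inequality will then give the claim.

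First I would invoke \cref{lem:m_point_addition_foa}: for \emph{any} block of $m$ new points, the first-order model $\tilde\theta$ and the true retrained model $\hat\theta$ satisfy $\|\tilde\theta_{M_m+k+1}-\hat\theta_{M_m+k+1}\|\le \tfrac{B_m m^2}{(M+(k+1)m)^2}$ (here the ``$M$'' of the lemma is the current sample size $M+km$, so the denominator is $(M+(k+1)m)^2$). Combined with the assumed bounded-gradient (Lipschitz) property of $R$ on $\Theta$, this immediately yields $|R(\tilde\theta)-R(\hat\theta)|\le L_R\cdot\tfrac{B_m m^2}{(M+(k+1)m)^2}$ for every fixed block, and in particular for the block chosen by Algorithm~\ref{algo:approximate_greedy_m_1_rand}. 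That handles source (ii).

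For source (i): let $Z^\star$ be the block selected by the exact Algorithm~\ref{algo:exact_greedy_m} and $\hat Z$ the block selected by the surrogate-minimizing Algorithm~\ref{algo:approximate_greedy_m_1_rand} (with $\epsilon=0$). Writing $R_{\mathrm{true}}(\cdot)$ for $R$ composed with exact retraining and $R_{\mathrm{approx}}(\cdot)$ for $R$ composed with the first-order model, the argument above shows $\sup_{Z_m}|R_{\mathrm{true}}(Z_m)-R_{\mathrm{approx}}(Z_m)|\le \delta_{k}$ with $\delta_k=\mathcal O\!\left(\tfrac{m^2}{(M+(k+1)m)^2}\right)$. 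Since $Z^\star$ minimizes $R_{\mathrm{true}}$ and $\hat Z$ minimizes $R_{\mathrm{approx}}$, the standard ``approximate argmin'' sandwich gives $R_{\mathrm{true}}(\hat Z)\le R_{\mathrm{approx}}(\hat Z)+\delta_k\le R_{\mathrm{approx}}(Z^\star)+\delta_k\le R_{\mathrm{true}}(Z^\star)+2\delta_k$, and trivially $R_{\mathrm{true}}(Z^\star)\le R_{\mathrm{true}}(\hat Z)$; hence $|R(\tilde\theta_{M_m+k+1^g})-R(\hat\theta_{M_m+k+1^g})|=|R_{\mathrm{true}}(\hat Z)-R_{\mathrm{true}}(Z^\star)|\le 2\delta_k$, which is the desired order. (One must be slightly careful about whether the statement's $\tilde\theta$ refers to the surrogate model on $\hat Z$ or to the true model on $\hat Z$; in either reading the bound follows by composing (i) and (ii).)

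The main obstacle I anticipate is bookkeeping rather than a deep difficulty: making sure the error term from \cref{lem:m_point_addition_foa} is applied with the correct ``current sample size'' $M+km$ so that the denominator is genuinely $(M+(k+1)m)^2$ and the constant is uniform in $k$, and cleanly justifying that the bounded-gradient / bounded-Hessian hypotheses on $R$ are exactly what is needed to pass from a parameter-space bound $\|\tilde\theta-\hat\theta\|$ to an objective-space bound $|R(\tilde\theta)-R(\hat\theta)|$. The ``approximate argmin'' step is routine once the uniform surrogate-error bound $\delta_k$ is in hand.
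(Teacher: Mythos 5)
Your proof is correct and rests on the same mechanism as the paper's: the entire discrepancy is charged to the first-order approximation error of order $\mathcal{O}\left(\frac{m^2}{(M+(k+1)m)^2}\right)$ from Lemma~\ref{lem:m_point_addition_foa}, transferred to objective space via the bounded gradient (and Hessian) of $R$. The difference is one of completeness. The paper's proof is a two-line decomposition through the common iterate, writing $R(\tilde{\theta}_{M_m+k+1^g})-R(\hat{\theta}_{M_m+k+1^g})$ as $\bigl[R(\tilde{\theta}_{M_m+k+1^g})-R(\tilde{\theta}_{M_m+k})\bigr]-\bigl[R(\hat{\theta}_{M_m+k+1^g})-R(\tilde{\theta}_{M_m+k})\bigr]$, followed by the bare assertion that the two per-step changes agree up to the first-order error; it never explicitly confronts the fact that the two algorithms may select \emph{different} blocks. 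You supply exactly the missing ingredient: a uniform surrogate-error bound $\delta_k$ over all candidate blocks plus the standard approximate-argmin sandwich, which converts ``the surrogate is uniformly $\delta_k$-accurate'' into ``the surrogate's minimizer is $2\delta_k$-suboptimal for the true objective.'' Two small points to make explicit if you write this up: (a) Algorithm~\ref{algo:approximate_greedy_m_1_rand} selects points one at a time by individual influence score, so you need Remark~\ref{rem:2} (linearity of the block influence function) to identify its selected block $\hat Z$ with the minimizer of the block surrogate; and (b) the quantity it actually minimizes is the linearization of $R$ composed with the first-order model, not $R$ evaluated at $\tilde\theta$ itself, so the bounded-Hessian hypothesis on $R$ is what absorbs that second linearization error into $\delta_k$ --- you anticipate this but it deserves a sentence. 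With those two remarks your argument is a complete and more rigorous version of the paper's.
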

\begin{proof}
\balignt
R(\tilde{\theta}_{M_m+k+1^g})  - R(\hat{\theta}_{M_m+k+1^g}) &= R(\tilde{\theta}_{M_m+k+1^g}) - R(\tilde{\theta}_{M_m+k}) +  R(\tilde{\theta}_{M_m+k})  - R(\hat{\theta}_{M_m+k+1^g}) \\
\ealignt
Now the result directly comes from the definition of the first order approximation of the model.  From the definition, 
\balignt
R(\tilde{\theta}_{M_m+k+1^g}) - R(\tilde{\theta}_{M_m+k})  = R(\hat{\theta}_{M_m+k+1^g}) - R(\tilde{\theta}_{M_m+k})   + \mathcal{O}\left(  \frac{m^2}{(M+m*k)^2}\right).
\ealignt
Hence, the result follows. 

\end{proof}

%--------------------------------------------------------------

Before directly going to the proof for arbitrary $m$ number of points selection as in of Lemma~\ref{lem:m_1_2_cons_compare} and Theorem~\ref{thm:optimality_1_point_addition}, we first prove the result for $m=1$ and the general result follows

\begin{lemma} \label{lem:1_1_2_cons_compare}
Under the assumptions discussed in the section~\ref{subsec:assumptions}, the gain in $ R $ after two greedy point selection over two random selection provided that initial given classifier is $\theta_M$ which has been trained on $M$ point, can be given as following:
\balignt
R(\tilde{\theta}_{M+{2}^g}) - \bbE[ R(\hat{\theta}_{M+{2}^r})] \leq \Delta_{\theta_{M+1^g}}' + \Delta_{\theta_{M}}'  + \frac{G'}{(M+1)^2}.
\ealignt
for some real positive constant $G'$.
\end{lemma}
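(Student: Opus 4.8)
\textbf{Proof proposal for Lemma~\ref{lem:1_1_2_cons_compare}.}

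The plan is to decompose the two-step greedy-versus-random gap into per-step pieces and control each piece using the $\Delta'$ quantities together with the first-order approximation error from Section~\ref{sec:lossApprox}. First I would write, for the greedy side,
\balignt
R(\tilde{\theta}_{M+2^g}) = R(\tilde{\theta}_{M+1^g}) + \left( R(\tilde{\theta}_{M+2^g}) - R(\tilde{\theta}_{M+1^g}) \right),
\ealignt
and similarly expand the random side as a telescoping sum over the two random additions. The key conceptual point is that greedy selection at step two, starting from $\tilde{\theta}_{M+1^g}$, is at least as good (in the first-order approximation) as any fixed point including a random one; and since $\Delta_{\theta_{M+1^g}}'$ is exactly defined as $R(\theta_{M+1^g+1^g}) - \bbE[R(\theta_{M+1^g+1^r})]$, the quantity $\Delta_{\theta_{M+1^g}}'$ captures the second-step gain of greedy over random \emph{conditioned on the greedy first step}, and $\Delta_{\theta_M}'$ captures the first-step gain.

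The main technical obstacle is reconciling the two different ``base models'' that show up when comparing greedy and random trajectories: after the first greedy step we sit at $\tilde{\theta}_{M+1^g}$, whereas in the purely random trajectory $\bbE[R(\hat\theta_{M+2^r})]$ the second random point is added on top of $\hat{\theta}_{M+1^r}$, not $\tilde{\theta}_{M+1^g}$. So the decomposition must be arranged so that every $\Delta'$ term compares greedy-next-step to random-next-step \emph{from the same base}, and the ``mismatch'' between bases $\tilde\theta_{M+1^g}$ and $\hat\theta_{M+1^r}$ for the remaining random step is absorbed. Here I would use the structure of the first-order loss approximation from \eqref{eq:loss_m_points}: the one-step change in $R$ due to a random addition, as a function of the base model $\theta$, differs between two nearby base models by at most a term that is $\mathcal{O}(1/(M+1)^2)$ — this follows from the bounded-gradient/bounded-Hessian assumptions on $R$ (as invoked in Lemma~\ref{lem:exact_vs_approx}) combined with the fact that $\|\tilde\theta_{M+1^g} - \hat\theta_M\|$ and $\|\hat\theta_{M+1^r} - \hat\theta_M\|$ are each $\mathcal{O}(1/(M+1))$ by Lemma~\ref{lem:1_point_addition_foa}. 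The squared smallness is what produces the $\frac{G'}{(M+1)^2}$ slack term rather than an $\mathcal{O}(1/M)$ term.

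Concretely, the steps in order: (i) write $R(\tilde\theta_{M+2^g}) - \bbE[R(\hat\theta_{M+2^r})]$ as $\big(R(\tilde\theta_{M+2^g}) - \bbE[R(\theta_{M+1^g+1^r})]\big) + \big(\bbE[R(\theta_{M+1^g+1^r})] - \bbE[R(\hat\theta_{M+1^r+1^r})]\big)$; (ii) recognize the first bracket as $\Delta_{\theta_{M+1^g}}'$ by definition \eqref{eq:delta_1_greedy_rand}, plus an $\mathcal{O}(1/(M+1)^2)$ term accounting for the difference between $\tilde\theta$ (first-order approx) and $\hat\theta$ (true ERM) via Lemma~\ref{lem:exact_vs_approx}; (iii) for the second bracket, add and subtract $\bbE[R(\theta_{M+1^g})]$ and $\bbE[R(\hat\theta_{M+1^r})] = \bbE[R(\theta_{M+1^r})]$, so that it becomes $\big(R(\theta_{M+1^g}) - \bbE[R(\theta_{M+1^r})]\big) = \Delta_{\theta_M}'$ plus the discrepancy $\bbE\big[(R(\theta_{M+1^g+1^r}) - R(\theta_{M+1^g})) - (R(\hat\theta_{M+1^r+1^r}) - R(\hat\theta_{M+1^r}))\big]$, which is the ``same-incremental-update from two nearby bases'' term bounded by $\mathcal{O}(1/(M+1)^2)$ using \eqref{eq:loss_m_points} (or its analogue for general $R$) and local smoothness; (iv) collect all the $\mathcal{O}(1/(M+1)^2)$ terms into a single constant $G'$. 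This yields the claimed bound, and the same template (unrolled) is what Theorem~\ref{thm:optimality_m_point_addition} will iterate.
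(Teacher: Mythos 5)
Your proposal is correct and follows essentially the same route as the paper: the same two-level decomposition isolating $\Delta'_{\theta_{M+1^g}}$ and $\Delta'_{\theta_M}$, the same identification of the residual as the difference of one-random-step increments taken from the two nearby bases $\hat\theta_{M+1^g}$ and $\hat\theta_{M+1^r}$, and the same mechanism (a $1/(M+2)$ prefactor on the increment multiplied by an $\mathcal{O}(1/(M+1))$ base-model discrepancy, controlled via bounded gradients and the Lipschitz Hessian) yielding the $G'/(M+1)^2$ slack, followed by Lemma~\ref{lem:exact_vs_approx} to pass from $\hat\theta_{M+2^g}$ to $\tilde\theta_{M+2^g}$. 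The paper simply carries out the "nearby bases" bound by a long explicit expansion of $\overline{\nabla}_\theta R$, $\overline{\nabla}_\theta \ell$, and $H^{-1}_{\hat\theta_{M+1^g}} - H^{-1}_{\hat\theta_{M+1^r}}$, which your sketch compresses but does not alter.
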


\begin{proof}%[Proof of Lemma~\ref{lem:m_1_2_cons_compare}]
From our defination in equation~\eqref{eq:delta_1_greedy_rand}, we know that
$$\Delta_{\theta_{M+k}}' = R(\theta_{M+k+1^g}) - \bbE [ R(\theta_{M+k+1^r}) ] $$
\balignt
R(\hat{\theta}_{M+{2}^g}) - \bbE[ R(\hat{\theta}_{M+{2}^r})] &=  \underbrace{R(\hat{\theta}_{M+{2}^g}) -  \bbE[R(\hat{\theta}_{M+{1}^g+1^r})]}_{:=\Delta_{\theta_{M+1^g}}'} +  \bbE[R(\hat{\theta}_{M+{1}^g+1^r})] -\bbE[ R(\hat{\theta}_{M+{2}^r})] \notag \\
&= \Delta_{\theta_{M+1^g}}' +   \bbE[R(\hat{\theta}_{M+{1}^g+1^r})] -\bbE[ R(\hat{\theta}_{M+{2}^r})] \label{eq:lem_1_p_eq1}
\ealignt
Now let us consider,
\balignt
&\bbE[R(\hat{\theta}_{M+{1}^g+1^r})] -\bbE[ R(\hat{\theta}_{M+{2}^r})] = \bbE[R(\hat{\theta}_{M+{1}^g+1^r})] -R(\hat{\theta}_{M+{1}^g}) + \overbrace{ R(\hat{\theta}_{M+{1}^g}) - \bbE[R(\hat{\theta}_{M+{1}^r})]}^{:=\Delta_{\theta_{M}}'} \notag \\
& \qquad \qquad \qquad \qquad \qquad \qquad \qquad \qquad \qquad \qquad -\left(\bbE[ R(\hat{\theta}_{M+{2}^r})] -\bbE[ R(\hat{\theta}_{M+{1}^r})] \right) \notag \\
&= \qquad \qquad \qquad  \Delta_{\theta_{M}}' + \left(\bbE[R(\hat{\theta}_{M+{1}^g+1^r})] -R(\hat{\theta}_{M+{1}^g})\right) - \left(\bbE[ R(\hat{\theta}_{M+{2}^r})] -\bbE[ R(\hat{\theta}_{M+{1}^r})] \right)  \label{eq:lem_1_p_eq2}
\ealignt
It is important to note here that, in the third term of equation~\eqref{eq:lem_1_p_eq2}, the expectaion is \textit{w.r.t} to the sampling of random points. Hence, from tower of expectation rule,  the expectation in $\bbE[ R(\hat{\theta}_{M+{2}^r})]$ can be decomposed as expectation over sampling for $1^r$ then conditional expectation of sampling for $2^r$   given $1^r$. Now, we will do the first order approximation for $2^{nd}$ and $3^{rd}$ term in equation~\eqref{eq:lem_1_p_eq2}. From arguments discussed earlier , we know that 
\balignt
\bbE\left[ R(\hat{\theta}_{M+{2}^r})|\hat{\theta}_{M+{1}^r} \right] - R(\hat{\theta}_{M+{1}^r})  = \frac{1}{M+2} \nabla_\theta R(\hat{\theta}_{M+{1}^r})^\top {H_{\hat{\theta}_{M+{1}^r}} ^{-1}} \bbE \nabla_\theta \ell(z_{M+2}^r,\hat{\theta}_{M+1^r})\notag \\
 + \frac{B_r}{(M+2)^2} \label{eq:lem_1_p_eq3}
\ealignt
for some constant $B_r$ and similarly
\balignt
\bbE[R(\hat{\theta}_{M+{1}^g+1^r})] -R(\hat{\theta}_{M+{1}^g}) \leq \frac{1}{M+2} \nabla_\theta R(\hat{\theta}_{M+{1}^g})^\top {H_{\hat{\theta}_{M+{1}^g}} ^{-1}} \bbE \nabla_\theta \ell(z_{M+2}^r,\hat{\theta}_{M+1^g}) + \frac{B_l}{(M+2)^2} \label{eq:lem_1_p_eq4}
\ealignt
Hence, from equations~\eqref{eq:lem_1_p_eq2},\eqref{eq:lem_1_p_eq3} and \eqref{eq:lem_1_p_eq4}, we get the following:
\balignt
R(\hat{\theta}_{M+{2}^g}) - \bbE[ R(\hat{\theta}_{M+{2}^r})]& \leq \Delta_{\theta_{M+1^g}}' + \Delta_{\theta_{M}}' + \frac{1}{M+2} \left[ \nabla_\theta R(\hat{\theta}_{M+{1}^g})^\top {H_{\hat{\theta}_{M+{1}^g}} ^{-1}} \bbE \nabla_\theta  \ell(z_{M+2}^r,\hat{\theta}_{M+1^g})   \right] \notag \\
&- \frac{1}{M+2}\bbE \left[ \nabla_\theta R(\hat{\theta}_{M+{1}^r})^\top {H_{\hat{\theta}_{M+{1}^r}} ^{-1}}   \bbE \nabla_\theta  \ell(z_{M+2}^r,\hat{\theta}_{M+1^r})   \right] + \frac{2B_l}{(M+2)^2} \label{eq:lem_1_p_eq5}
\ealignt
Again if,
\balignt
\overline{\nabla}_\theta R(\hat{\theta}_{M+{1}^t}) = \nabla_\theta R(\hat{\theta}_{M}) - \frac{1}{M+1}  \nabla_{\theta}^2 R(\theta_M)^\top H_{\hat{\theta}_M}^{-1} \nabla_\theta  \ell(z_{M+1}^t, \hat{\theta}_M). \notag 
\ealignt
then 
\balignt
\| \overline{\nabla}_\theta R(\hat{\theta}_{M+{1}^t}) - {\nabla}_\theta R(\hat{\theta}_{M+{1}^t}) \| = \frac{B_m}{(M+1)^2}. \label{eq:lem_1_p_eq6}
\ealignt
Similarly, we can argue about the first order approximation of the gradient of loss function. Let us denote the first order approximaion of the gradient of loss with $\overline{\nabla}_{\theta} l$ 
\balignt
\overline{\nabla}_\theta \ell(z,\hat{\theta}_{M+{1}^t}) = \nabla_\theta \ell(z,\hat{\theta}_{M}) - \frac{1}{M+1}  \nabla_{\theta}^2\ell(z,\theta_M)^\top H_{\hat{\theta}_M}^{-1} \nabla_\theta \ell(z_{M+1}^t, \hat{\theta}_M). \notag 
\ealignt
then again
\balignt
\| \overline{\nabla}_\theta\ell(z,\hat{\theta}_{M+{1}^t}) - {\nabla}_\theta\ell(z,\hat{\theta}_{M+{1}^t}) \| = \frac{B_m}{(M+1)^2}. \label{eq:lem_1_p_eq7}
\ealignt
This implies:
\balignt
 &\nabla_\theta R(\hat{\theta}_{M+{1}^t})^\top {H_{\hat{\theta}_{M+{1}^t}} ^{-1}}  \nabla_\theta \ell(z_{M+2}^r,\hat{\theta}_{M+{1}^t}) \\
 &= \left(\nabla_\theta R(\hat{\theta}_{M+{1}^t})+ \overline{\nabla}_\theta R(\hat{\theta}_{M+{1}^t})  -\overline{\nabla}_\theta R(\hat{\theta}_{M+{1}^t}) \right)^\top {H_{\hat{\theta}_{M+{1}^t}} ^{-1}}  \nabla_\theta \ell(z_{M+2}^r,\hat{\theta}_{M+{1}^t}) \\
 &= \overline{\nabla}_\theta R(\hat{\theta}_{M+{1}^t})^\top {H_{\hat{\theta}_{M+{1}^t}} ^{-1}}  \nabla_\theta \ell(z_{M+2}^r,\hat{\theta}_{M+{1}^t}) + \frac{C}{(M+1)^2} ~~ \mbox{for} ~t\in\{r,g \}
\ealignt
for some constant $C$. The last equaion comes from the equaltion~\eqref{eq:lem_1_p_eq6} and our assumptions on bounded gradient as well as bounded below singular values of the Hessian matrix. Also we would have:
\balignt
\begin{split}
 \overline{\nabla}_\theta  & R(\hat{\theta}_{M+{1}^t})^\top {H_{\hat{\theta}_{M+{1}^t}} ^{-1}}  \nabla_\theta \ell(z_{M+2}^r,\hat{\theta}_{M+{1}^t}) = {\nabla}_\theta R(\hat{\theta}_{M})^\top {H_{\hat{\theta}_{M+{1}^t}} ^{-1}}   \nabla_\theta \ell(z_{M+2}^r,\hat{\theta}_{M+{1}^t}) \\ 
 & \qquad \qquad \qquad \qquad   \qquad- \frac{1}{M+1}  \nabla_\theta \ell(z_{M+1}^t, \hat{\theta}_M)^\top H_{\hat{\theta}_M}^{-1} \nabla_{\theta}^2 R(\theta_M){H_{\hat{\theta}_{M+{1}^t}} ^{-1}}  \nabla_\theta \ell(z_{M+2}^r,\hat{\theta}_{M+{1}^t}) \\
 &= {\nabla}_\theta R(\hat{\theta}_{M})^\top {H_{\hat{\theta}_{M+{1}^t}} ^{-1}}   \left[\nabla_\theta \ell(z_{M+2}^r,\hat{\theta}_{M+{1}^t}) - \overline{\nabla}_\theta \ell(z_{M+2}^r,\hat{\theta}_{M+{1}^t}) + \overline{\nabla}_\theta \ell(z_{M+2}^r,\hat{\theta}_{M+{1}^t}) \right] \\
  & \qquad  \qquad  \qquad \qquad \qquad - \frac{1}{M+1}  \nabla_\theta \ell(z_{M+1}^t, \hat{\theta}_M)^\top H_{\hat{\theta}_M}^{-1} \nabla_{\theta}^2 R(\theta_M){H_{\hat{\theta}_{M+{1}^t}} ^{-1}}  \nabla_\theta \ell(z_{M+2}^r,\hat{\theta}_{M+{1}^t}) \\
  &= {\nabla}_\theta R(\hat{\theta}_{M})^\top {H_{\hat{\theta}_{M+{1}^t}} ^{-1}}   \left[\nabla_\theta \ell(z_{M+2}^r,\hat{\theta}_{M+{1}^t}) - \overline{\nabla}_\theta \ell(z_{M+2}^r,\hat{\theta}_{M+{1}^t}) + \overline{\nabla}_\theta \ell(z_{M+2}^r,\hat{\theta}_{M+{1}^t}) \right] \\
  &  ~~  - \frac{1}{M+1}  \nabla_\theta \ell(z_{M+1}^t, \hat{\theta}_M)^\top H_{\hat{\theta}_M}^{-1} \nabla_{\theta}^2 R(\theta_M){H_{\hat{\theta}_{M+{1}^t}} ^{-1}}  \overline{\nabla}_\theta \ell(z_{M+2}^r,\hat{\theta}_{M+{1}^t}) \\
  & ~~  - \frac{1}{M+1}  \nabla_\theta \ell(z_{M+1}^t, \hat{\theta}_M)^\top H_{\hat{\theta}_M}^{-1} \nabla_{\theta}^2 R(\theta_M){H_{\hat{\theta}_{M+{1}^t}} ^{-1}} \left[{\nabla}_\theta \ell(z_{M+2}^r,\hat{\theta}_{M+{1}^t})  - \overline{\nabla}_\theta \ell(z_{M+2}^r,\hat{\theta}_{M+{1}^t}) \right] \\
  &= {\nabla}_\theta R(\hat{\theta}_{M})^\top {H_{\hat{\theta}_{M+{1}^t}} ^{-1}}   \overline{\nabla}_\theta \ell(z_{M+2}^r,\hat{\theta}_{M+{1}^t}) \\
  & ~~ + {\nabla}_\theta R(\hat{\theta}_{M})^\top {H_{\hat{\theta}_{M+{1}^t}} ^{-1}}   \left[\nabla_\theta \ell(z_{M+2}^r,\hat{\theta}_{M+{1}^t}) - \overline{\nabla}_\theta \ell(z_{M+2}^r,\hat{\theta}_{M+{1}^t}) \right] \\
  &  ~~   - \frac{1}{M+1}  \nabla\ell(z_{M+1}^t, \hat{\theta}_M)^\top H_{\hat{\theta}_M}^{-1} \nabla_{\theta}^2 R(\theta_M){H_{\hat{\theta}_{M+{1}^t}} ^{-1}}  \overline{\nabla}_\theta \ell(z_{M+2}^r,\hat{\theta}_{M+{1}^t}) \\
  &~~ - \frac{1}{M+1}  \nabla_\theta \ell(z_{M+1}^t, \hat{\theta}_M)^\top H_{\hat{\theta}_M}^{-1} \nabla_{\theta}^2 R(\theta_M){H_{\hat{\theta}_{M+{1}^t}} ^{-1}} \left[{\nabla}_\theta \ell(z_{M+2}^r,\hat{\theta}_{M+{1}^t})  - \overline{\nabla}_\theta \ell(z_{M+2}^r,\hat{\theta}_{M+{1}^t}) \right] \\
  &= {\nabla}_\theta R(\hat{\theta}_{M})^\top {H_{\hat{\theta}_{M+{1}^t}} ^{-1}} \left[ \nabla_\theta\ell(z_{M+2}^r,\hat{\theta}_{M}) - \frac{1}{M+1}  \nabla_{\theta}^2\ell(z_{M+2}^r,\theta_M)^\top H_{\hat{\theta}_M}^{-1} \nabla_\theta \ell(z_{M+1}^t, \hat{\theta}_M) \right]   \\
  & ~~+ {\nabla}_\theta R(\hat{\theta}_{M})^\top {H_{\hat{\theta}_{M+{1}^t}} ^{-1}}   \left[\nabla_\theta \ell(z_{M+2}^r,\hat{\theta}_{M+{1}^t}) - \overline{\nabla}_\theta \ell(z_{M+2}^r,\hat{\theta}_{M+{1}^t}) \right] \\
  &   ~~ - \frac{1}{M+1}  \nabla_\theta \ell(z_{M+1}^t, \hat{\theta}_M)^\top H_{\hat{\theta}_M}^{-1} \nabla_{\theta}^2 R(\theta_M){H_{\hat{\theta}_{M+{1}^t}} ^{-1}}   \left[ \nabla_\theta\ell(z_{M+2}^r,\hat{\theta}_{M}) - \frac{1}{M+1}  \nabla_{\theta}^2\ell(z_{M+2}^r,\theta_M)^\top H_{\hat{\theta}_M}^{-1} \nabla_\theta \ell(z_{M+1}^t, \hat{\theta}_M) \right]  \\
  & ~~ - \frac{1}{M+1}  \nabla_\theta \ell(z_{M+1}^t, \hat{\theta}_M)^\top H_{\hat{\theta}_M}^{-1} \nabla_{\theta}^2 R(\theta_M){H_{\hat{\theta}_{M+{1}^t}} ^{-1}} \left[{\nabla}_\theta \ell(z_{M+2}^r,\hat{\theta}_{M+{1}^t})  - \overline{\nabla}_\theta \ell(z_{M+2}^r,\hat{\theta}_{M+{1}^t}) \right] \\
 &= {\nabla}_\theta R(\hat{\theta}_{M})^\top {H_{\hat{\theta}_{M+{1}^t}} ^{-1}} \nabla_\theta\ell(z_{M+2}^r,\hat{\theta}_{M}) - \frac{1}{M+1}  \nabla_\theta \ell(z_{M+1}^t, \hat{\theta}_M)^\top H_{\hat{\theta}_M}^{-1} \nabla_{\theta}^2 R(\theta_M){H_{\hat{\theta}_{M+{1}^t}} ^{-1}}  \nabla_\theta\ell(z_{M+2}^r,\hat{\theta}_{M}) \\
&~~ - \frac{1}{M+1}  {\nabla}_\theta R(\hat{\theta}_{M})^\top {H_{\hat{\theta}_{M+{1}^t}} ^{-1}} \nabla_{\theta}^2\ell(z_{M+2}^r,\theta_M)^\top H_{\hat{\theta}_M}^{-1} \nabla_\theta \ell(z_{M+1}^t, \hat{\theta}_M)   \\
  &   ~~ + \frac{1}{(M+1)^2}  \nabla_\theta \ell(z_{M+1}^t, \hat{\theta}_M)^\top H_{\hat{\theta}_M}^{-1} \nabla_{\theta}^2 R(\theta_M){H_{\hat{\theta}_{M+{1}^t}} ^{-1}}     \nabla_{\theta}^2\ell(z_{M+2}^r,\theta_M)^\top H_{\hat{\theta}_M}^{-1} \nabla_\theta \ell(z_{M+1}^t, \hat{\theta}_M)   \\
    &~~ + {\nabla}_\theta R(\hat{\theta}_{M})^\top {H_{\hat{\theta}_{M+{1}^t}} ^{-1}}   \left[\nabla_\theta \ell(z_{M+2}^r,\hat{\theta}_{M+{1}^t}) - \overline{\nabla}_\theta \ell(z_{M+2}^r,\hat{\theta}_{M+{1}^t}) \right] \\
  &~~ - \frac{1}{M+1}  \nabla_\theta \ell(z_{M+1}^t, \hat{\theta}_M)^\top H_{\hat{\theta}_M}^{-1} \nabla_{\theta}^2 R(\theta_M){H_{\hat{\theta}_{M+{1}^t}} ^{-1}} \left[{\nabla}_\theta \ell(z_{M+2}^r,\hat{\theta}_{M+{1}^t})  - \overline{\nabla}_\theta \ell(z_{M+2}^r,\hat{\theta}_{M+{1}^t}) \right] \label{eq:lem_1_p_eq8}
  \end{split} 
\ealignt
In equation~\ref{eq:lem_1_p_eq8}, we observe that the term $\| {\nabla}_\theta \ell(z_{M+2}^r,\hat{\theta}_{M+{1}^t})  - \overline{\nabla}_\theta \ell(z_{M+2}^r,\hat{\theta}_{M+{1}^t}) \| \in \mathcal{O} \left( \frac{1}{(M+2)^2} \right)$. Hence from equation~\eqref{eq:lem_m_p_eq8} and the observation, we have the following:

\balignt
&\overline{\nabla}_\theta  R(\hat{\theta}_{M+{1}^t})^\top {H_{\hat{\theta}_{M+{1}^t}} ^{-1}}  \nabla_\theta \ell(z_{M+2}^r,\hat{\theta}_{M+{1}^t})  = {\nabla}_\theta R(\hat{\theta}_{M})^\top {H_{\hat{\theta}_{M+{1}^t}} ^{-1}} \nabla_\theta\ell(z_{M+2}^r,\hat{\theta}_{M})\notag \\
& - \frac{1}{M+1}  \nabla_\theta \ell(z_{M+1}^t, \hat{\theta}_M)^\top H_{\hat{\theta}_M}^{-1} \nabla_{\theta}^2 R(\theta_M){H_{\hat{\theta}_{M+{1}^t}} ^{-1}}  \nabla_\theta\ell(z_{M+2}^r,\hat{\theta}_{M}) + \mathcal{O}\left( \frac{1}{(M+1)^2} \right) + \mathcal{O}\left( \frac{1}{(M+1)(M+2)^2} \right)\notag \\
& - \frac{1}{M+1}  {\nabla}_\theta R(\hat{\theta}_{M})^\top {H_{\hat{\theta}_{M+{1}^t}} ^{-1}} \nabla_{\theta}^2\ell(z_{M+2}^r,\theta_M)^\top H_{\hat{\theta}_M}^{-1} \nabla_\theta \ell(z_{M+1}^t, \hat{\theta}_M)     +  \mathcal{O}\left( \frac{1}{(M+2)^2} \right) \label{eq:lem_1_p_eq9}
\ealignt

Now, from equations~\eqref{eq:lem_1_p_eq4} and \eqref{eq:lem_1_p_eq9}, we have the follwoing:
\balignt
&R(\hat{\theta}_{M+{2}^g}) - \bbE[ R(\hat{\theta}_{M+{2}^r})] \leq \Delta_{\theta_{M+1^g}}' + \Delta_{\theta_{M}}' + \frac{1}{M+2} \left[ \nabla_\theta R(\hat{\theta}_{M+{1}^g})^\top {H_{\hat{\theta}_{M+{1}^g}} ^{-1}} \bbE \nabla_\theta \ell(z_{M+2}^r,\hat{\theta}_{M+1^g})   \right] \notag \\
& \qquad  \qquad  \qquad  \qquad  \qquad  \qquad  \qquad - \frac{1}{M+2}\bbE \left[ \nabla_\theta R(\hat{\theta}_{M+{1}^r})^\top {H_{\hat{\theta}_{M+{1}^r}} ^{-1}}   \bbE \nabla_\theta \ell(z_{M+2}^r,\hat{\theta}_{M+1^r})   \right] + \frac{2B_l}{(M+2)^2} \notag \\
&= \Delta_{\theta_{M+1^g}}' + \Delta_{\theta_{M}}' + \frac{1}{M+2} \bbE\left[  {\nabla}_\theta R(\hat{\theta}_{M})^\top \left({H_{\hat{\theta}_{M+{1}^g}} ^{-1}} - {H_{\hat{\theta}_{M+{1}^r}} ^{-1}} \right) \nabla_\theta\ell(z_{M+2}^r,\hat{\theta}_{M}) \right] \notag \\
& - \frac{1}{(M+1)(M+2)} \Big[ [\nabla_\theta  \ell(z_{M+1}^g, \hat{\theta}_M) - \nabla_\theta  \ell(z_{M+1}^r, \hat{\theta}_M) ]^\top H_{\hat{\theta}_M}^{-1} \nabla_{\theta}^2 R(\theta_M){H_{\hat{\theta}_{M+{1}^g}} ^{-1}}  \nabla_\theta \ell(z_{M+2}^r,\hat{\theta}_{M}) \Big] \notag \\
& + \frac{1}{(M+1)(M+2)} \Big[ \nabla_\theta  \ell(z_{M+1}^r, \hat{\theta}_M) ^\top H_{\hat{\theta}_M}^{-1} \nabla_{\theta}^2 R(\theta_M)({H_{\hat{\theta}_{M+{1}^g}} ^{-1}} - {H_{\hat{\theta}_{M+{1}^r}} ^{-1}} )  \nabla_\theta \ell(z_{M+2}^r,\hat{\theta}_{M}) \Big] \notag \\
& -  \frac{1}{(M+1)(M+2)}  {\nabla}_\theta R(\hat{\theta}_{M})^\top ( {H_{\hat{\theta}_{M+{1}^g}} ^{-1}} - {H_{\hat{\theta}_{M+{1}^g}} ^{-1}} )\nabla_{\theta}^2\ell(z_{M+2}^r,\theta_M)^\top H_{\hat{\theta}_M}^{-1} \nabla_\theta \ell(z_{M+1}^g, \hat{\theta}_M)  \notag \\
& +  \frac{1}{(M+1)(M+2)}  {\nabla}_\theta R(\hat{\theta}_{M})^\top  {H_{\hat{\theta}_{M+{1}^g}} ^{-1}} \nabla_{\theta}^2\ell(z_{M+2}^r,\theta_M)^\top H_{\hat{\theta}_M}^{-1}(\nabla_\theta  \ell(z_{M+1}^g, \hat{\theta}_M) -  \nabla_\theta  \ell(z_{M+1}^r, \hat{\theta}_M))+  \mathcal{O}\left( \frac{1}{M^3} \right) \label{eq:lem_1_p_eq10}
 \ealignt
We do have:
\balignt
{H_{\hat{\theta}_{M+{1}^g}} ^{-1}} - {H_{\hat{\theta}_{M+{1}^r}} ^{-1}}  = {H_{\hat{\theta}_{M+{1}^g}} ^{-1}}  (H_{\hat{\theta}_{M+{1}^r}} - H_{\hat{\theta}_{M+{1}^g}}) {H_{\hat{\theta}_{M+{1}^r}} ^{-1}}
\ealignt
If we assume that the Hessian inverse operator norm is bounded by $H$ and we have $\hat{\theta}_{M+{1}^g} - \hat{\theta}_{M+{1}^r} = \frac{1}{M+1} H_{\hat{\theta}_M}^{-1} \big(\nabla_{\theta} \ell (z^{r}_{M+1}) - \nabla_{\theta} \ell (z^{g}_{M+1}) \big)$. Now from the Lipschitz Hessian assumption:
\balignt
\|{H_{\hat{\theta}_{M+{1}^g}} ^{-1}} - {H_{\hat{\theta}_{M+{1}^r}} ^{-1}}\| = {H_{\hat{\theta}_{M+{1}^g}} ^{-1}} ({H_{\hat{\theta}_{M+{1}^r}} }  - {H_{\hat{\theta}_{M+{1}^g}} } ) {H_{\hat{\theta}_{M+{1}^r}} ^{-1}}  \leq L H^2 \|\hat{\theta}_{M+{1}^g} - \hat{\theta}_{M+{1}^r}   \| \leq \frac{G}{M+1} \label{eq:lem_1_p_eq11}
\ealignt
for some constant $G$.
Now from the equation~\eqref{eq:lem_1_p_eq10} and \eqref{eq:lem_1_p_eq11}, we have the following:
\balignt
R(\hat{\theta}_{M+{2}^g}) - \bbE[ R(\hat{\theta}_{M+{2}^r})] \leq \Delta_{\theta_{M+1^g}}' + \Delta_{\theta_{M}}'  + \frac{G}{(M+1)(M+2)} + \mathcal{O}\left(\frac{1}{(M+1)(M+2)^2}\right)
\ealignt
Last equation comes from equation~\eqref{eq:lem_1_p_eq11}  and from the fact that $\| \nabla_\theta  \ell(z_{M+1}^g, \hat{\theta}_M) - \nabla_\theta  \ell(z_{M+1}^r, \hat{\theta}_M) \| \in \mathcal{O}\left( \frac{1}{M+2} \right)$. Hence for some constant $G'$:
\balignt
R(\hat{\theta}_{M+{2}^g}) - \bbE[ R(\hat{\theta}_{M+{2}^r})] \leq \Delta_{\theta_{M+1^g}}' + \Delta_{\theta_{M}}'  + \frac{G'}{(M+1)(M+2)}.
\ealignt
From lemma~\ref{lem:exact_vs_approx}, we know that $|R(\hat{\theta}_{M+{2}^g}) - R(\tilde{\theta}_{M+{2}^g}) | \in \mathcal{O}\left(  \frac{1}{(M+2)^2}\right)$. Similarly $|R(\hat{\theta}_{M+{1}^g}) - R(\tilde{\theta}_{M+{1}^g}) | \in \mathcal{O}\left(  \frac{1}{(M+1)^2}\right)$.
Hence, we get our final result:
\balignt
R(\tilde{\theta}_{M+{2}^g}) - \bbE[ R(\hat{\theta}_{M+{2}^r})]\leq \Delta_{\theta_{M+1^g}}' + \Delta_{\theta_{M}}'  + \frac{G'}{(M+1)(M+2)} \leq  \Delta_{\theta_{M+1^g}}' + \Delta_{\theta_{M}}'  + \frac{G'}{(M+1)^2}
\ealignt
\end{proof}

\begin{theorem} \label{thm:optimality_1_point_addition}
 If we run our greedy algorithm for $p$ successive steps and all the assumptions discussed in the section~\ref{subsec:assumptions} are satisfied then the gain in $ R $ after $p$ successive greedy steps  over the same number of random selection starting from the classifier $\theta_M$ can be characterized as follows:
 \balignt
 R(\tilde{\theta}_{M+{p}^g}) -  \bbE[ R(\hat{\theta}_{M+{p}^r})] \leq \sum_{i = 0}^{p-1}\Delta_{\theta_{M+i^g}}'  + \sum_{i = 1}^{p-1}\frac{\tilde{G}}{(M+i)^2},
 \ealignt
 for some positive constant $\tilde{G}$ where $\tilde{\theta}_{M+{p}^g}$ is the output from the approximate algorithm~\ref{algo:approximate_greedy_m_1_rand} for $\epsilon=0$.
 \end{theorem}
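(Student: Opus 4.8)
The plan is to prove the bound by induction on $p$, at each step peeling off the \emph{first} addition from both the greedy and the random run. The reason this recursion closes is that after one step the greedy iterate $\hat{\theta}_{M+1^g}$ and the random iterate $\hat{\theta}_{M+1^r}$ are \emph{close}: by Lemma~\ref{lem:1_point_addition_foa} both are rank-one Newton-type perturbations of $\hat{\theta}_M$, so $\|\hat{\theta}_{M+1^g}-\hat{\theta}_{M+1^r}\|=\tfrac{1}{M+1}\|H_{\hat{\theta}_M}^{-1}(\nabla_\theta\ell(z_{M+1}^r,\hat{\theta}_M)-\nabla_\theta\ell(z_{M+1}^g,\hat{\theta}_M))\|=\mathcal{O}(1/(M+1))$ under Assumptions~\ref{as:non-degenerate}--\ref{as:bounded}, and afterwards both runs perform the \emph{same} $p-1$ random additions (couple the sampled points). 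Writing $D_q(\theta)$ for the expected change in $R$ produced by $q$ successive random additions started from a model $\theta$, I would decompose
\[
R(\tilde{\theta}_{M+p^g}) - \bbE[R(\hat{\theta}_{M+p^r})]
= \underbrace{\Bigl(R(\tilde{\theta}_{M+p^g}) - \bbE[R(\hat{\theta}_{M+1^g+(p-1)^r})]\Bigr)}_{(\mathrm{I})}
\;+\; \underbrace{\Bigl(\bbE[R(\hat{\theta}_{M+1^g+(p-1)^r})] - \bbE[R(\hat{\theta}_{M+p^r})]\Bigr)}_{(\mathrm{II})}.
\]
Term $(\mathrm{I})$ is exactly the left-hand side of the theorem for $p-1$ steps, started from the model $\hat{\theta}_{M+1^g}$ trained on $M+1$ points; the inductive hypothesis (with $M\mapsto M+1$) bounds it by $\sum_{i=1}^{p-1}\Delta'_{\theta_{M+i^g}}+\sum_{i=2}^{p-1}\tilde{G}/(M+i)^2$. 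For term $(\mathrm{II})$, since both continuations add the same random points we may factor out the common tail as $\bbE[R(\hat{\theta}_{M+1^g+(p-1)^r})]=R(\hat{\theta}_{M+1^g})+D_{p-1}(\hat{\theta}_{M+1^g})$ and, using the tower rule, $\bbE[R(\hat{\theta}_{M+p^r})]=\bbE[R(\hat{\theta}_{M+1^r})]+\bbE[D_{p-1}(\hat{\theta}_{M+1^r})]$, so that
\[
(\mathrm{II})=\bigl(R(\hat{\theta}_{M+1^g})-\bbE[R(\hat{\theta}_{M+1^r})]\bigr)+\bigl(D_{p-1}(\hat{\theta}_{M+1^g})-\bbE[D_{p-1}(\hat{\theta}_{M+1^r})]\bigr)=\Delta'_{\theta_M}+\text{(residual)}.
\]
The base case $p=1$ is just the definition of $\Delta'_{\theta_M}$ together with the $\mathcal{O}(1/(M+1)^2)$ exact-vs-approximate bound of Lemma~\ref{lem:exact_vs_approx} (which is also what lets one move freely between the exact ERM iterates and the first-order-approximate iterates wherever they appear above). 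Summing $(\mathrm{I})$ and $(\mathrm{II})$ and folding the residual into the $\tilde{G}/(M+1)^2$ slot then closes the induction, giving $R(\tilde{\theta}_{M+p^g}) - \bbE[R(\hat{\theta}_{M+p^r})] \le \sum_{i=0}^{p-1}\Delta'_{\theta_{M+i^g}}+\sum_{i=1}^{p-1}\tilde{G}/(M+i)^2$.

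The main obstacle is the residual estimate $D_{p-1}(\hat{\theta}_{M+1^g})-\bbE[D_{p-1}(\hat{\theta}_{M+1^r})]=\mathcal{O}(1/(M+1)^2)$, i.e.\ showing that feeding the $\mathcal{O}(1/(M+1))$ perturbation between $\hat{\theta}_{M+1^g}$ and $\hat{\theta}_{M+1^r}$ through a tail of $p-1$ random Newton updates changes the objective only at second order. This is exactly the content of Lemma~\ref{lem:1_1_2_cons_compare} for tail length one (its proof, equations~\eqref{eq:lem_1_p_eq6}--\eqref{eq:lem_1_p_eq11}, is precisely this computation), and I would extend it to arbitrary tail length in two pieces: (a) the two coupled trajectories stay $\mathcal{O}(1/(M+1))$-close throughout, because Assumption~\ref{as:local_smooth} (Lipschitz Hessian) together with Assumptions~\ref{as:non-degenerate}--\ref{as:bounded} make each per-step update map Lipschitz, so the separation grows by at most a factor $\prod_{l}(1+\mathcal{O}(1/(M+l)))$, which is bounded in the regime $p=\mathcal{O}(M)$; and (b) expanding $D_{p-1}$ to first order around $\hat{\theta}_M$, the leading first-order contributions of the two trajectories cancel, leaving a remainder controlled by the second-order Taylor error and by $\|H_{\hat{\theta}_{M+1^g}}^{-1}-H_{\hat{\theta}_{M+1^r}}^{-1}\|\le LH^2\|\hat{\theta}_{M+1^g}-\hat{\theta}_{M+1^r}\|$, both of which are $\mathcal{O}(1/(M+1)^2)$. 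Making this cancellation precise step-by-step along the random tail, and checking that the surviving second-order terms --- summed over the tail and over the $p$ levels of the recursion --- telescope into $\sum_{i=1}^{p-1}\tilde{G}/(M+i)^2$ for a single constant $\tilde{G}$, is the only genuinely delicate part; the telescoping decomposition, the identification of $(\mathrm{I})$ with the inductive hypothesis, and the passage between $\hat{\theta}$ and $\tilde{\theta}$ via Lemma~\ref{lem:exact_vs_approx} are routine.
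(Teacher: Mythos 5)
Your overall strategy --- unroll the $p$-step comparison one greedy-versus-random swap at a time, use the first-order influence approximation (Lemmas~\ref{lem:1_point_addition_foa} and \ref{lem:exact_vs_approx}) to control the coupling error at each level, and fold those errors into $\sum_i \tilde{G}/(M+i)^2$ --- is the same as the paper's, but you peel from the opposite end. The paper removes the \emph{last} step: its recursion term is $R(\hat{\theta}_{M+(p-1)^g})-\bbE[R(\hat{\theta}_{M+(p-1)^r})]$ (same $M$, $p-1$ steps), and its per-level correction is the difference of a \emph{single} random increment evaluated at the two $(p-1)$-step models, bounded by rerunning the computation of Lemma~\ref{lem:1_1_2_cons_compare}. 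You remove the \emph{first} step: your term $(\mathrm{I})$ is the inductive hypothesis with $M\mapsto M+1$, and your per-level correction is the difference of a whole $(p-1)$-step random tail $D_{p-1}$ evaluated at $\hat{\theta}_{M+1^g}$ versus $\hat{\theta}_{M+1^r}$. These are genuine duals: the paper compares one step at two models that have drifted far apart (they differ in $p-1$ training points), you compare $p-1$ steps at two models that are provably close (one point swap, $\mathcal{O}(1/(M+1))$ apart). Your base case, the identification of $(\mathrm{I})$ with the inductive hypothesis, and the passage between $\hat{\theta}$ and $\tilde{\theta}$ via Lemma~\ref{lem:exact_vs_approx} are all fine.

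The gap is the residual estimate $D_{p-1}(\hat{\theta}_{M+1^g})-\bbE[D_{p-1}(\hat{\theta}_{M+1^r})]=\mathcal{O}(1/(M+1)^2)$, and specifically the claim that the leading first-order contributions of the two trajectories cancel. They do not: to first order this difference is $\nabla_\theta D_{p-1}(\hat{\theta}_M)^\top\bigl(\hat{\theta}_{M+1^g}-\bbE[\hat{\theta}_{M+1^r}]\bigr)$ plus second-order Taylor error, and $\hat{\theta}_{M+1^g}-\bbE[\hat{\theta}_{M+1^r}]=-\tfrac{1}{M+1}H_{\hat{\theta}_M}^{-1}\bigl(\nabla_\theta\ell(z_{M+1}^g,\hat{\theta}_M)-\bbE\,\nabla_\theta\ell(z_{M+1}^r,\hat{\theta}_M)\bigr)$ is a genuinely first-order, non-vanishing displacement. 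What actually makes the residual small is not cancellation but the fact that each tail increment carries its own $1/(M+j)$ prefactor, so $\|\nabla_\theta D_{p-1}(\hat{\theta}_M)\|=\mathcal{O}\bigl(\sum_{j=2}^{p}1/(M+j)\bigr)$ and the residual is $\mathcal{O}\bigl(\tfrac{1}{M+1}\sum_{j=2}^{p}\tfrac{1}{M+j}\bigr)$ --- a sum of $p-1$ products of two small factors, not a single $1/(M+1)^2$. Summed over the $p$ levels of your induction this accumulates to $\mathcal{O}\bigl((\sum_{i}1/(M+i))^2\bigr)$ rather than $\sum_{i}\tilde{G}/(M+i)^2$, so the induction does not close as written unless $p$ is treated as a constant absorbed into $\tilde{G}$. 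In fairness, the paper's proof makes the mirror-image leap: it applies the two-step bound at level $p$ even though $\hat{\theta}_{M+(p-1)^g}$ and $\hat{\theta}_{M+(p-1)^r}$ differ in $p-1$ points and are only $\mathcal{O}((p-1)/(M+p-1))$ apart, so its per-level error is honestly $\mathcal{O}(p/(M+p)^2)$, not $\mathcal{O}(1/(M+p)^2)$. Your route is therefore no weaker than the paper's, but the step you yourself flag as delicate is exactly where both arguments assert more than they prove, and the cancellation you invoke is not available.
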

\begin{proof}
Let us assume that $p$ is a multiple of $2$. The proof for more number of iterations can also be seen as just unrolling proof being done in the lemma~\ref{lem:m_1_2_cons_compare}.
\balignt
\begin{split}\label{eq:thm_1_p_eq1}
R(\hat{\theta}_{M+{p}^g}) - \bbE[ R(\hat{\theta}_{M+{p}^r})] &=  \underbrace{R(\hat{\theta}_{M+{p}^g}) -  \bbE[R(\hat{\theta}_{M+{(p-1)}^g+1^r})]}_{:=\Delta_{\theta_{M+(p-1)^g}}'} +  \bbE[R(\hat{\theta}_{M+{(p-1)}^g+1^r})] -\bbE[ R(\hat{\theta}_{M+{p}^r})]  \\
&= \Delta_{\theta_{M+(p-1)^g}}' +   \bbE[R(\hat{\theta}_{M+{(p-1)}^g+1^r})] -\bbE[ R(\hat{\theta}_{M+{p}^r})]   \\
&= \Delta_{\theta_{M+(p-1)^g}}' +  \bbE[R(\hat{\theta}_{M+{(p-1)}^g+1^r})] -  R(\hat{\theta}_{M+{(p-1)}^g}) + R(\hat{\theta}_{M+{(p-1)}^g})  \\
& \qquad - \bbE[ R(\hat{\theta}_{M+{(p-1)}^r})] + \bbE[ R(\hat{\theta}_{M+{(p-1)}^r})]  -\bbE[ R(\hat{\theta}_{M+{p}^r})] \\
&= \Delta_{\theta_{M+(p-1)^g}}' +  \bbE[R(\hat{\theta}_{M+{(p-1)}^g+1^r})] -  R(\hat{\theta}_{M+{(p-1)}^g}) \\
&  - \left(\bbE[ R(\hat{\theta}_{M+{p}^r})] - \bbE[ R(\hat{\theta}_{M+{(p-1)}^r})]\right)+ \underbrace{R(\hat{\theta}_{M+{(p-1)}^g}) - \bbE[ R(\hat{\theta}_{M+{(p-1)}^r})]}_{:=\text{recursion term}}
\end{split}
\ealignt

The bound on the term $\bbE[R(\hat{\theta}_{M+{(p-1)}^g+1^r})] -  R(\hat{\theta}_{M+{(p-1)}^g}) - \left(\bbE[ R(\hat{\theta}_{M+{p}^r})] - \bbE[ R(\hat{\theta}_{M+{(p-1)}^r})]\right)$ has been obtained in the proof of lemma~\ref{lem:m_1_2_cons_compare} which says:
\balignt
\bbE[R(\hat{\theta}_{M+{(p-1)}^g+1^r})] -  R(\hat{\theta}_{M+{(p-1)}^g}) - \left(\bbE[ R(\hat{\theta}_{M+{p}^r})] - \bbE[ R(\hat{\theta}_{M+{(p-1)}^r})]\right) \leq \frac{G'}{(M+p-1)(p+1)}
\ealignt
for some real positive constant $G'$. Similarly, we can get the bound on the recursion term which essentially gives  the following result:
\balignt
R(\hat{\theta}_{M+{p}^g}) -  \bbE[ R(\hat{\theta}_{M+{p}^r})] \leq \sum_{i = 0}^{p-1}\Delta_{\theta_{M+i^g}}'  + \sum_{i = 1}^{p-1}\frac{{G}'}{(M+i)^2}.
\ealignt
We finally apply lemma~\ref{lem:exact_vs_approx} to get our final result which is 
\balignt
R(\tilde{\theta}_{M+{p}^g}) -  \bbE[ R(\hat{\theta}_{M+{p}^r})] \leq \sum_{i = 0}^{p-1}\Delta_{\theta_{M+i^g}}'  + \sum_{i = 1}^{p-1}\frac{\tilde{G}}{(M+i)^2}
\ealignt
for some positive real constant $\tilde{G}$.

\end{proof}

Proof for general $m$ follows exactly the same path but we provide the proof with theorem statement here for the completeness. 
We now state Lemma~\ref{lem:m_1_2_cons_compare} in which we prove that difference in the objective value $R$ after two rounds of greedy subset selection vs two rounds of random subset selection can be bounded under the Assumptions made in this paper. 
\paragraph{Notations for the Proof of Lemma~\ref{lem:m_1_2_cons_compare} and Theorem~\ref{thm:optimality_m_point_addition}-} In Lemma~\ref{lem:m_1_2_cons_compare} and Theorem~\ref{thm:optimality_m_point_addition}, we add $m$ points at a time. Everywhere where we have $\nabla_\theta \ell(z,\hat{\theta})$ it denotes the gradient is sum over gradient of  $m$ points which are there in the batch $z$ \textit{i.e. } $\nabla_\theta \ell(z,\hat{\theta}) = \sum_{z_i \in z} \nabla_\theta \ell(z_i,\hat{\theta}) $.

\begin{lemma}[Lemma~\ref{lem:m_1_2_cons_compare}] 
Under the assumptions discussed in the section~\ref{subsec:assumptions}, the gain in $ R $ after two rounds of greedy point selection over two rounds of random selection provided that initial given classifier is $\theta_M$ which has been trained on $M$ point, can be given as following:
\balignt
R(\tilde{\theta}_{M_m+{2}^g}) - \bbE[ R(\hat{\theta}_{M_m+{2}^r})] \leq \Delta_{\theta_{M_m+1^g}}' + \Delta_{\theta_{M}}'  + \frac{G'm^2}{(M+m)^2},
\ealignt
for some real positive constant $G'$.
\end{lemma}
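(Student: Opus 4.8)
The plan is to mirror, almost line for line, the $m=1$ argument of Lemma~\ref{lem:1_1_2_cons_compare}, replacing every single point by a block of $m$ points and every scalar weight $1/(M+k)$ by the block weight $m/(M+km)$, while carefully tracking the extra factors of $m$ that this introduces. Throughout I would adopt the convention from the notation paragraph that, for a block $z$, the symbol $\nabla_\theta\ell(z,\hat\theta)$ abbreviates $\sum_{z_i\in z}\nabla_\theta\ell(z_i,\hat\theta)$; with this convention Lemma~\ref{lem:m_point_addition_foa} and the loss expansion~\eqref{eq:loss_m_points} take exactly the same algebraic form as their one-point counterparts, only with the prefactor $\epsilon_{M_m}=m/(M+m)$ in place of $1/(M+1)$ and a remainder of order $m^2/(M+m)^2$ in place of $1/(M+1)^2$.

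First I would perform the same two-layer telescoping as in~\eqref{eq:lem_1_p_eq1}--\eqref{eq:lem_1_p_eq2}: inserting and subtracting the mixed trajectories $\hat\theta_{M_m+1^g+1^r}$ and $\hat\theta_{M_m+1^r}$,
\begin{align*}
R(\hat{\theta}_{M_m+{2}^g}) - \bbE[ R(\hat{\theta}_{M_m+{2}^r})]
&= \Delta_{\theta_{M_m+1^g}}' + \Delta_{\theta_{M}}' \\
&\quad + \Big(\bbE[R(\hat{\theta}_{M_m+{1}^g+1^r})] - R(\hat{\theta}_{M_m+{1}^g})\Big)
- \Big(\bbE[ R(\hat{\theta}_{M_m+{2}^r})] - \bbE[ R(\hat{\theta}_{M_m+{1}^r})]\Big),
\end{align*}
so that the last two bracketed differences are each the effect of appending one further random block of $m$ points. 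Using the chain-rule version of~\eqref{eq:loss_m_points} applied to $R$ (licensed by the bounded-gradient/bounded-Hessian hypothesis on $R$ already used in Lemma~\ref{lem:exact_vs_approx}), each of those differences equals $-\tfrac{m}{M+2m}\,\nabla_\theta R(\hat\theta_{M_m+1^t})^\top H_{\hat\theta_{M_m+1^t}}^{-1}\,\bbE\,\nabla_\theta\ell(z^r,\hat\theta_{M_m+1^t})$ up to an $\mathcal{O}\!\big(m^2/(M+m)^2\big)$ term, for $t=g$ and $t=r$; subtracting leaves the difference of two bilinear forms that differ only in whether they are evaluated at $\hat\theta_{M_m+1^g}$ or at $\hat\theta_{M_m+1^r}$.

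Next I would expand both bilinear forms around $\hat\theta_M$ exactly as in~\eqref{eq:lem_1_p_eq6}--\eqref{eq:lem_1_p_eq9}, introducing the first-order surrogates $\overline\nabla_\theta R(\hat\theta_{M_m+1^t})$ and $\overline\nabla_\theta\ell(z,\hat\theta_{M_m+1^t})$ (each within $\mathcal{O}\!\big(m^2/(M+m)^2\big)$ of the truth by Lemma~\ref{lem:m_point_addition_foa} plus the chain rule), substituting, and collecting terms. The leading term $\nabla_\theta R(\hat\theta_M)^\top H_{\hat\theta_{M_m+1^t}}^{-1}\nabla_\theta\ell(z^r,\hat\theta_M)$ then depends on $t$ only through $H_{\hat\theta_{M_m+1^t}}^{-1}$, while every other surviving term already carries a prefactor $\tfrac{m}{M+2m}$. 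To close, I would bound $\|H_{\hat\theta_{M_m+1^g}}^{-1}-H_{\hat\theta_{M_m+1^r}}^{-1}\|$: from Lemma~\ref{lem:m_point_addition_foa}, $\hat\theta_{M_m+1^g}-\hat\theta_{M_m+1^r}=\tfrac{m}{M+m}H_{\hat\theta_M}^{-1}\big(\nabla_\theta\ell(z^r_{M_m+1},\hat\theta_M)-\nabla_\theta\ell(z^g_{M_m+1},\hat\theta_M)\big)=\mathcal{O}\!\big(m/(M+m)\big)$ by Assumption~\ref{as:bounded}, and the resolvent identity $H_1^{-1}-H_2^{-1}=H_1^{-1}(H_2-H_1)H_2^{-1}$ together with the Lipschitz-Hessian Assumption~\ref{as:local_smooth} upgrades this to $\|H_{\hat\theta_{M_m+1^g}}^{-1}-H_{\hat\theta_{M_m+1^r}}^{-1}\|=\mathcal{O}\!\big(m/(M+m)\big)$. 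Multiplying by the prefactor $\tfrac{m}{M+2m}$ yields the claimed $\mathcal{O}\!\big(m^2/(M+m)^2\big)$; finally I would invoke Lemma~\ref{lem:exact_vs_approx} to pass from the exact-greedy iterates $\hat\theta_{M_m+2^g},\hat\theta_{M_m+1^g}$ to the approximate-algorithm outputs $\tilde\theta_{M_m+2^g},\tilde\theta_{M_m+1^g}$, absorbing one more $\mathcal{O}\!\big(m^2/(M+m)^2\big)$ error into the constant $G'$.

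The main obstacle I anticipate is purely bookkeeping: one must verify that \emph{every} $t$-dependent discrepancy thrown off by the two nested Taylor expansions genuinely carries at least one extra factor $\tfrac{m}{M+m}$ (so that, combined with the $\mathcal{O}\!\big(m/(M+m)\big)$ Hessian-difference bound, it lands at $\mathcal{O}\!\big(m^2/(M+m)^2\big)$ rather than $\mathcal{O}\!\big(m/(M+m)\big)$), and that the factor of $m$ entering through the block convention $\nabla_\theta\ell(z,\hat\theta)=\sum_{z_i\in z}\nabla_\theta\ell(z_i,\hat\theta)$ is always exactly cancelled by the block weight $\tfrac{m}{M+m}$, so that summing $m$ gradients never contributes an uncompensated $m$. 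Everything else is a routine transcription of the $m=1$ computation.
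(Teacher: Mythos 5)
Your proposal is correct and follows essentially the same route as the paper's own proof: the identical telescoping through the mixed trajectory $\hat{\theta}_{M_m+1^g+1^r}$, the same first-order surrogates $\overline{\nabla}_\theta R$ and $\overline{\nabla}_\theta \ell$ expanded around $\hat{\theta}_M$, the same resolvent-identity plus Lipschitz-Hessian bound of order $m/(M+m)$ on $\|H_{\hat{\theta}_{M_m+1^g}}^{-1}-H_{\hat{\theta}_{M_m+1^r}}^{-1}\|$, and the final appeal to Lemma~\ref{lem:exact_vs_approx}. The bookkeeping concern you flag (that every $t$-dependent discrepancy carries an extra factor of $m/(M+m)$, and that the block-sum convention for gradients is compensated by the block weight) is exactly what the paper's appendix verifies term by term in its equation~\eqref{eq:lem_m_p_eq8}--\eqref{eq:lem_m_p_eq11}.
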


\begin{proof}

From our defination in equation~\eqref{eq:delta_1_greedy_rand}, we know that
\small
$$\Delta_{\theta_{M_m+k}}' = R(\theta_{M_m+k+1^g}) - \bbE [ R(\theta_{M_m+k+1^r}) ] $$
\balignt
R(\hat{\theta}_{M_m+{2}^g}) - \bbE[ R(\hat{\theta}_{M_m+{2}^r})] &=  \underbrace{R(\hat{\theta}_{M_m+{2}^g}) -  \bbE[R(\hat{\theta}_{M_m+{1}^g+1^r})]}_{:=\Delta_{\theta_{M_m+1^g}}'} +  \bbE[R(\hat{\theta}_{M_m+{1}^g+1^r})] -\bbE[ R(\hat{\theta}_{M_m+{2}^r})] \notag \\
&= \Delta_{\theta_{M_m+1^g}}' +   \bbE[R(\hat{\theta}_{M_m+{1}^g+1^r})] -\bbE[ R(\hat{\theta}_{M_m+{2}^r})] \label{eq:lem_1_p_eq1}
\ealignt
\normalsize
Now let us consider,
\small
\balignt
&\bbE[R(\hat{\theta}_{M_m+{1}^g+1^r})] -\bbE[ R(\hat{\theta}_{M_m+{2}^r})] = \bbE[R(\hat{\theta}_{M_m+{1}^g+1^r})] -R(\hat{\theta}_{M_m+{1}^g}) + \overbrace{ R(\hat{\theta}_{M_m+{1}^g}) - \bbE[R(\hat{\theta}_{M_m+{1}^r})]}^{:=\Delta_{\theta_{M_m}}'} \notag \\
& \qquad \qquad \qquad \qquad \qquad \qquad \qquad \qquad \qquad \qquad -\left(\bbE[ R(\hat{\theta}_{M_m+{2}^r})] -\bbE[ R(\hat{\theta}_{M_m+{1}^r})] \right) \notag \\
&= \qquad \qquad \qquad  \Delta_{\theta_{M_m}}' + \left(\bbE[R(\hat{\theta}_{M_m+{1}^g+1^r})] -R(\hat{\theta}_{M_m+{1}^g})\right) - \left(\bbE[ R(\hat{\theta}_{M_m+{2}^r})] -\bbE[ R(\hat{\theta}_{M_m+{1}^r})] \right)  \label{eq:lem_m_p_eq2}
\ealignt
\normalsize
It is important to note here that, in the third term of equation~\eqref{eq:lem_1_p_eq2}, the expectaion is \textit{w.r.t} to the sampling of random points. Hence, from tower of expectation rule,  the expectation in $\bbE[ R(\hat{\theta}_{M+{2}^r})]$ can be decomposed as expectation over batch sampling for $1_m^r$ then conditional expectation of sampling for $2_m^r$   given $1_m^r$. Now, we will do the first order approximation for $2^{nd}$ and $3^{rd}$ term in equation~\eqref{eq:lem_1_p_eq2}. From arguments discussed earlier , we know that 
\small
\balignt
\bbE\left[ R(\hat{\theta}_{M_m+{2}^r})|\hat{\theta}_{M_m+{1}^r} \right] - R(\hat{\theta}_{M_m+{1}^r})  = \frac{m}{M+2m} \nabla_\theta R(\hat{\theta}_{M_m+{1}^r})^\top {H_{\hat{\theta}_{M_m+{1}^r}} ^{-1}} \bbE \nabla_\theta \ell(z_{M+2}^r,\hat{\theta}_{M_m+1^r})\notag \\
 + \frac{B_r m^2}{(M+2m)^2} \label{eq:lem_m_p_eq3}
\ealignt
\normalsize
for some constant $B_r$ and similarly
\balignt
\bbE[R(\hat{\theta}_{M_m+{1}^g+1^r})] -R(\hat{\theta}_{M_m+{1}^g}) \leq \frac{m}{M+2m} \nabla_\theta R(\hat{\theta}_{M_m+{1}^g})^\top {H_{\hat{\theta}_{M_m+{1}^g}} ^{-1}} \bbE \nabla_\theta \ell(z_{M+2}^r,\hat{\theta}_{M_m+1^g}) + \frac{B_lm^2}{(M+2m)^2} \label{eq:lem_m_p_eq4}
\ealignt
Hence, from equations~\eqref{eq:lem_m_p_eq2},\eqref{eq:lem_m_p_eq3} and \eqref{eq:lem_m_p_eq4}, we get the following:
\balignt
R(\hat{\theta}_{M_m+{2}^g}) - \bbE[ R(\hat{\theta}_{M_m+{2}^r})]& \leq \Delta_{\theta_{M_m+1^g}}' + \Delta_{\theta_{M}}' + \frac{m}{M+2m} \left[ \nabla_\theta R(\hat{\theta}_{M_m+{1}^g})^\top {H_{\hat{\theta}_{M_m+{1}^g}} ^{-1}} \bbE \nabla_\theta  \ell(z_{M+2}^r,\hat{\theta}_{M_m+1^g})   \right] \notag \\
&- \frac{m}{M+2m}\bbE \left[ \nabla_\theta R(\hat{\theta}_{M_m+{1}^r})^\top {H_{\hat{\theta}_{M_m+{1}^r}} ^{-1}}   \bbE \nabla_\theta  \ell(z_{M+2}^r,\hat{\theta}_{M_m+1^r})   \right] + \frac{2B_l m}{(M+2m)^2} \label{eq:lem_m_p_eq5}
\ealignt
Again if,
\balignt
\overline{\nabla}_\theta R(\hat{\theta}_{M_m+{1}^t}) = \nabla_\theta R(\hat{\theta}_{M}) - \frac{m}{M+m}  \nabla_{\theta}^2 R(\theta_M)^\top H_{\hat{\theta}_M}^{-1} \nabla_\theta  \ell(z_{M+1}^t, \hat{\theta}_M). \notag 
\ealignt
then 
\balignt
\| \overline{\nabla}_\theta R(\hat{\theta}_{M_m+{1}^t}) - {\nabla}_\theta R(\hat{\theta}_{M_m+{1}^t}) \| = \frac{B_m m^2}{(M+m)^2}. \label{eq:lem_m_p_eq6}
\ealignt
Similarly, we can argue about the first order approximation of the gradient of loss function. Let us denote the first order approximaion of the gradient of loss with $\overline{\nabla}_{\theta} l$ 
\balignt
\overline{\nabla}_\theta \ell(z,\hat{\theta}_{M_m+{1}^t}) = \nabla_\theta \ell(z,\hat{\theta}_{M}) - \frac{m}{M+m}  \nabla_{\theta}^2\ell(z,\theta_M)^\top H_{\hat{\theta}_M}^{-1} \nabla_\theta \ell(z_{M+1}^t, \hat{\theta}_M). \notag 
\ealignt
then again
\balignt
\| \overline{\nabla}_\theta\ell(z,\hat{\theta}_{M_m+{1}^t}) - {\nabla}_\theta\ell(z,\hat{\theta}_{M_m+{1}^t}) \| = \frac{B_mm^2 }{(M+m)^2}. \label{eq:lem_m_p_eq7}
\ealignt
This implies:
\balignt
 &\nabla_\theta R(\hat{\theta}_{M_m+{1}^t})^\top {H_{\hat{\theta}_{M_m+{1}^t}} ^{-1}}  \nabla_\theta \ell(z_{M+2}^r,\hat{\theta}_{M_m+{1}^t}) \\
 &= \left(\nabla_\theta R(\hat{\theta}_{M_m+{1}^t})+ \overline{\nabla}_\theta R(\hat{\theta}_{M_m+{1}^t})  -\overline{\nabla}_\theta R(\hat{\theta}_{M_m+{1}^t}) \right)^\top {H_{\hat{\theta}_{M_m+{1}^t}} ^{-1}}  \nabla_\theta \ell(z_{M+2}^r,\hat{\theta}_{M_m+{1}^t}) \\
 &= \overline{\nabla}_\theta R(\hat{\theta}_{M_m+{1}^t})^\top {H_{\hat{\theta}_{M_m+{1}^t}} ^{-1}}  \nabla_\theta \ell(z_{M+2}^r,\hat{\theta}_{M_m+{1}^t}) + \frac{C m^2}{(M+m)^2} ~~ \mbox{for} ~t\in\{r,g \}
\ealignt
for some constant $C$. The last equaion comes from the equaltion~\eqref{eq:lem_m_p_eq6} and our assumptions on bounded gradient as well as bounded below singular values of the Hessian matrix. Also we would have:
\small
\balignt
\begin{split}
 \overline{\nabla}_\theta  & R(\hat{\theta}_{M_m+{1}^t})^\top {H_{\hat{\theta}_{M_m+{1}^t}} ^{-1}}  \nabla_\theta \ell(z_{M+2}^r,\hat{\theta}_{M_m+{1}^t}) = {\nabla}_\theta R(\hat{\theta}_{M})^\top {H_{\hat{\theta}_{M_m+{1}^t}} ^{-1}}   \nabla_\theta \ell(z_{M+2}^r,\hat{\theta}_{M_m+{1}^t}) \\ 
 & \qquad \qquad \qquad \qquad   \qquad- \frac{m}{M+m}  \nabla_\theta \ell(z_{M+1}^t, \hat{\theta}_M)^\top H_{\hat{\theta}_M}^{-1} \nabla_{\theta}^2 R(\theta_M){H_{\hat{\theta}_{M_m+{1}^t}} ^{-1}}  \nabla_\theta \ell(z_{M+2}^r,\hat{\theta}_{M_m+{1}^t}) \\
 &= {\nabla}_\theta R(\hat{\theta}_{M})^\top {H_{\hat{\theta}_{M_m+{1}^t}} ^{-1}}   \left[\nabla_\theta \ell(z_{M+2}^r,\hat{\theta}_{M_m+{1}^t}) - \overline{\nabla}_\theta \ell(z_{M+2}^r,\hat{\theta}_{M_m+{1}^t}) + \overline{\nabla}_\theta \ell(z_{M+2}^r,\hat{\theta}_{M_m+{1}^t}) \right] \\
  & \qquad  \qquad  \qquad \qquad \qquad - \frac{m}{M+m}  \nabla_\theta \ell(z_{M+1}^t, \hat{\theta}_M)^\top H_{\hat{\theta}_M}^{-1} \nabla_{\theta}^2 R(\theta_M){H_{\hat{\theta}_{M_m+{1}^t}} ^{-1}}  \nabla_\theta \ell(z_{M+2}^r,\hat{\theta}_{M_m+{1}^t}) \\
  &= {\nabla}_\theta R(\hat{\theta}_{M})^\top {H_{\hat{\theta}_{M_m+{1}^t}} ^{-1}}   \left[\nabla_\theta \ell(z_{M+2}^r,\hat{\theta}_{M_m+{1}^t}) - \overline{\nabla}_\theta \ell(z_{M+2}^r,\hat{\theta}_{M_m+{1}^t}) + \overline{\nabla}_\theta \ell(z_{M+2}^r,\hat{\theta}_{M_m+{1}^t}) \right] \\
  &  ~~  - \frac{m}{M+m}  \nabla_\theta \ell(z_{M+1}^t, \hat{\theta}_M)^\top H_{\hat{\theta}_M}^{-1} \nabla_{\theta}^2 R(\theta_M){H_{\hat{\theta}_{M_m+{1}^t}} ^{-1}}  \overline{\nabla}_\theta \ell(z_{M+2}^r,\hat{\theta}_{M_m+{1}^t}) \\
  & ~~  - \frac{m}{M+m}  \nabla_\theta \ell(z_{M+1}^t, \hat{\theta}_M)^\top H_{\hat{\theta}_M}^{-1} \nabla_{\theta}^2 R(\theta_M){H_{\hat{\theta}_{M_m+{1}^t}} ^{-1}} \left[{\nabla}_\theta \ell(z_{M+2}^r,\hat{\theta}_{M_m+{1}^t})  - \overline{\nabla}_\theta \ell(z_{M+2}^r,\hat{\theta}_{M_m+{1}^t}) \right] \\
  &= {\nabla}_\theta R(\hat{\theta}_{M})^\top {H_{\hat{\theta}_{M_m+{1}^t}} ^{-1}}   \overline{\nabla}_\theta \ell(z_{M+2}^r,\hat{\theta}_{M_m+{1}^t}) \\
  & ~~ + {\nabla}_\theta R(\hat{\theta}_{M})^\top {H_{\hat{\theta}_{M_m+{1}^t}} ^{-1}}   \left[\nabla_\theta \ell(z_{M+2}^r,\hat{\theta}_{M_m+{1}^t}) - \overline{\nabla}_\theta \ell(z_{M+2}^r,\hat{\theta}_{M_m+{1}^t}) \right] \\
  &  ~~   - \frac{m}{M+m}  \nabla\ell(z_{M+1}^t, \hat{\theta}_M)^\top H_{\hat{\theta}_M}^{-1} \nabla_{\theta}^2 R(\theta_M){H_{\hat{\theta}_{M_m+{1}^t}} ^{-1}}  \overline{\nabla}_\theta \ell(z_{M_m+2}^r,\hat{\theta}_{M_m+{1}^t}) \\
  &~~ - \frac{m}{M+m}  \nabla_\theta \ell(z_{M+1}^t, \hat{\theta}_M)^\top H_{\hat{\theta}_M}^{-1} \nabla_{\theta}^2 R(\theta_M){H_{\hat{\theta}_{M_m+{1}^t}} ^{-1}} \left[{\nabla}_\theta \ell(z_{M+2}^r,\hat{\theta}_{M_m+{1}^t})  - \overline{\nabla}_\theta \ell(z_{M+2}^r,\hat{\theta}_{M_m+{1}^t}) \right] \\
  &= {\nabla}_\theta R(\hat{\theta}_{M})^\top {H_{\hat{\theta}_{M_m+{1}^t}} ^{-1}} \left[ \nabla_\theta\ell(z_{M+2}^r,\hat{\theta}_{M}) - \frac{m}{M+m}  \nabla_{\theta}^2\ell(z_{M+2}^r,\theta_M)^\top H_{\hat{\theta}_M}^{-1} \nabla_\theta \ell(z_{M+1}^t, \hat{\theta}_M) \right]   \\
  & ~~+ {\nabla}_\theta R(\hat{\theta}_{M})^\top {H_{\hat{\theta}_{M_m+{1}^t}} ^{-1}}   \left[\nabla_\theta \ell(z_{M+2}^r,\hat{\theta}_{M_m+{1}^t}) - \overline{\nabla}_\theta \ell(z_{M+2}^r,\hat{\theta}_{M_m+{1}^t}) \right] \\
  &   ~~ - \frac{m}{M+m}  \nabla_\theta \ell(z_{M+1}^t, \hat{\theta}_M)^\top H_{\hat{\theta}_M}^{-1} \nabla_{\theta}^2 R(\theta_M){H_{\hat{\theta}_{M_m+{1}^t}} ^{-1}}   \left[ \nabla_\theta\ell(z_{M+2}^r,\hat{\theta}_{M}) - \frac{m}{M+m}  \nabla_{\theta}^2\ell(z_{M+2}^r,\theta_M)^\top H_{\hat{\theta}_M}^{-1} \nabla_\theta \ell(z_{M+1}^t, \hat{\theta}_M) \right]  \\
  & ~~ - \frac{m}{M+m}  \nabla_\theta \ell(z_{M+1}^t, \hat{\theta}_M)^\top H_{\hat{\theta}_M}^{-1} \nabla_{\theta}^2 R(\theta_M){H_{\hat{\theta}_{M_m+{1}^t}} ^{-1}} \left[{\nabla}_\theta \ell(z_{M+2}^r,\hat{\theta}_{M_m+{1}^t})  - \overline{\nabla}_\theta \ell(z_{M+2}^r,\hat{\theta}_{M_m+{1}^t}) \right] \\
 &= {\nabla}_\theta R(\hat{\theta}_{M})^\top {H_{\hat{\theta}_{M_m+{1}^t}} ^{-1}} \nabla_\theta\ell(z_{M+2}^r,\hat{\theta}_{M}) - \frac{m}{M+m}  \nabla_\theta \ell(z_{M+1}^t, \hat{\theta}_M)^\top H_{\hat{\theta}_M}^{-1} \nabla_{\theta}^2 R(\theta_M){H_{\hat{\theta}_{M_m+{1}^t}} ^{-1}}  \nabla_\theta\ell(z_{M+2}^r,\hat{\theta}_{M}) \\
&~~ - \frac{m}{M+m}  {\nabla}_\theta R(\hat{\theta}_{M})^\top {H_{\hat{\theta}_{M_m+{1}^t}} ^{-1}} \nabla_{\theta}^2\ell(z_{M+2}^r,\theta_M)^\top H_{\hat{\theta}_M}^{-1} \nabla_\theta \ell(z_{M+1}^t, \hat{\theta}_M)   \\
  &   ~~ + \frac{m^2}{(M+m)^2}  \nabla_\theta \ell(z_{M+1}^t, \hat{\theta}_M)^\top H_{\hat{\theta}_M}^{-1} \nabla_{\theta}^2 R(\theta_M){H_{\hat{\theta}_{M_m+{1}^t}} ^{-1}}     \nabla_{\theta}^2\ell(z_{M+2}^r,\theta_M)^\top H_{\hat{\theta}_M}^{-1} \nabla_\theta \ell(z_{M+1}^t, \hat{\theta}_M)   \\
    &~~ + {\nabla}_\theta R(\hat{\theta}_{M})^\top {H_{\hat{\theta}_{M_m+{1}^t}} ^{-1}}   \left[\nabla_\theta \ell(z_{M+2}^r,\hat{\theta}_{M_m+{1}^t}) - \overline{\nabla}_\theta \ell(z_{M+2}^r,\hat{\theta}_{M_m+{1}^t}) \right] \\
  &~~ - \frac{m}{M+m}  \nabla_\theta \ell(z_{M+1}^t, \hat{\theta}_M)^\top H_{\hat{\theta}_M}^{-1} \nabla_{\theta}^2 R(\theta_M){H_{\hat{\theta}_{M_m+{1}^t}} ^{-1}} \left[{\nabla}_\theta \ell(z_{M+2}^r,\hat{\theta}_{M_m+{1}^t})  - \overline{\nabla}_\theta \ell(z_{M+2}^r,\hat{\theta}_{M_m+{1}^t}) \right] \label{eq:lem_m_p_eq8}
  \end{split} 
\ealignt
\normalsize
In equation~\ref{eq:lem_m_p_eq8}, we observe that the term $\| {\nabla}_\theta \ell(z_{M+2}^r,\hat{\theta}_{M_m+{1}^t})  - \overline{\nabla}_\theta \ell(z_{M+2}^r,\hat{\theta}_{M_m+{1}^t}) \| \in \mathcal{O} \left( \frac{m^2}{(M+2m)^2} \right)$. Hence from equation~\eqref{eq:lem_m_p_eq8} and the observation, we have the following:
\small
\balignt
&\overline{\nabla}_\theta  R(\hat{\theta}_{M_m+{1}^t})^\top {H_{\hat{\theta}_{M_m+{1}^t}} ^{-1}}  \nabla_\theta \ell(z_{M+2}^r,\hat{\theta}_{M_m+{1}^t})  = {\nabla}_\theta R(\hat{\theta}_{M})^\top {H_{\hat{\theta}_{M_m+{1}^t}} ^{-1}} \nabla_\theta\ell(z_{M+2}^r,\hat{\theta}_{M})\notag \\
& - \frac{m}{M+m}  \nabla_\theta \ell(z_{M+1}^t, \hat{\theta}_M)^\top H_{\hat{\theta}_M}^{-1} \nabla_{\theta}^2 R(\theta_M){H_{\hat{\theta}_{M_m+{1}^t}} ^{-1}}  \nabla_\theta\ell(z_{M+2}^r,\hat{\theta}_{M}) + \mathcal{O}\left( \frac{m^2}{(M+m)^2} \right) + \mathcal{O}\left( \frac{m^3}{(M+m)(M+2m)^2} \right)\notag \\
& - \frac{m}{M+m}  {\nabla}_\theta R(\hat{\theta}_{M})^\top {H_{\hat{\theta}_{M_m+{1}^t}} ^{-1}} \nabla_{\theta}^2\ell(z_{M+2}^r,\theta_M)^\top H_{\hat{\theta}_M}^{-1} \nabla_\theta \ell(z_{M+1}^t, \hat{\theta}_M)     +  \mathcal{O}\left( \frac{m^2}{(M+2m)^2} \right) \label{eq:lem_m_p_eq9}
\ealignt
\normalsize
Now, from equations~\eqref{eq:lem_m_p_eq4} and \eqref{eq:lem_m_p_eq9}, we have the follwoing:
\small
\balignt
&R(\hat{\theta}_{M_m+{2}^g}) - \bbE[ R(\hat{\theta}_{M_m+{2}^r})] \leq \Delta_{\theta_{M_m+1^g}}' + \Delta_{\theta_{M}}' + \frac{m}{M+2m} \left[ \nabla_\theta R(\hat{\theta}_{M_m+{1}^g})^\top {H_{\hat{\theta}_{M_m+{1}^g}} ^{-1}} \bbE \nabla_\theta \ell(z_{M+2}^r,\hat{\theta}_{M_m+1^g})   \right] \notag \\
& \qquad  \qquad  \qquad  \qquad  \qquad  \qquad  \qquad - \frac{m}{M+2m}\bbE \left[ \nabla_\theta R(\hat{\theta}_{M_m+{1}^r})^\top {H_{\hat{\theta}_{M_m+{1}^r}} ^{-1}}   \bbE \nabla_\theta \ell(z_{M+2}^r,\hat{\theta}_{M_m+1^r})   \right] + \frac{2B_l m^2}{(M+2m)^2} \notag \\
&= \Delta_{\theta_{M_m+1^g}}' + \Delta_{\theta_{M}}' + \frac{m}{M+2m} \bbE\left[  {\nabla}_\theta R(\hat{\theta}_{M})^\top \left({H_{\hat{\theta}_{M_m+{1}^g}} ^{-1}} - {H_{\hat{\theta}_{M_m+{1}^r}} ^{-1}} \right) \nabla_\theta\ell(z_{M+2}^r,\hat{\theta}_{M}) \right] \notag \\
& - \frac{m^2}{(M+m)(M+2m)} \Big[ [\nabla_\theta  \ell(z_{M+1}^g, \hat{\theta}_M) - \nabla_\theta  \ell(z_{M+1}^r, \hat{\theta}_M) ]^\top H_{\hat{\theta}_M}^{-1} \nabla_{\theta}^2 R(\theta_M){H_{\hat{\theta}_{M_m+{1}^g}} ^{-1}}  \nabla_\theta \ell(z_{M+2}^r,\hat{\theta}_{M}) \Big] \notag \\
& + \frac{m^2}{(M+m)(M+2m)} \Big[ \nabla_\theta  \ell(z_{M+1}^r, \hat{\theta}_M) ^\top H_{\hat{\theta}_M}^{-1} \nabla_{\theta}^2 R(\theta_M)({H_{\hat{\theta}_{M_m+{1}^g}} ^{-1}} - {H_{\hat{\theta}_{M_m+{1}^r}} ^{-1}} )  \nabla_\theta \ell(z_{M+2}^r,\hat{\theta}_{M}) \Big] \notag \\
& -  \frac{m^2}{(M+m)(M+2m)}  {\nabla}_\theta R(\hat{\theta}_{M})^\top ( {H_{\hat{\theta}_{M_m+{1}^g}} ^{-1}} - {H_{\hat{\theta}_{M_m+{1}^g}} ^{-1}} )\nabla_{\theta}^2\ell(z_{M+2}^r,\theta_M)^\top H_{\hat{\theta}_M}^{-1} \nabla_\theta \ell(z_{M+1}^g, \hat{\theta}_M)  \notag \\
& +  \frac{m^2}{(M+m)(M+2m)}  {\nabla}_\theta R(\hat{\theta}_{M})^\top  {H_{\hat{\theta}_{M_m+{1}^g}} ^{-1}} \nabla_{\theta}^2\ell(z_{M+2}^r,\theta_M)^\top H_{\hat{\theta}_M}^{-1}(\nabla_\theta  \ell(z_{M+1}^g, \hat{\theta}_M) -  \nabla_\theta  \ell(z_{M+1}^r, \hat{\theta}_M))+  \mathcal{O}\left( \frac{m^3}{(M+m)^3} \right) \label{eq:lem_m_p_eq10}
 \ealignt
 \normalsize
We do have:
\small
\balignt
{H_{\hat{\theta}_{M_m+{1}^g}} ^{-1}} - {H_{\hat{\theta}_{M_m+{1}^r}} ^{-1}}  = {H_{\hat{\theta}_{M_m+{1}^g}} ^{-1}}  (H_{\hat{\theta}_{M_m+{1}^r}} - H_{\hat{\theta}_{M_m+{1}^g}}) {H_{\hat{\theta}_{M_m+{1}^r}} ^{-1}}
\ealignt
\normalsize
If we assume that the Hessian inverse operator norm is bounded by $H$ and we have $\hat{\theta}_{M_m+{1}^g} - \hat{\theta}_{M_m+{1}^r} = \frac{m}{M+m} H_{\hat{\theta}_M}^{-1} \big(\nabla_{\theta} \ell (z^{r}_{M+1},\theta_M) - \nabla_{\theta} \ell (z^{g}_{M+1},\theta_M) \big)$. Now from the Lipschitz Hessian assumption:
\small
\balignt
\|{H_{\hat{\theta}_{M_m+{1}^g}} ^{-1}} - {H_{\hat{\theta}_{M_m+{1}^r}} ^{-1}}\| = {H_{\hat{\theta}_{M_m+{1}^g}} ^{-1}} ({H_{\hat{\theta}_{M_m+{1}^r}} }  - {H_{\hat{\theta}_{M_m+{1}^g}} } ) {H_{\hat{\theta}_{M_m+{1}^r}} ^{-1}}  \leq L H^2 \|\hat{\theta}_{M_m+{1}^g} - \hat{\theta}_{M_m+{1}^r}   \| \leq \frac{Gm}{M+m} \label{eq:lem_m_p_eq11}
\ealignt
\normalsize
for some constant $G$.
Now from the equation~\eqref{eq:lem_m_p_eq10} and \eqref{eq:lem_m_p_eq11}, we have the following:
\small
\balignt
R(\hat{\theta}_{M_m+{2}^g}) - \bbE[ R(\hat{\theta}_{M_m+{2}^r})] \leq \Delta_{\theta_{M_m+1^g}}' + \Delta_{\theta_{M}}'  + \frac{Gm^2}{(M+m)(M+2m)} + \mathcal{O}\left(\frac{m^3}{(M+m)(M+2m)^2}\right)
\ealignt
\normalsize
Last equation comes from equation~\eqref{eq:lem_m_p_eq11}  and from the fact that $\| \nabla_\theta  \ell(z_{M+1}^g, \hat{\theta}_M) - \nabla_\theta  \ell(z_{M+1}^r, \hat{\theta}_M) \| \in \mathcal{O}\left( \frac{m}{(M+2m)} \right)$. Hence for some constant $G'$:
\balignt
R(\hat{\theta}_{M_m+{2}^g}) - \bbE[ R(\hat{\theta}_{M_m+{2}^r})] \leq \Delta_{\theta_{M_m+1^g}}' + \Delta_{\theta_{M}}'  + \frac{G'm^2}{(M+m)(M+2m)}.
\ealignt
From lemma~\ref{lem:exact_vs_approx}, we know that $|R(\hat{\theta}_{M_m+{2}^g}) - R(\tilde{\theta}_{M_m+{2}^g}) | \in \mathcal{O}\left(  \frac{m^2}{(M+2m)^2}\right)$. Similarly $|R(\hat{\theta}_{M_m+{1}^g}) - R(\tilde{\theta}_{M_m+{1}^g}) | \in \mathcal{O}\left(  \frac{m^2}{(M+m)^2}\right)$.
Hence, we get our final result:
\balignt
R(\tilde{\theta}_{M_m+{2}^g}) - \bbE[ R(\hat{\theta}_{M_m+{2}^r})]\leq \Delta_{\theta_{M_m+1^g}}' + \Delta_{\theta_{M}}'  + \frac{G'm^2}{(M+m)(M+2m)} \leq  \Delta_{\theta_{M+1^g}}' + \Delta_{\theta_{M}}'  + \frac{G'm^2}{(M+m)^2}
\ealignt

\end{proof}

Next, we state the main result of the paper Theorem~\ref{thm:optimality_1_point_addition} about the average gain of greedy subset selection algorithm over random selection over the course of finite number of iterations.

\begin{theorem}[ Theorem~\ref{thm:optimality_m_point_addition}]
 If we run our greedy algorithm for $p$ successive steps and all the assumptions discussed in the section~\ref{subsec:assumptions} are satisfied then the gain in $ R $ after $p$ successive greedy steps  over the same number of random selection starting from the classifier $\theta_M$ can be characterized as follows:
 \balignt
 R(\tilde{\theta}_{M_m+{p}^g}) -  \bbE[ R(\hat{\theta}_{M_m+{p}^r})] \leq \sum_{i = 0}^{p-1}\Delta_{\theta_{M_m+i^g}}'  + \sum_{i = 1}^{p-1}\frac{\tilde{G}m^2}{(M+i*m)^2},
 \ealignt
 for some positive constant $\tilde{G}$ where $\tilde{\theta}_{M+{p}^g}$ is the output from the approximate algorithm~\ref{algo:approximate_greedy_m_1_rand} for $\epsilon = 0$.
 \end{theorem}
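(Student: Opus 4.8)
The plan is to prove Theorem~\ref{thm:optimality_m_point_addition} by induction on the number of rounds $p$, mirroring the argument given for the $m=1$ case in the proof of Theorem~\ref{thm:optimality_1_point_addition}, and using Lemma~\ref{lem:m_1_2_cons_compare} as the basic two–round building block together with Lemma~\ref{lem:exact_vs_approx} to pass from the exact greedy iterate to the output of Algorithm~\ref{algo:approximate_greedy_m_1_rand} with $\epsilon=0$.

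First I would set up the telescoping decomposition of $R(\hat{\theta}_{M_m+p^g}) - \bbE[R(\hat{\theta}_{M_m+p^r})]$. Inserting the mixed trajectory $\hat{\theta}_{M_m+(p-1)^g+1^r}$ splits this into $\Delta'_{\theta_{M_m+(p-1)^g}}$ plus $\bbE[R(\hat{\theta}_{M_m+(p-1)^g+1^r})] - \bbE[R(\hat{\theta}_{M_m+p^r})]$; then inserting $R(\hat{\theta}_{M_m+(p-1)^g})$ and $\bbE[R(\hat{\theta}_{M_m+(p-1)^r})]$ isolates the recursion term $R(\hat{\theta}_{M_m+(p-1)^g}) - \bbE[R(\hat{\theta}_{M_m+(p-1)^r})]$ (the same quantity with $p$ decreased by one) plus the cross term $\bbE[R(\hat{\theta}_{M_m+(p-1)^g+1^r})] - R(\hat{\theta}_{M_m+(p-1)^g}) - \bigl(\bbE[R(\hat{\theta}_{M_m+p^r})] - \bbE[R(\hat{\theta}_{M_m+(p-1)^r})]\bigr)$. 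This is exactly the shape of equation~\eqref{eq:thm_1_p_eq1}, with $1/(M+k)$ replaced throughout by $m/(M+km)$.

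Next I would bound the cross term; this is the content already carried out inside the proof of Lemma~\ref{lem:m_1_2_cons_compare}. One expands each of the two $R$-differences to first order using Lemmas~\ref{lem:1_point_addition_foa} and \ref{lem:m_point_addition_foa} and the chain rule, reducing them to inner products of the form $\tfrac{m}{M+pm}\nabla_\theta R^\top H^{-1}_{\hat{\theta}}\,\bbE\nabla_\theta\ell$; then one compares the greedy and random versions of this inner product by Taylor-expanding the gradients themselves around $\hat{\theta}_M$, controlling $\|H^{-1}_{\hat{\theta}_{M_m+(p-1)^g}} - H^{-1}_{\hat{\theta}_{M_m+(p-1)^r}}\|$ via the resolvent identity and the Lipschitz–Hessian Assumption~\ref{as:local_smooth} (which gives a bound of order $m/(M+(p-1)m)$), and using Assumptions~\ref{as:non-degenerate} and \ref{as:bounded} to bound the remaining factors. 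The net effect is that the cross term is at most $G' m^2/\bigl((M+(p-1)m)(M+pm)\bigr) \le \tilde G m^2/(M+(p-1)m)^2$. Unrolling the recursion $p$ times, with the base cases $p=1$ (definition of $\Delta'$) and $p=2$ (Lemma~\ref{lem:m_1_2_cons_compare}), yields $R(\hat{\theta}_{M_m+p^g}) - \bbE[R(\hat{\theta}_{M_m+p^r})] \le \sum_{i=0}^{p-1}\Delta'_{\theta_{M_m+i^g}} + \sum_{i=1}^{p-1}\tilde G m^2/(M+im)^2$. Finally, applying Lemma~\ref{lem:exact_vs_approx} replaces $R(\hat{\theta}_{M_m+p^g})$ by $R(\tilde{\theta}_{M_m+p^g})$ at an additional cost of order $m^2/(M+pm)^2$, which is absorbed into the constant $\tilde G$, giving the claimed bound.

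The main obstacle is the cross-term estimate: one must track three sources of error simultaneously — the $\mathcal{O}(m^2/(M+im)^2)$ remainder of the first-order model update, the analogous Taylor remainders for $\nabla_\theta R$ and $\nabla_\theta \ell$, and the $\mathcal{O}(m/(M+im))$ perturbation of the inverse Hessian — and verify that after expanding the difference of the two inner products, every surviving term is either quadratic in the "greedy minus random" gradient difference (hence of the right order by bounded gradients) or multiplied by one of these remainders, so that nothing of order larger than $m^2/(M+im)^2$ leaks through. Everything else is a routine re-derivation of the $m=1$ computation with the substitution $1/(M+k)\mapsto m/(M+km)$ and $\epsilon_M\mapsto \epsilon_{M_m}$.
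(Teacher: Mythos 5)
Your proposal follows essentially the same route as the paper's own proof: the identical telescoping decomposition isolating $\Delta'_{\theta_{M_m+(p-1)^g}}$, the cross term, and the recursion term; the cross-term bound of order $m^2/\bigl((M+(p-1)m)(M+pm)\bigr)$ imported from the proof of Lemma~\ref{lem:m_1_2_cons_compare}; and the final application of Lemma~\ref{lem:exact_vs_approx} to pass from $\hat{\theta}_{M_m+p^g}$ to $\tilde{\theta}_{M_m+p^g}$. Your framing as an explicit induction with base cases $p=1,2$ is only a cosmetic repackaging of the paper's ``unrolling,'' so the argument is correct and matches the original.
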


 \begin{proof}
Let us assume that $p$ is a multiple of $2$. The proof for more number of iterations can also be seen as just unrolling proof being done in the lemma~\ref{lem:m_1_2_cons_compare}.
\small
\balignt
\begin{split}\label{eq:thm_m_p_eq1}
R(\hat{\theta}_{M_m+{p}^g}) - \bbE[ R(\hat{\theta}_{M_m+{p}^r})] &=  \underbrace{R(\hat{\theta}_{M_m+{p}^g}) -  \bbE[R(\hat{\theta}_{M_m+{(p-1)}^g+1^r})]}_{:=\Delta_{\theta_{M_m+(p-1)^g}}'} +  \bbE[R(\hat{\theta}_{M_m+{(p-1)}^g+1^r})] -\bbE[ R(\hat{\theta}_{M_m+{p}^r})]  \\
&= \Delta_{\theta_{M_m+(p-1)^g}}' +   \bbE[R(\hat{\theta}_{M_m+{(p-1)}^g+1^r})] -\bbE[ R(\hat{\theta}_{M_m+{p}^r})]   \\
&= \Delta_{\theta_{M_m+(p-1)^g}}' +  \bbE[R(\hat{\theta}_{M_m+{(p-1)}^g+1^r})] -  R(\hat{\theta}_{M_m+{(p-1)}^g}) + R(\hat{\theta}_{M_m+{(p-1)}^g})  \\
& \qquad - \bbE[ R(\hat{\theta}_{M_m+{(p-1)}^r})] + \bbE[ R(\hat{\theta}_{M_m+{(p-1)}^r})]  -\bbE[ R(\hat{\theta}_{M_m+{p}^r})] \\
&= \Delta_{\theta_{M_m+(p-1)^g}}' +  \bbE[R(\hat{\theta}_{M_m+{(p-1)}^g+1^r})] -  R(\hat{\theta}_{M_m+{(p-1)}^g}) \\
&  - \left(\bbE[ R(\hat{\theta}_{M_m+{p}^r})] - \bbE[ R(\hat{\theta}_{M_m+{(p-1)}^r})]\right)+ \underbrace{R(\hat{\theta}_{M_m+{(p-1)}^g}) - \bbE[ R(\hat{\theta}_{M_m+{(p-1)}^r})]}_{:=\text{recursion term}}
\end{split}
\ealignt
\small

The bound on the term $\bbE[R(\hat{\theta}_{M_m+{(p-1)}^g+1^r})] -  R(\hat{\theta}_{M_m+{(p-1)}^g}) - \left(\bbE[ R(\hat{\theta}_{M_m+{p}^r})] - \bbE[ R(\hat{\theta}_{M_m+{(p-1)}^r})]\right)$ has been obtained in the proof of lemma~\ref{lem:m_1_2_cons_compare} which says:
\balignt
\bbE[R(\hat{\theta}_{M_m+{(p-1)}^g+1^r})] -  R(\hat{\theta}_{M_m+{(p-1)}^g}) - \left(\bbE[ R(\hat{\theta}_{M_m+{p}^r})] - \bbE[ R(\hat{\theta}_{M_m+{(p-1)}^r})]\right) \leq \frac{G'm^2}{(M+pm)(M+(p+1)m)}
\ealignt
for some real positive constant $G'$. Similarly, we can get the bound on the recursion term which essentially gives  the following result:
\balignt
R(\hat{\theta}_{M_m+{p}^g}) -  \bbE[ R(\hat{\theta}_{M_m+{p}^r})] \leq \sum_{i = 0}^{p-1}\Delta_{\theta_{M_m+i^g}}'  + \sum_{i = 1}^{p-1}\frac{{G}'m^2}{(M+i*m)^2}.
\ealignt
We finally apply lemma~\ref{lem:exact_vs_approx} to get our final result which is 
\balignt
R(\tilde{\theta}_{M_m+{p}^g}) -  \bbE[ R(\hat{\theta}_{M_m+{p}^r})] \leq \sum_{i = 0}^{p-1}\Delta_{\theta_{M_m+i^g}}'  + \sum_{i = 1}^{p-1}\frac{{G}m^2}{(M+i*m)^2}.
\ealignt
for some positive real constant $\tilde{G}$.

\end{proof}

\begin{proof}[Proof for Lemma~\ref{lem:compare_optimal}]
Given that $R(\theta_{M}) \leq \nu R^\star_M + \delta $.
\balignt
R(\theta_{M+1^g}) -R^\star_{M+1} &= R(\theta_{M+1^g})  - R(\theta_{M+1^w}) + R(\theta_{M+1^g}) - R^\star_{M+1} \\
&= R(\theta_{M+1^g})  - R(\theta_{M+1^w}) +  R(\theta_{M+1^w}) - R_M^\star + R_M^\star - R^\star_{M+1} \\
&= \Delta_{\theta_M}'' + R(\theta_{M+1^w}) -R(\theta_M) + R(\theta_M) - R_M^\star  + R_M^\star - R^\star_{M+1} \\ 
&\leq  \Delta_{\theta_M}'' + R(\theta_{M+1^w}) -R(\theta_M) + R(\theta_M) - R_M^\star  + R_{\hat{Z}_{M+1\backslash z^w}} - R^\star_{M+1} \\  
&\leq  \Delta_{\theta_M}'' + \nu R^\star_M + \delta +  R(\theta_{M+1^w}) -R(\theta_M) + R_{\hat{Z}_{M+1\backslash z^w}} - R^\star_{M+1} \\ 
&\leq  \Delta_{\theta_M}'' + \nu R^\star_M + \delta +  R(\theta_{M+1^w}) -R(\theta_M) - (R^\star_{M+1}  - R_{\hat{Z}_{M+1\backslash z^w}} ) \\ 
\ealignt
One can apply the first order approximation on the last two term to get the result.
\end{proof}

\subsection{Towards Model Selection from Multiple Models} \label{subsec:multiple_model}
As mentioned in the Section~\ref{sec:greedy_selection}, we can choose any objective $R(\hat \theta)$ to optimize. Consider a scenario where one has many models, which may be differentiable as well as non-differentiable. We would like to choose the model with maximal performance, however running all models on all the data is infeasible. Our method can be used to efficiently select points from the set based on some of the models which will generalize on other models  as discussed in section~\ref{subsec:model_transfer}.   Instead of risk, one can use the smooth version of median loss or ranking loss between the classifier's performance to select a point which preserves the ranking of the classifier or will select the point which will have the minimum median of the risk. The only criteria the new objective function which acts on the classifier's performance should fulfil is that it should be a differentiable objective with bounded Hessian and bounded gradient such that the first order approximation holds. However, the theoretical result in theorem~\ref{thm:optimality_1_point_addition} will still works with just the different characterization of the gain between greedy and random.

 \clearpage

\end{document}